\theoremstyle{plain}
\newtheorem{theorem}{Theorem}[section]
\newtheorem{lemma}[theorem]{Lemma}
\theoremstyle{definition}
\newtheorem{assumption}[theorem]{Assumption}
\theoremstyle{remark}
\newtheorem{remark}[theorem]{Remark}
\newcommand{\kibitz}[2]{\ifnum\Comments=0\textcolor{#1}{#2}\fi}
\newcommand{\name}{\texttt{COMMIT}\xspace}
\definecolor{darkgreen}{rgb}{0,0.5,0}
\newcommand{\iou}{\mathrm{IoU}}
\newcommand{\vol}{\mathrm{Vol}}
\newcommand{\Zsub}{\gZ_{\mathrm{sub}}}
\definecolor{revblue}{RGB}{10,70,244}
\Crefname{algocf}{Algorithm}{Algorithms}
\title{\name: Certifying Robustness of Multi-Sensor Fusion Systems against Semantic Attacks}
\author{%
  Zijian Huang \\
  Department of Computer Science \\
  University of Illinois Urbana-Champaign \\
  Champaign, IL 61820 \\
  \texttt{zijianh4@illinois.edu} \\
  \And
  Wenda Chu\\
  Institute for Interdisciplinary Information Sciences \\
  Tsinghua University \\
  Beijing, China 100084 \\
  \texttt{chuwd19@mails.tsinghua.edu.cn} \\
  \And
  Linyi Li \\
  Department of Computer Science \\
  University of Illinois Urbana-Champaign \\
  Champaign, IL 61820 \\
  \texttt{linyi2@illinois.edu} \\
  \And
  Chejian Xu \\
  Department of Computer Science \\
  University of Illinois Urbana-Champaign \\
  Champaign, IL 61820 \\
  \texttt{chejian2@illinois.edu} \\
  \And
  Bo Li \\
  Department of Computer Science \\
  University of Illinois Urbana-Champaign \\
  Champaign, IL 61820 \\
  \texttt{lbo@illinois.edu} \\
}
\begin{document}

\maketitle

\begin{abstract}
    Multi-sensor fusion systems (MSFs) play a vital role as the perception module in modern autonomous vehicles (AVs). Therefore, ensuring their robustness against common and realistic adversarial semantic transformations, such as rotation and shifting in the physical world, is crucial for the safety of AVs. 
    While empirical evidence suggests that MSFs exhibit improved robustness compared to single-modal models, they are still vulnerable to adversarial semantic transformations. Despite the proposal of empirical defenses, several works show that these defenses can be attacked again by new adaptive attacks. So far, there is no certified defense proposed for MSFs.
    In this work, we propose the first robustness certification framework \name to \underline{c}ertify r\underline{o}bustness of \underline{m}ulti-sensor fusion systems against se\underline{m}ant\underline{i}c a\underline{t}tacks.
    In particular, we propose a practical anisotropic noise mechanism that leverages randomized smoothing with multi-modal data and performs a grid-based splitting method to characterize complex semantic transformations. We also propose efficient algorithms to compute the certification in terms of object detection accuracy and IoU for large-scale MSF models.
    Empirically, we evaluate the efficacy of \name in different settings and provide a comprehensive benchmark of certified robustness for different MSF models using the CARLA simulation platform.
    We show that the certification for MSF models is at most 48.39\% higher than that of single-modal models, which validates the advantages of MSF models.
    We believe our certification framework and benchmark will contribute an important step towards certifiably robust AVs in practice.
\end{abstract}

\section{Introduction}
\label{introduction}





Autonomous driving~(AD) has achieved significant advances in recent years~\cite{redmon2016you,law2018cornernet,badrinarayanan2017segnet,zhao2018icnet,zhou2018voxelnet,luo2018fast,qi2018frustum,chen2017multi}, and deep neural networks~(DNNs) have been largely deployed as the perception module for AD to process inputs from multiple sources (e.g., camera and LiDAR) to detect objects such as road signs, vehicles, and pedestrians.
To make full use of multi-modal inputs, modern AD systems usually adopt the multi-sensor fusion systems (MSFs) as the perception module~\cite{pang2020clocs,chen2022focal}.

Along with the wide deployment of AD systems, the safety of AD systems in the physical world has raised serious concerns~\cite{hendrycks2021unsolved,cao2021invisible}.
A rich body of research has shown that both adversarial perturbations and natural semantic transformations can mislead the DNN-based perception modules in AD systems with a high success rate~\cite{pei2017deepxplore,hosseini2018semantic,xiao2018spatially,guo2019safe,hendrycks2018benchmarking,engstrom2019exploring},
so that the AD system may ignore the pedestrians, the traffic signs, or other vehicles with high confidence when the object is slightly rotated or shifted, which can lead to severe consequences such as fatal traffic accidents~\cite{mccausland2019}.
Moreover, even though the multi-sensor fusion systems may be intuitively  more robust, assuming that the input transformations/perturbations are not adversarial to multiple input modalities at the same time, existing work~\cite{cao2021invisible,hallyburton2022security} has falsified such intuition by proposing feasible and highly efficient attacks against multi-sensor fusion systems~\cite{cao2021invisible}.
In other words, serious robustness issues still exist in existing MSFs, resulting in practical safety vulnerabilities in AD.
\begin{wrapfigure}{r}{0.5\linewidth}
    \centering
    \vspace{-2mm}
    \includegraphics[width=\linewidth]{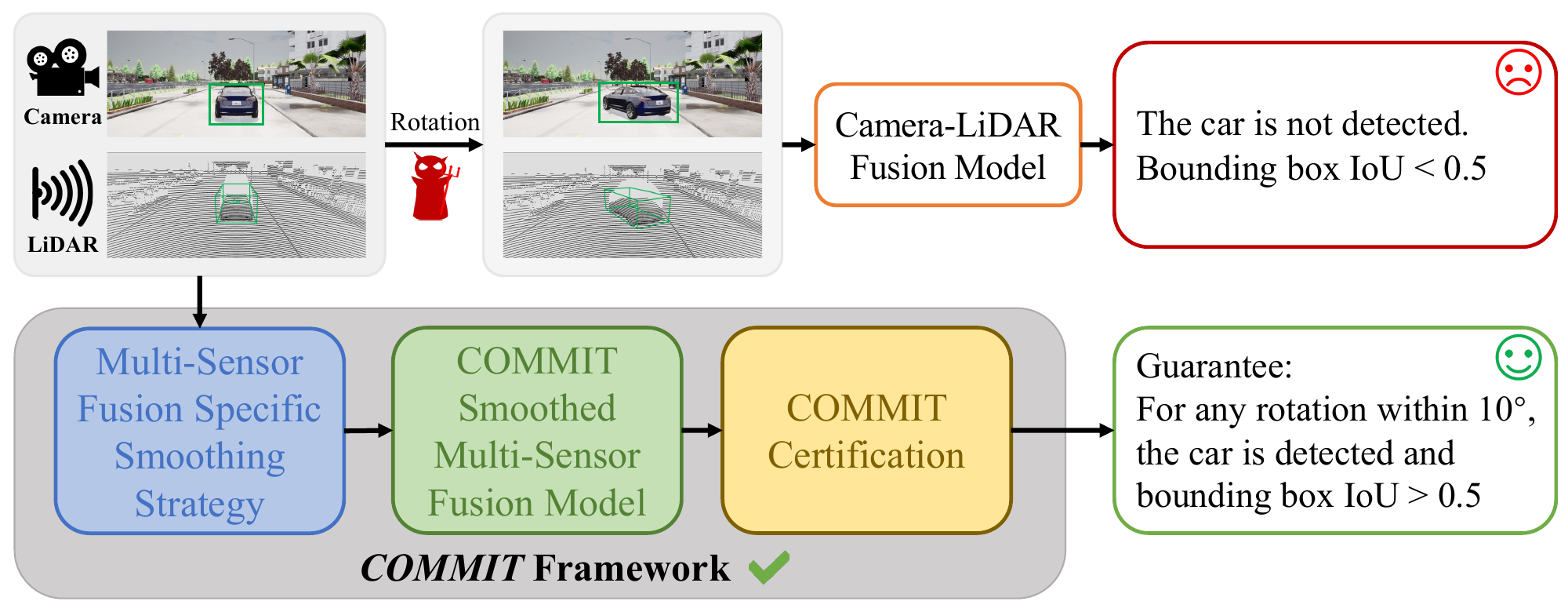}
    \vspace{-5mm}
    \caption{\small Overview of \name, the first framework that provides certified robustness for multi-sensor fusion systems against semantic transformations.}
    \label{fig:framework}
\vspace{-2mm}
\end{wrapfigure}

\vspace{-3mm}
To mitigate such safety threats, several empirical defenses have been proposed for both single-modal models~\cite{madry2018towards} and multi-sensor fusion systems~\cite{zhong2022detecting}.
However, certified defenses exist only for single-modal models~\cite{wong2018provable,cohen2019certified,li2021tss,chu2022tpc}. Recent works have shown that the empirical defenses for MSFs can be adaptively attacked again by stealthy perturbations or transformations~\cite{zhong2022detecting,huang2022pointcat}.
In this paper, we aim to provide \textbf{the first robustness certification and enhancement framework for multi-sensor fusion systems} in AD against various semantic transformations in the physical world.

Our framework leverages the \emph{randomized smoothing} technique~\cite{cohen2019certified}, while randomized smoothing cannot directly provide a robustness guarantee against semantic transformations~(e.g., object rotation and shifting) for multi-sensor fusion systems due to three main reasons:
(1)~\textbf{Heterogeneous input dimensions}: In randomized smoothing, the isotropic noise is added to all input dimensions, which is sub-optimal for multi-sensor fusion systems since different input modalities need different noises.
(2)~\textbf{Intractable perturbation spaces}: Semantic transformations incur large $\ell_p$ that cannot be handled by classical randomized smoothing~\cite{yang2020randomized}, and the transformation function does not have a closed-form expression that is required for semantic-smoothing-based certification~\cite{li2021tss}.
(3)~\textbf{Unsupported certification criterion}: Existing randomized smoothing techniques are designed for certifying output consistency for classification~\cite{cohen2019certified} and regression~\cite{kumar2020certifying} tasks. However, multi-sensor fusion systems output 3D bounding boxes for detected objects and use IoU as the evaluation criterion, while randomized smoothing cannot provide worst-case certification for IoU.

To solve these challenges, our framework \name provides the following techniques: 
(1)~We derive an anisotropic noise mechanism that is practical~(agnostic to the transformations to be certified) and efficient for randomized smoothing over multi-modal data.
(2)~Under a mild assumption, we propose a grid-based splitting method to integrate small $\ell_p$ certifications and form a holistic certification against complex semantic transformations. 
(3)~We derive the first rigorous lower bounds of detection confidence and lower bounds of IoU for MSF models.


We leverage our framework to certify the state-of-the-art large-scale camera and LiDAR fusion 3D object detection models~(CLOCs \cite{pang2020clocs} and FocalsConv \cite{chen2022focal}) and compare them with a camera-based 3D object detector~(MonoCon \cite{liu2022learning}) as well as one LiDAR-based 3D object detector~(SECOND \cite{yan2018second}) in CARLA simulator. We consider transformations such as  rotation and shifting, which correspond to the turning around  and sudden brake cases in the real world. In particular, we are able to achieve 100.00\% certified detection accuracy and 53.23\% certified IoU under rotation transformations. Compared to single-modal models, the certification improvements are 25.19\% and 53.23\%, respectively. We demonstrate that the certified robustness depends on both the input and the fusion pipeline structure. 
\vspace{-2mm}
\paragraph{\underline{Technical Contributions.}} In this paper, we provide the first certification framework for multi-sensor fusion systems against adversarial semantic transformations. 
We make contributions on both theoretical and empirical fronts.
\vspace{-5pt}

\begin{itemize}[leftmargin=*]
    \item We propose the \emph{first} generic framework for certifying the robustness of multi-sensor fusion systems against practical semantic transformations in the physical world. 
    
    \item We propose a practical anisotropic noise mechanism to leverage randomized smoothing given multi-modal data, a grid-based splitting method to characterize complex semantic transformations, 
    and efficient algorithms to compute the certification for object detection and IoU lower bounds for large-scale MSF models.
    
    
    \item We construct extensive experiments and provide a benchmark of certified robustness for multi-sensor fusion systems based on \name.
    We certify several state-of-the-art camera-LiDAR fusion models and compare them with single-modal models. We show that the multi-sensor fusion systems provide nontrivial gains on certified robustness, e.g., achieving 53.23\% improvement against the rotation transformation. In addition, we present several interesting observations which would further inspire the development of robust sensor fusion algorithms. 
    The benchmark will be open source upon acceptance and will be continuously expanding to evaluate more AD systems. 
\end{itemize}

\section{Related Work}
\label{sec:related}

\textbf{Multi-sensor fusion systems.} 
Multi-sensor fusion DNN systems leverage data of multiple modalities to predict 3D bounding boxes for object detection. 
In this work, we consider multi-sensor fusion systems that take both image~(from a camera) and point clouds~(from a LiDAR sensor) for object detection \cite{pang2020clocs,chen2022focal}, which is one of the most common forms of AD perception module~\cite{shen2022sok}.
These fusion systems typically integrate outputs from sub-models for each modality via learning-based methods or aggregation rules.
Note that our framework is architecture-agnostic --- applicable for any  fusion system regardless of their internal architectures.

\textbf{Adversarial attacks for DNNs.}
The robustness vulnerabilities of DNNs are manifested by adversarial attacks.
A rich body of research shows that DNNs can be attacked by pixel-wise perturbations bounded by small $\ell_p$ norm with even 100\% success rate~\cite{szegedy2013intriguing,goodfellow2014explaining,carlini2017towards}.
Besides $\ell_p$-bounded perturbations, subsequent research shows that spatial transformations~\cite{xiao2018spatially}, occlusions~\cite{sun2020towards}, and semantic transformations that naturally exist~\cite{pei2017deepxplore,hendrycks2018benchmarking,Ghiasi2020BREAKING} can also mislead DNNs to make severe incorrect predictions.
In particular, several physically realizable and effective adversarial attacks have been proposed against multi-sensor fusion systems~\cite{eykholt2018robust,cao2021invisible}, posing serious safety threats to modern AD.

\textbf{Certified robustness for DNNs.}
To mitigate the robustness vulnerabilities, several defenses are proposed, which can be roughly categorized into empirical and certified defenses.
The \emph{empirical defenses}~\cite{madry2018towards, samangouei2018defense, shafahi2019adversarial} train DNNs with heuristic approaches, e.g., adversarial training, to defend against  adversarial attacks.
However, they cannot provide rigorous robustness guarantees against possible future attacks.
In contrast, \emph{certified defenses} can prove that the trained DNNs are certifiably robust against any possible attacks under some perturbation constraints~\cite{li2023sok}.
Certified defenses are mainly based on verification methods like linear relaxations with branch-and-bound~\cite{wong2018provable,zhang2022general}, Lipschitz DNN architectures~\cite{zhang2022rethinking,singla2021skew,xu2022lot}, and randomized smoothing~\cite{cohen2019certified,yang2020randomized,chiang2020detection,li2022double,sun2022spectral,carlini2023certified}.

For multi-sensor fusion systems, although some  empirical defenses have been proposed~\cite{liu2022attack,zhong2022detecting}, there is no certified defense that provides robustness guarantees to our best knowledge.
Thus, here we aim to provide the first robustness certification and enhancement framework for MSF models.


\section{Robustness Certification for Multi-Sensor Fusion Systems}
\label{sec:Multi-Sensor Fusion Certification}

In this section, we introduce our framework \name for certifying the robustness of multi-sensor fusion systems against semantic transformations in detail.

\subsection{Threat Model and Certification Goal}
    \label{subsec:setup}
\textbf{Notation.}
We consider a multi-sensor fusion system that takes an image from a camera and a point cloud from a LiDAR sensor as the input and outputs several labeled 3D bounding boxes for its detected objects.
In particular, the image input $\vx \in \gX \subseteq \sR^d$ has $d$ dimensions, and the point cloud input $\vp\in \gP \subseteq \sR^{3\times N}$ contains $N$ (un-ordered) 3D point coordinates.
Note that our framework can be easily extended to handle point clouds with intensity.
In the output, each labeled 3D bounding box is a tuple of box coordinates $B = (x,y,z,w,h,l,r) \in \gB \subseteq \sR^6 \times [0,2\pi]$~(where $x,y,z,w,h,l$ are 3D center coordinates and width, height, length respectively, and $r$ is the rotation angle in the $x-z$ plane), label $c\in \gC$, and confidence score $s\in [0,1]$.
Hence, a multi-sensor fusion system can be modeled by a function $g: \gX \times \gP \to (\gB \times \gC \times [0,1])^n$ where $n$ is of variable length and stands for the number of output bounding boxes.

\textbf{Threat model.} 
An adversary can apply a certain parameterized transformation that may alter both the image and point clouds to mislead the model.
We formulate a transformation by two functions $T = \{T_x, T_p\}$ where
$T_x: \gX \times \gZ \to \gX$ transforms images and $T_p: \gP \times \gZ \to \gP$ transforms point clouds respectively.
Note that $\gZ \subseteq \sR^m$ is the set of valid and continuous parameters of the transformation, which is usually in a low-dimensional space, i.e., $m$ is small.
We consider the strongest adversary that can pick an \emph{arbitrary} parameter $\vz \in \gZ$ to transform the input $\small\left( \begin{matrix} \vx \\ \vp \end{matrix} \right) \mapsto \left( \begin{matrix} T_x(\vx, \vz) \\ T_p(\vp, \vz) \end{matrix} \right)$ and feed into the system.

In particular, we will instantiate our robustness certification framework for two common transformations: \textbf{rotation} and \textbf{shifting}.
The rotation transformation $T_{\mathrm{rot}}$ takes a scalar rotation angle $r$ as the parameter and rotates the front car in the $x-z$ plane clockwise.
The angle can be negative, meaning a counterclockwise rotation.
The shifting transformation $T_{\mathrm{sft}}$ takes a scalar distance $\delta\in\sR_+$ as the parameter and places the front car in $\delta$ meters away, i.e., imposes a $\delta$ displacement along the $z$-axis.
\Cref{fig:transformation_visualization} illustrates these two types of transformations.
Note that our framework will require only oracle access to the output of the transformation function to derive robustness certification.
Hence, our framework can be readily extended to other transformations, as long as the transformation is measurable, i.e., can be deterministically parameterized. 

\begin{figure}[!t]
    \centering
    \subcaptionbox{Rotate based on y-axis\label{fig:rotation viz}\vspace{5pt}}{
        \includegraphics[width=.22\linewidth]{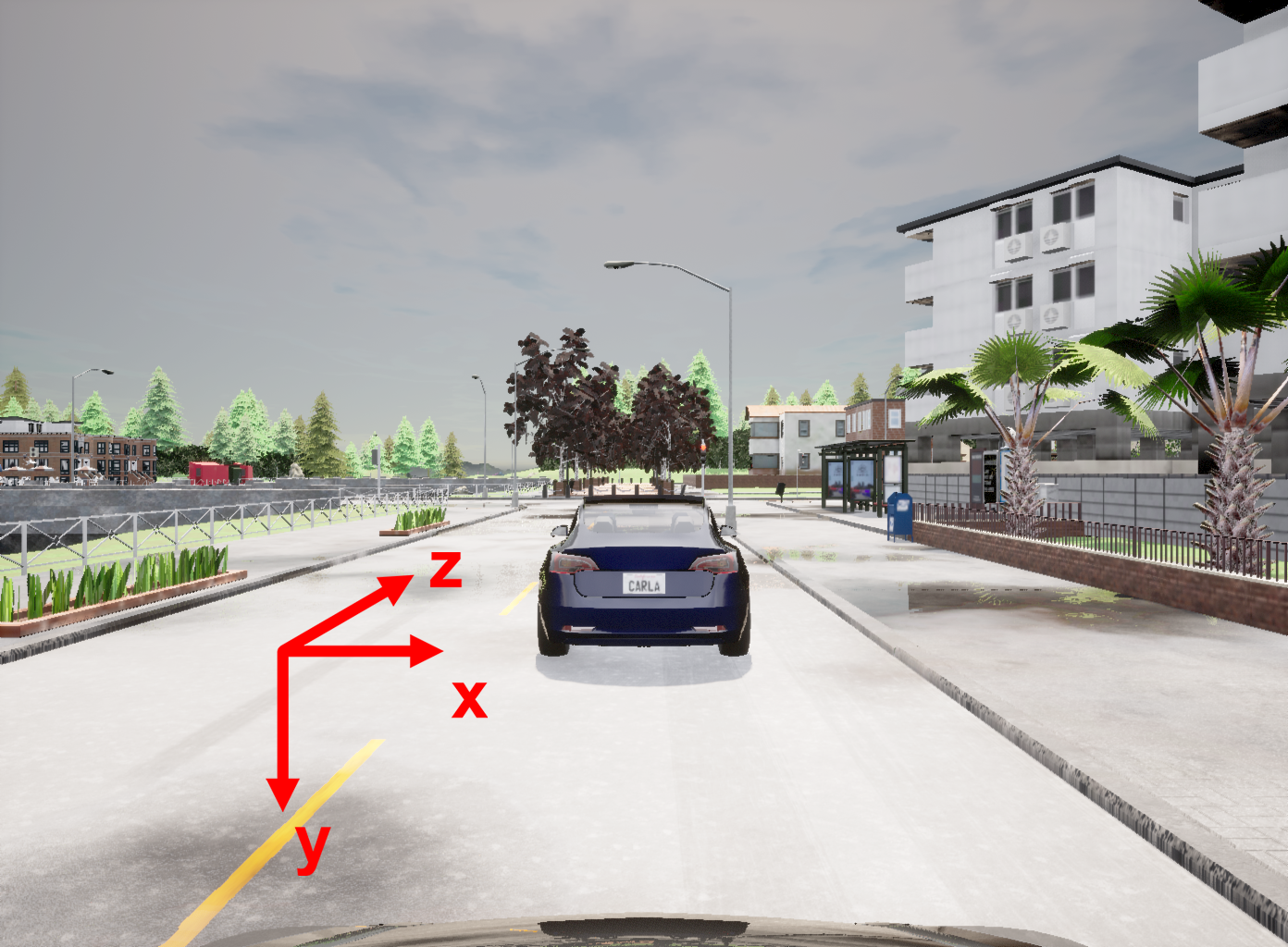}\quad
        \includegraphics[width=.22\linewidth]{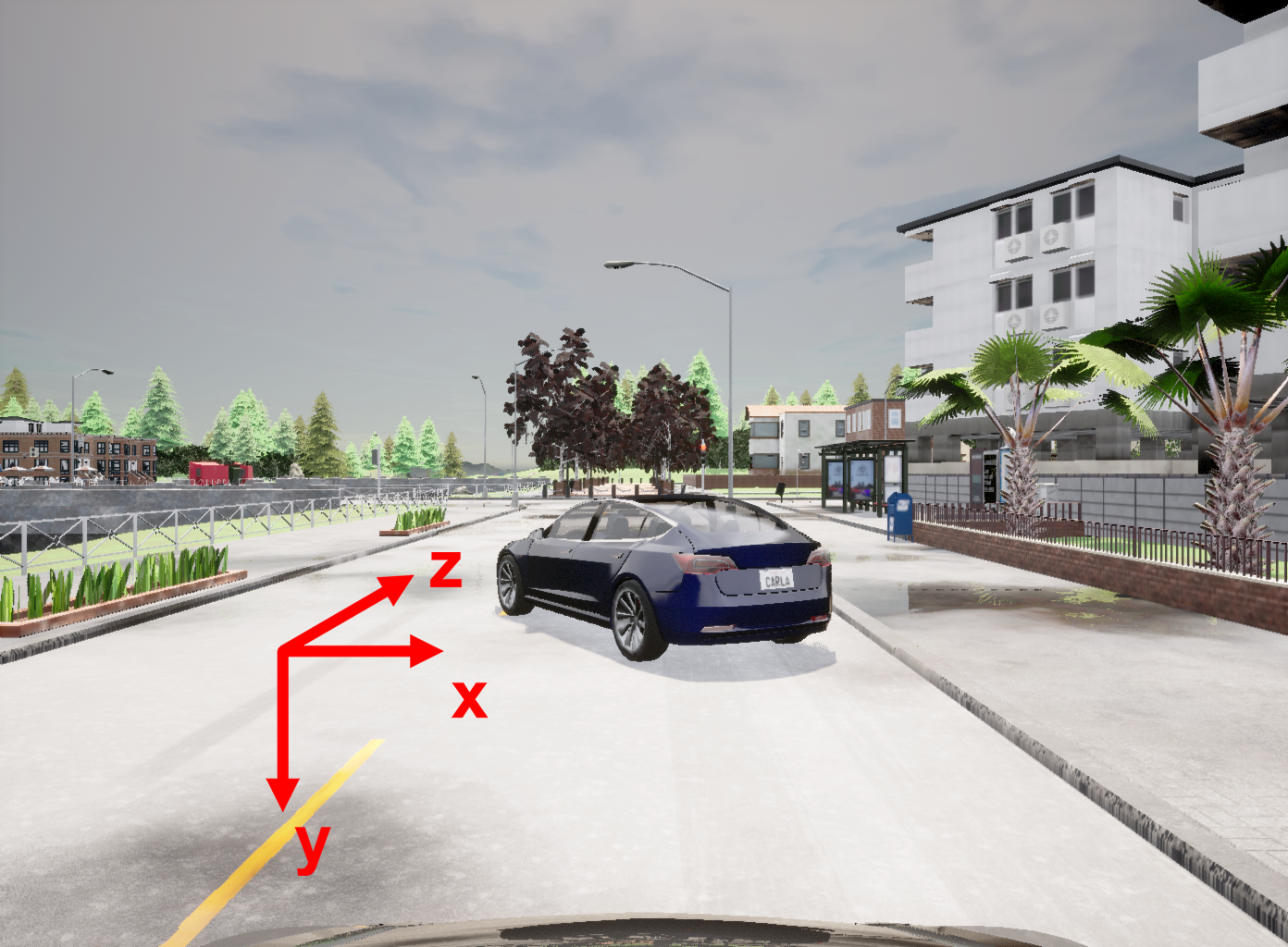}
    }
    \subcaptionbox{Shift along z-axis\label{fig:shift viz}}{
        \includegraphics[width=.22\linewidth]{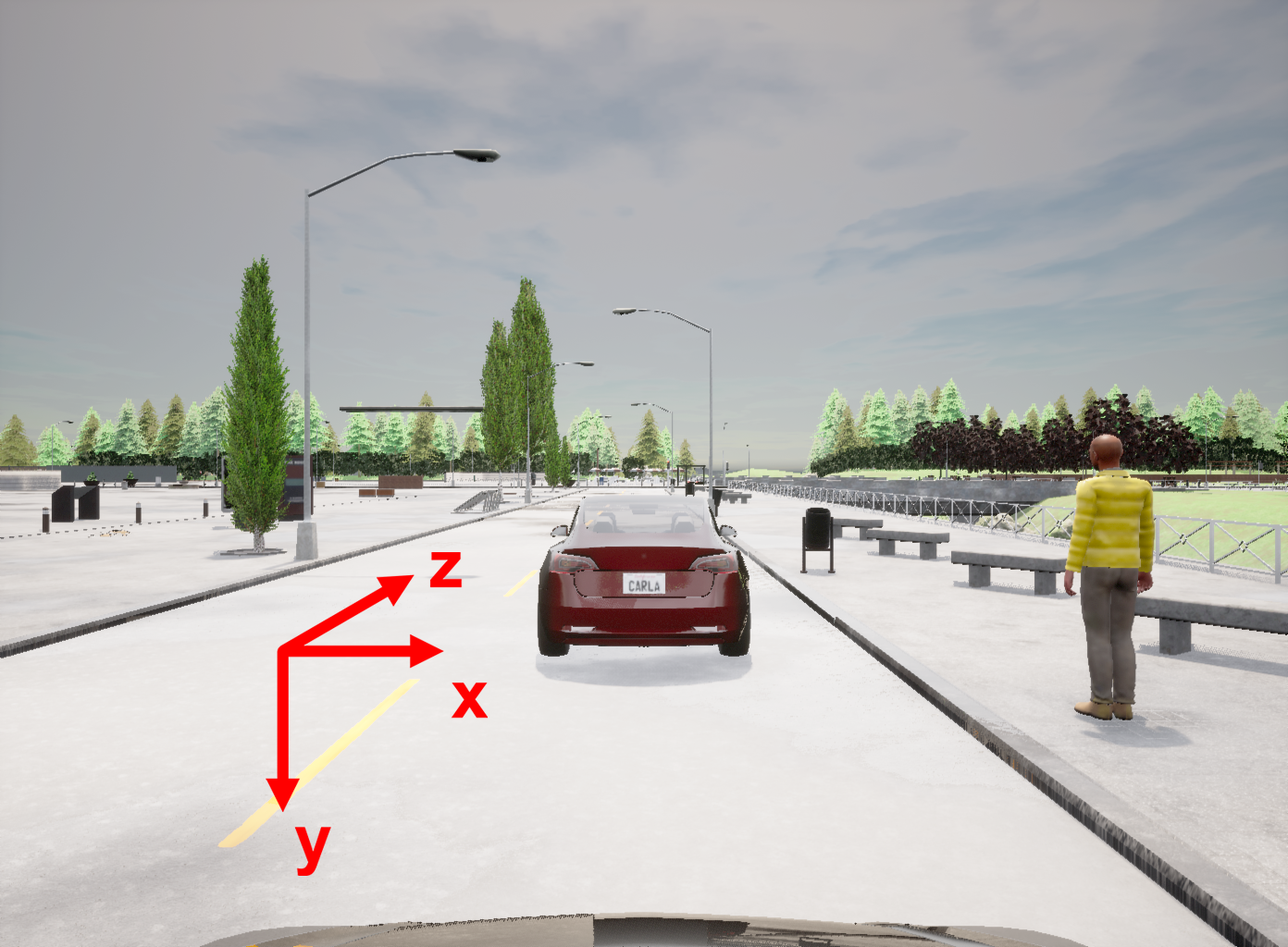}\quad
        \includegraphics[width=.22\linewidth]{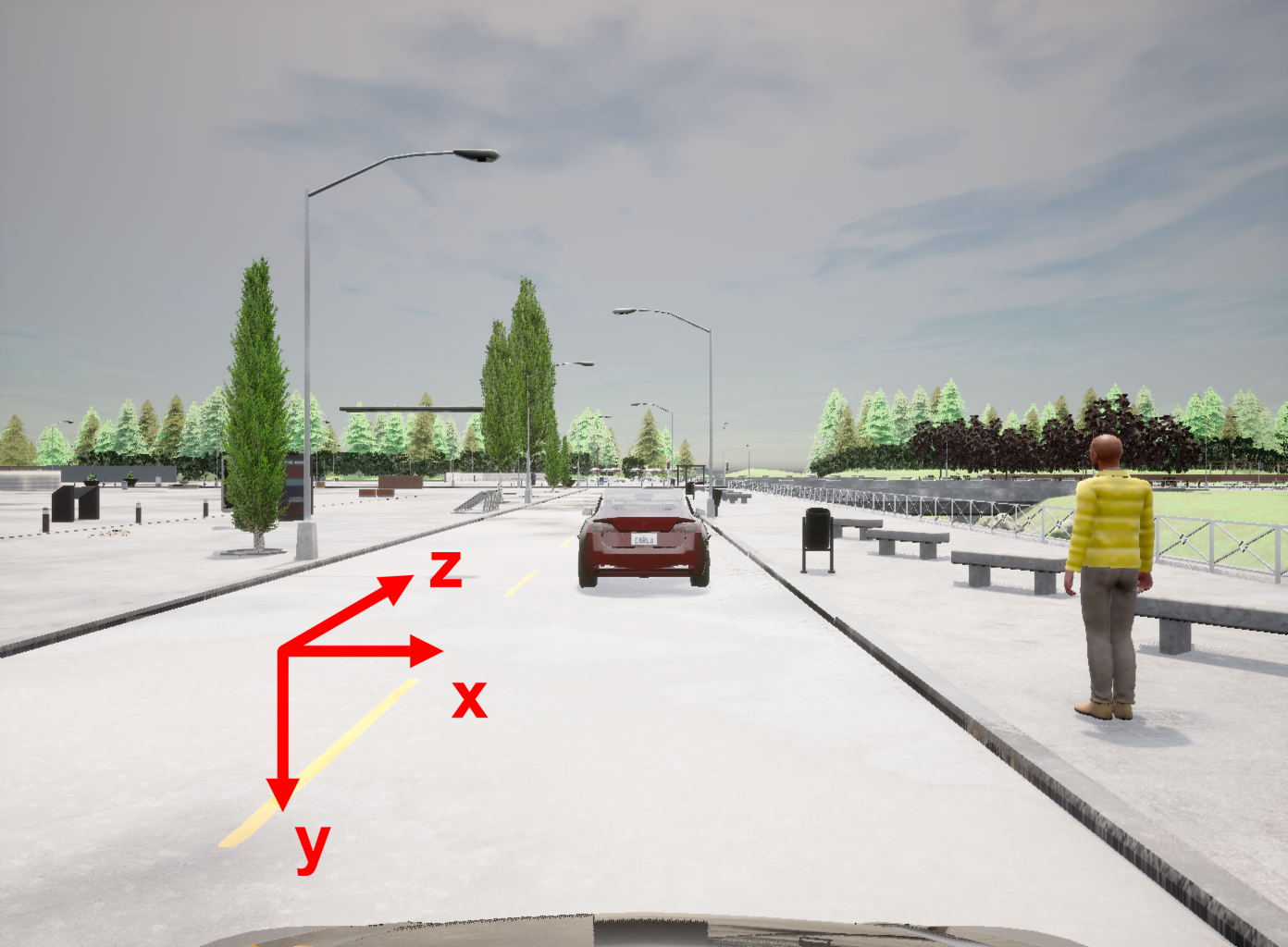}
    }
    \vspace{-1em}
    \caption{Visualization of the rotation and shifting transformations. The x-axis, y-axis, and z-axis point to the left, down, and forward respectively, while the original point is at the center of the bottom plane for the ego vehicle bounding box.}
    \label{fig:transformation_visualization}
\vspace{-10pt}
\end{figure}





\paragraph{Fine partition assumption.}
For common transformations, we find that when the parameter space is partitioned into tiny subspaces with $\ell_\infty$ diameter smaller than some threshold $\tau$,  in each subspace bounded by $\ell_2$ norm, the distortion incurred by the transformation is upper bounded by the distortion with extreme points as transformation parameter.
We formally state such partition assumption and empirically verify it in \Cref{append:assum-eval}. 

\textbf{Robustness certification goal.}
Our goal is to certify that, no matter what transformation parameter is chosen by the adversary within a bounded constraint or what transformation strategy is used, the multi-sensor fusion system can always detect the object and locate the object precisely. 
Here we mainly focus on the task that the multi-sensor fusion system aims to detect the front vehicle when it is present.
Extensions to other tasks such as multi-object detection are straightforward via box alignment~\cite{chiang2020detection}.
Now, we formalize this certification goal by two criteria:
\emph{Given an input $(\vx, \vp)$ containing a front vehicle, a transformation $T$, and a constrained parameter space $\gS$\,
for any transformed input $(T_x(\vx, \vz), T_p(\vp, \vz))$ with $\vz \in \gS$,} 

\vspace{-8pt}
\begin{itemize}[leftmargin=*]
    \item (Detection Certification) \emph{the multi-sensor fusion system always outputs a bounding box for the vehicle with confidence $\ge \eta$, where $\eta$ is a pre-defined threshold};

    \item (IoU Certification) \emph{the multi-sensor fusion system always outputs a bounding box for the vehicle whose volume IoU~(intersection over union) with the ground-truth bounding box $\ge$ some value $v$}.
\end{itemize}
\vspace{-8pt}

In the above criteria, $\eta$ determines whether the confidence is high enough to report ``vehicle detected'', which is usually set to $0.8$; the IoU is the standard for evaluating bounding box precision (i.e., given two 3D bounding boxes $B_1, B_2 \in \gB$, $\iou(B_1,B_2) = \frac{\vol(B_1\cap B_2)}{\vol(B_1\cup B_2)}$ denotes the ratio of intersection volume over the union volume). 

\subsection{Constructing Certifiably Robust MSFs via Smoothing}
    \label{subsec:certification-construction}
    Common multi-sensor fusion systems are challenging to be certified due to complex DNN architectures and fusion rules.
    Hence, we leverage the randomized smoothing~\cite{cohen2019certified}, in particular, median smoothing~\cite{chiang2020detection}, as the post-processing protocol to construct a \emph{smoothed} multi-sensor system.
    Formally, for each coordinate of the multi-sensor fusion system $g_i : \gX \times \gP \to \left((\gB \times \gC \times [0,1])^n\right)_i$, we add \emph{anisotropic Gaussian noise} to the input and define $q$-percentile of the resulting distribution of $g_i$:
    \begin{equation}
        \small
        \label{eq:median}
        {h_i}_q(\vx,\vp) = \sup\{y\in \mathbb R\ |\ \Pr[g_i(\vx+\delta_x,\vp+\delta_p)\leq y] \leq q\},
    \end{equation}
    where $\delta_x \sim \mathcal N(0,\sigma_x^2 \mI_d)$ and $\delta_p \sim \mathcal N(0,\sigma_p^2 \mI_{3\times N})$.
    We define the resulting smoothed multi-sensor fusion system $h_q := ({h_1}_q, {h_2}_q, \dots)$.
    In practice, we use finite $\delta_x$ and $\delta_p$ samples to approximate $h_q$ by $\hat h_q$ with high probability and deploy~($q$ is usually set to $0.5$ so it is called median smoothing). 
    For any $q$, we can obtain high-confidence intervals for $h_q$ via Monte-Carlo sampling~\cite{chiang2020detection}.

    Though existing work provides robustness certification for smoothed models~\cite{cohen2019certified,chiang2020detection,li2021tss,chu2022tpc}, such certification is limited to single-modal classification or regression against $\ell_p$-bounded perturbations.
    In contrast, our goal is to certify the robustness of multi-sensor fusion systems against semantic transformations under the two aforementioned criteria, where direct applications of prior work are infeasible due to heterogeneous input dimensions, intractable perturbation spaces, and unsupported certification criteria.
    In the following text, we introduce theoretical results that fulfill our robustness certification goal.

\subsection{General Detection Certification}
\label{sec:det}

    For detection certification, we locate the vehicle bounding box with the highest confidence and consider the confidence of this box as the detection confidence.
    Hence, 
    for notation simplicity, we let $g: \gX \times \gP \to [0,1]$ to represent this detection confidence of the multi-sensor fusion system.

\begin{theorem}
Let $T=\{T_x, T_p\}$ be a transformation with parameter space $\mathcal Z$. Suppose $\mathcal S \subseteq \mathcal Z$ and $\{\alpha_i\}_{i=1}^M\subseteq \mathcal S$. 
For detection confidence $g:\mathcal X \times \mathcal P \to [0,1]$, let $h_q(\vx, \vp)$ be the median smoothing of $g$ as defined in \cref{eq:median}.
Then for all transformations $\vz\in \gS$, the confidence score of the smoothed detector satisfies: 
\begin{equation}
    h_q(T_x(\vx,\vz), T_p(\vp, \vz)) \geq \min_{1\leq i\leq M} h_{\underline q}(T_x(\vx,\alpha_i), T_p(\vp, \alpha_i))
    \label{eq:detection-lower-bound}
\end{equation}
\vspace{-25pt}
\begingroup
\allowdisplaybreaks

\small
\begin{align}
    \text{where} \quad
    \underline q &= \Phi\left(\Phi^{-1}(q) - \sqrt{\frac{M_x^2}{\sigma_x^2} + \frac{M_p^2}{\sigma_p^2}}\right),\\
    M_x &= \max_{\alpha\in \mathcal S}\min_{1\leq i\leq M} \|T_x(\vx,\alpha) - T_x(\vx,\alpha_i)\|_2,
    \quad 
    M_p = \max_{\alpha\in \mathcal S}\min_{1\leq i\leq M} \|T_p(\vp,\alpha) - T_p(\vp,\alpha_i)\|_2.\label{eq:Mp}
\end{align}
\normalsize
\endgroup
\label{thm:detection}
\end{theorem}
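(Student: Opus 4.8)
The plan is to reduce the anisotropic two-modality smoothing to the standard isotropic single-input setting by a whitening change of variables, then apply the quantile-Lipschitz property of median smoothing, and finally pass from an arbitrary $\vz\in\mathcal S$ to the finite anchor set $\{\alpha_i\}_{i=1}^M$ via a covering argument. For the whitening step, I would write $w=(\vx,\vp)$ for the stacked input and let $\Sigma=\mathrm{diag}(\sigma_x^2\mI_d,\sigma_p^2\mI_{3\times N})$ be the covariance of the noise $(\delta_x,\delta_p)$. Since $g\big(w+(\delta_x,\delta_p)\big)=\bar g\big(\Sigma^{-1/2}w+\xi\big)$ for $\bar g(v):=g(\Sigma^{1/2}v)$ and $\xi\sim\mathcal N(0,\mI)$, the anisotropically smoothed quantile obeys $h_q(w)=\tilde h_q(\Sigma^{-1/2}w)$, where $\tilde h_q$ is the ordinary unit-variance median smoothing of $\bar g$; under this reparametrization the distance that governs the certificate becomes the Mahalanobis distance $\|\Sigma^{-1/2}(w-w')\|_2=\big(\|\vx-\vx'\|_2^2/\sigma_x^2+\|\vp-\vp'\|_2^2/\sigma_p^2\big)^{1/2}$.

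Next I would invoke the quantile-Lipschitz property of median smoothing \cite{chiang2020detection}: for a $[0,1]$-valued base function smoothed with $\mathcal N(0,\mI)$ noise and inputs $u,u'$ with $\|u-u'\|_2\le\rho$, the unit-variance median smoothing satisfies $\tilde h_q(u')\ge\tilde h_{\Phi(\Phi^{-1}(q)-\rho)}(u)$. The route is to fix a threshold $y$, observe that $u\mapsto\Pr[\bar g(u+\xi)\le y]$ is a Gaussian-smoothed $\{0,1\}$-valued function so that its image under $\Phi^{-1}$ is $1$-Lipschitz, and then translate this inequality between CDFs into one between their generalized inverses $\tilde h_{(\cdot)}$ using monotonicity of $\Phi$ (which also lets us replace the exact distance by any upper bound $\rho$). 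Combining with the whitening step and taking $\rho=\sqrt{M_x^2/\sigma_x^2+M_p^2/\sigma_p^2}$, for any anchor $\alpha_i$ whose transformed input lies within Mahalanobis distance $\rho$ of $\big(T_x(\vx,\vz),T_p(\vp,\vz)\big)$ we would obtain
\[
h_q\big(T_x(\vx,\vz),T_p(\vp,\vz)\big)\ \ge\ h_{\underline q}\big(T_x(\vx,\alpha_i),T_p(\vp,\alpha_i)\big).
\]

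To finish, by the definitions of $M_x$ and $M_p$ — equivalently, the grid partition of $\mathcal S$ that assigns each $\vz$ to a nearby anchor — for every $\vz\in\mathcal S$ there is an index $i=i(\vz)$ with $\|T_x(\vx,\vz)-T_x(\vx,\alpha_i)\|_2\le M_x$ and $\|T_p(\vp,\vz)-T_p(\vp,\alpha_i)\|_2\le M_p$ simultaneously, so the Mahalanobis distance is at most $\rho$; plugging this $i$ into the previous display and then lower-bounding its right-hand side by $\min_{1\le j\le M}h_{\underline q}\big(T_x(\vx,\alpha_j),T_p(\vp,\alpha_j)\big)$ yields \cref{eq:detection-lower-bound}. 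I expect the quantile-Lipschitz step to be the main obstacle: one must pin down the lemma in exactly the right direction so that the bound is the \emph{lower} percentile $h_{\underline q}$ rather than the matching upper one, verify it for the one-sided $\sup$-based definition in \cref{eq:median} (handling jump points of the CDF), and confirm it transports correctly through the whitening, which is where the factors $1/\sigma_x^2$ and $1/\sigma_p^2$ enter $\underline q$. The whitening and covering steps are comparatively routine — a linear change of variables and a counting argument over the anchors — with the only mild care in the latter being that a single index must realize both the $M_x$- and $M_p$-bounds, as guaranteed by the cell-to-anchor assignment.
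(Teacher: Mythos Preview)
Your proposal is correct and follows essentially the same route as the paper: the paper rescales the point-cloud coordinate by $\sigma_x/\sigma_p$ to make the noise isotropic with variance $\sigma_x^2$ (your full whitening to unit variance is an equivalent reparametrization), then applies the $\Phi^{-1}$-Lipschitz lemma of Salman et al./Chiang et al.\ to a smoothed indicator to obtain the percentile shift, and finishes with the same covering step over the anchors. Your explicit flag that a \emph{single} anchor must simultaneously realize the $M_x$- and $M_p$-bounds is well taken---the paper asserts this directly ``by the definition of $M_x$ and $M_p$'' without further comment, and like you relies on the grid cell-to-anchor assignment rather than the bare $\max$--$\min$ definitions to secure it.
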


    
\vspace{-10pt}
\begin{remark}
    A full proof for \cref{thm:detection} is in \cref{append:proof-det}.
    Suppose we have upper bounds for $M_x$ and $M_p$~(to be given in \Cref{lem:upper-bound-mx-mp}), we can compute a lower bound of $\underline q$, and a high-confidence lower bound of $h_{\underline q}(T_x(\vx, \alpha_i), T_p(\vp, \alpha_i))$ via Monte-Carlo sampling.
    As a result, we can compute a high-confidence lower bound of detection confidence $h_q$.
    By comparing it with $\eta$ in \Cref{subsec:setup}, we can derive the detection certification.
\end{remark}

\begin{lemma}
    If the parameter space to certify $\gS = [l_1, u_1] \times \cdots \times [l_m, u_m]$ is a hypercube satisfying the finite partition assumption~(\Cref{assum}) with threshold $\tau$, and $\{\alpha_i\}_{i=1}^M = \{\frac{K_1-k_1}{K_1}l_1 + \frac{k_1}{K_1}u_1: k_1 = 0,1,\dots,K_1\} \times \cdots \times \{\frac{K_m-k_m}{K_m}l_m + \frac{k_m}{K_m}u_m: k_m = 0,1,\dots,K_m\}$, where $K_i \ge \frac{u_i - l_i}{\tau}$, then
    \vspace{-5pt}
    \small 
    \begin{equation}
      \resizebox{\linewidth}{!}{$ \displaystyle
        M_x \leq \sum_{i=1}^m \max_{\vk\in \Delta}\Big\|T_x(\vx, \vw(\vk)) - T_x(\vx, \vw(\vk)+w_i)\Big\|_2,
        \quad
        M_p \leq \sum_{i=1}^m\max_{\vk\in \Delta}\Big\|T_p(\vp, \vw(\vk)) - T_p(\vp, \vw(\vk)+w_i)\Big\|_2
      $}
    \end{equation}
    \normalsize
    \vspace{-10pt}
    
    where $\Delta = \{(k_1,\dots, k_m)\in \mathbb Z^m\ |\ 0\leq k_i< K_i\}$ and $\vw(\vk) = ( \frac{K_1-k_1}{K_1} l_1 + \frac{k_1}{K_1} u_1, \cdots, \frac{K_m-k_m}{K_m} l_m + \frac{k_m}{K_m} u_m )$. $w_i = \frac{u_i - l_i}{K_i}\ve_i$, where $\ve_i$ is a unit vector at coordinate $i$.
    \label{lem:upper-bound-mx-mp}
\end{lemma}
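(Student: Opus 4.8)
The plan is to bound $M_x$ (and symmetrically $M_p$) by decomposing the worst-case displacement between an arbitrary parameter $\alpha \in \gS$ and its nearest grid point $\alpha_i$ into a telescoping sum along the $m$ coordinate axes, then bounding each axis-aligned step using the finite partition assumption. First I would fix an arbitrary $\alpha = (a_1,\dots,a_m)\in\gS$ and locate the grid cell containing it: since $\{\alpha_i\}$ is the product grid with $K_j+1$ nodes per axis and spacing $\frac{u_j-l_j}{K_j}\le\tau$, there is a unique $\vk\in\Delta$ such that $\alpha$ lies in the cell $[\,w(\vk)_j,\ w(\vk)_j + \frac{u_j-l_j}{K_j}\,]$ coordinate-wise, i.e. $\alpha$ sits between the grid corner $\vw(\vk)$ and the opposite corner $\vw(\vk)+\sum_i w_i$. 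Because this cell has $\ell_\infty$ diameter $\le\tau$, the finite partition assumption (\Cref{assum}) applies on it: the distortion $\|T_x(\vx,\cdot)-T_x(\vx,\cdot)\|_2$ between any two points in the cell is controlled by the distortion between extreme (corner) points along each axis.

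Next I would introduce the chain of intermediate parameters $\beta_0 = \vw(\vk)$, and $\beta_j = \beta_{j-1} + w_j$ (replacing coordinate $j$ of the lower corner by the upper corner one axis at a time), so that $\beta_m = \vw(\vk) + \sum_{i=1}^m w_i$ is the opposite corner. Writing $\min_i\|T_x(\vx,\alpha)-T_x(\vx,\alpha_i)\|_2 \le \|T_x(\vx,\alpha)-T_x(\vx,\vw(\vk))\|_2$ (the nearest grid point is no farther than the corner $\vw(\vk)$), I apply the finite partition assumption to replace the $\alpha$-to-corner distortion by the full corner-to-corner distortion across the cell, and then the triangle inequality along the chain $\beta_0,\beta_1,\dots,\beta_m$:
\begin{equation}
\|T_x(\vx,\alpha)-T_x(\vx,\vw(\vk))\|_2 \le \sum_{i=1}^m \|T_x(\vx,\beta_{i-1}) - T_x(\vx,\beta_i)\|_2 = \sum_{i=1}^m \|T_x(\vx,\beta_{i-1}) - T_x(\vx,\beta_{i-1}+w_i)\|_2.
\end{equation}
Since each $\beta_{i-1}$ is itself of the form $\vw(\vk')$ for some $\vk'\in\Delta$ (a corner of a grid cell), the $i$-th term is at most $\max_{\vk\in\Delta}\|T_x(\vx,\vw(\vk)) - T_x(\vx,\vw(\vk)+w_i)\|_2$. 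Taking the maximum over $\alpha\in\gS$ on the left then yields the claimed bound on $M_x$, and the identical argument with $T_p,\vp,\sigma_p$ gives the bound on $M_p$.

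The main obstacle is the precise invocation of the finite partition assumption: I need its statement (from \Cref{assum}, deferred to the appendix) to say exactly that on any sub-box of $\ell_\infty$-diameter $\le\tau$, the $\ell_2$ transformation distortion between interior points is dominated by the distortion at the extreme points along coordinate directions — in particular that $\|T_x(\vx,\alpha)-T_x(\vx,\vw(\vk))\|_2$ is bounded by the sum of axis-wise corner distortions of that cell. If the assumption is instead phrased for a single bounding hypercube rather than per-cell, a minor additional step is needed to note that each tiny grid cell inherits the property since it too has $\ell_\infty$-diameter below $\tau$. Everything else — the existence and uniqueness of the containing cell, the telescoping chain, and the reduction of "nearest grid point" to "lower corner of the cell" — is routine; the condition $K_i\ge\frac{u_i-l_i}{\tau}$ is exactly what guarantees every cell is fine enough for the assumption to apply.
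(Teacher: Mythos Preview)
Your approach is essentially the same as the paper's --- locate the containing grid cell, invoke the finite partition assumption on that cell, then telescope along axis-aligned edges and bound each edge by the maximum over $\Delta$ --- but there is one small gap in how you apply the assumption.

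The assumption (as stated in the appendix) says that for any two parameters $\vz_1,\vz_2$ in a sub-box of $\ell_\infty$-diameter $\le\tau$,
\[
\|T_x(\vx,\vz_1)-T_x(\vx,\vz_2)\|_2 \;\le\; \max_{\vz_1',\vz_2'\in E(\gZ_{\mathrm{sub}})}\|T_x(\vx,\vz_1')-T_x(\vx,\vz_2')\|_2,
\]
i.e.\ the bound is the maximum over \emph{all} pairs of corners, not specifically the pair of opposite corners $(\beta_0,\beta_m)$. Consequently your displayed inequality does not follow directly: the assumption hands you $\|T_x(\vx,\alpha)-T_x(\vx,\vw(\vk))\|_2 \le \|T_x(\vx,\vz_1)-T_x(\vx,\vz_2)\|_2$ for \emph{some} maximizing corner pair $(\vz_1,\vz_2)$, and there is no reason this maximizer is dominated by the particular telescoping sum along your fixed chain $\beta_0\to\beta_1\to\cdots\to\beta_m$.

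The paper's remedy (and the natural fix to your argument) is to telescope between the actual maximizers: take a shortest edge-path in the cell from $\vz_1$ to $\vz_2$, which visits corners $\alpha_1=\vz_1,\alpha_2,\dots,\alpha_n=\vz_2$ with each consecutive pair differing in exactly one (non-repeating) coordinate $c_j$. The triangle inequality then gives
\[
\|T_x(\vx,\vz_1)-T_x(\vx,\vz_2)\|_2 \le \sum_{j=1}^{n-1}\|T_x(\vx,\alpha_j)-T_x(\vx,\alpha_{j+1})\|_2 \le \sum_{j=1}^{n-1}\max_{\vk\in\Delta}\|T_x(\vx,\vw(\vk))-T_x(\vx,\vw(\vk)+w_{c_j})\|_2,
\]
and since the $c_j$ are distinct and $n-1\le m$, this is at most $\sum_{i=1}^m \max_{\vk\in\Delta}\|T_x(\vx,\vw(\vk))-T_x(\vx,\vw(\vk)+w_i)\|_2$. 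With this adjustment your proof matches the paper's exactly; everything else you wrote (cell location, reduction to a corner, the final max over $\Delta$) is correct.
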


\vspace{-5pt}
\begin{remark}
    This lemma splits each dimension of $\gS$ by a $\tau$-cover: $\{\frac{K_i-k_i}{K_i}l_i + \frac{k_i}{K_i}u_i: k_i = 0,1,\dots,K_i\}$. Hence, for each tiny subspace defined by $[\vw(\vk), \vw(\vk)+(w_1,\dots, w_m)]$, we can apply the finite partition assumption~(\Cref{assum}) and the lemma follows.
    A full proof is in \cref{proof:mx-mp}.
    The lemma provides feasible upper bounds~(via computing maximum of finite terms) for $M_x$ and $M_p$, so a lower bound of $\underline q$ is computable, and hence the robustness certification in \Cref{thm:detection} is computationally feasible.
\end{remark}

\subsection{General IoU Certification for 3D Bounding Boxes}
\label{sec:iou}
\textbf{Median smoothing for 3D bounding boxes.} Given a base 3D bounding box predictor for the front vehicle $g: \mathcal X \times \mathcal P \to \mathcal B$ with $\mathcal B \subseteq \mathbb R^6\times [0,2\pi]$ describing the geometric shape of the bounding box~(details in  \Cref{subsec:setup}), we denote by $h_q(\vx,\vp)$ the coordinate-wise median smoothing on the outputs of $g$ following \Cref{eq:median}.

First, by applying \Cref{thm:detection} on each coordinate of the bounding box from two sides, we obtain the intervals of bounding box coordinates after any possible transformation. 

\begin{theorem}
Let $T = \{T_x,T_p\}$ be a transformation with parameter space $\mathcal Z$. Suppose $\mathcal S \subseteq \mathcal Z$ and $\{\alpha_i\}_{i=1}^N\subseteq \mathcal S$. Let $g_i:\mathcal X \times \mathcal P \to (\mathcal B)_i$ be the $i$-th coordinate of a predicted bounding box of a multi-sensor fusion system, and $h_{iq}(\vx,\vp)$ be the median smoothing of $g_i$ as defined in \cref{eq:median}. Then for all transformations $z\in \mathcal S$, the $i$-th coordinate of the median smoothed bounding box predictor satisfies:
\begin{align}
    \min_{1\leq i \leq M} h_{i\underline q}(T_x(\vx,\alpha_i), T_p(\vp, \alpha_i)) &\leq h_{iq}(T_x(\vx, \vz),T_p(\vp, \vz)) 
    \leq \max_{1\leq i \leq M} h_{i\overline q}(T_x(\vx,\alpha_i),T_p(\vp, \alpha_i))
\end{align}
where
\begin{small}
\vspace{-10pt}
\begin{align}
    \underline q = \Phi\left(\Phi^{-1}(q) - \sqrt{\frac{M_x^2}{\sigma_x^2} + \frac{M_p^2}{\sigma_p^2}}\right),
    \quad
    \overline q = \Phi\left(\Phi^{-1}(q) + \sqrt{\frac{M_x^2}{\sigma_x^2} + \frac{M_p^2}{\sigma_p^2}}\right).
\end{align}
\end{small}
with $M_x, M_p$ defined as \cref{eq:Mp}.
\end{theorem}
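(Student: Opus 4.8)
The plan is to read off the two-sided coordinate bounds directly from \Cref{thm:detection} by applying it to each coordinate predictor $g_i$ once as stated and once in negated form. First I would note that the proof of \Cref{thm:detection} uses nothing about the base function being $[0,1]$-valued: it rests only on (i) the Lipschitz behaviour of median smoothing under anisotropic Gaussian noise --- shifting the input of a median-smoothed coordinate by a displacement whose standardized norm is at most $R := \sqrt{M_x^2/\sigma_x^2 + M_p^2/\sigma_p^2}$ drops the $q$-quantile to no less than the $\Phi(\Phi^{-1}(q) - R)$-quantile evaluated at the original point --- and (ii) the covering relation between the reference parameters and $\mathcal S$ that is encoded by $M_x, M_p$; neither ingredient refers to the codomain of the base predictor, and a bounded real range suffices, which each $(\mathcal B)_i$ provides (treating the angular coordinate on an interval away from the $0$--$2\pi$ wrap-around). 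Applying \Cref{thm:detection} verbatim to $g_i$ therefore yields the lower bound $h_{iq}(T_x(\vx,\vz),T_p(\vp,\vz)) \ge \min_j h_{i\underline q}(T_x(\vx,\alpha_j),T_p(\vp,\alpha_j))$ for every $\vz \in \mathcal S$, where $j$ indexes the reference parameters $\{\alpha_j\}$.

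For the upper bound I would apply the same theorem to the negated coordinate $-g_i$ at smoothing level $1-q$ (the constants $M_x, M_p, R$ do not depend on the base function). The one identity needed is that negation and the $q$-percentile functional commute, i.e.\ the median smoothing of $-g_i$ at level $q'$ equals minus the median smoothing of $g_i$ at level $1-q'$; this is immediate from the definition in \cref{eq:median} once one takes the usual care with strict versus non-strict inequalities (the law of $g_i(\vx+\delta_x,\vp+\delta_p)$ being regular enough that $\sup/\inf$ and $</\le$ agree). Feeding $-g_i$ into \Cref{thm:detection} and unwinding this identity turns the resulting lower bound on the smoothing of $-g_i$ into an upper bound on $h_{iq}$, namely $h_{iq}(T_x(\vx,\vz),T_p(\vp,\vz)) \le \max_j h_{i\,q^\star}(T_x(\vx,\alpha_j),T_p(\vp,\alpha_j))$ with level $q^\star = 1 - \Phi(\Phi^{-1}(1-q) - R)$. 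It then only remains to simplify $q^\star$: using $\Phi^{-1}(1-q) = -\Phi^{-1}(q)$ and $1 - \Phi(t) = \Phi(-t)$ gives $q^\star = \Phi(\Phi^{-1}(q) + R) = \overline q$, exactly the level appearing in the statement.

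The part that needs the most care is the percentile--negation identity, together with confirming that \Cref{thm:detection} is genuinely symmetric under $g_i \mapsto -g_i$ --- that is, that its proof does not secretly exploit one-sidedness of the quantile. It does not: the Gaussian comparison inequality underlying median smoothing is itself two-sided, so the negated application is legitimate, and everything after it is bookkeeping with $\Phi$ and the relabelling $q \leftrightarrow 1-q$. A minor additional check is that the $\underline q, \overline q$ here are built from the same $M_x, M_p$ as in \cref{eq:Mp}, so the hypotheses making $M_x, M_p$ (hence $R$) finite and computable --- e.g.\ through \Cref{lem:upper-bound-mx-mp} --- transfer unchanged.
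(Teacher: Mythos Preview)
Your proposal is correct and matches the paper's own approach: the paper states that this theorem is obtained ``by applying \Cref{thm:detection} on each coordinate of the bounding box from two sides,'' which is precisely your lower-bound application to $g_i$ together with the negated application to $-g_i$ at level $1-q$, followed by the $\Phi$-identity simplification you spell out. Your observation that the $[0,1]$ codomain in \Cref{thm:detection} is inessential is also right, since its proof applies the Lipschitz lemma to the indicator $\mathbbm{1}[g(\cdot)\le h_{\underline q}(\vx,\vp)]$, which is $\{0,1\}$-valued regardless of the range of $g_i$.
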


With the intervals of bounding box coordinates, we propose the following theorem for computing the lower bound of IoU between the output bounding box and the ground truth.

\begin{theorem}
\label{theorem:iou}
Let $\mathbf B$ be a set of bounding boxes whose coordinates are bounded. We denote the lower bound of each coordinate by $(\underline x, \underline y, \underline z, \underline w,\underline h , \underline l, \underline r)$ and upper bound by $(\bar x,\bar y,\bar z, \bar w, \bar h, \bar l, \bar r)$. Let $B_{gt} = (x,y,z,w,h,l,r)$ be the ground truth bounding box.
Then for any $B_i \in \mathbf B$,
\begin{equation}
    \iou(B_i,B_{gt}) \geq \frac{h_1 \cdot \left(\underline l\underline w-\vol(\underline S \backslash S_{gt})\right)}{hwl + \bar h\bar w \bar l - h_2 \cdot \left(\bar l \bar w - \vol(\bar S \backslash S_{gt})\right)}
\end{equation}
where $\underline S, \bar S$ are convex hulls formed by $(\underline x,\underline z,\underline r,\bar x,\bar z,\bar r)$ with respect to $(\underline w,\underline l)$ and $(\bar w,\bar l)$ (details in \cref{append:proof-iou}), and $S_{gt} = (x,z,w,l,r)_{gt}$ is the projection of $B_{gt}$ to the $x-z$ plane.
\small
\begin{align}
    h_1 &= \max\big(\min_{y^\prime\in [\underline y, \bar y]} \min\{h, \underline h, \frac{h+\underline h}{2} - |y^\prime-y|\},0\big),
    h_2 = \max\big(\min_{y^\prime\in [\underline y, \bar y]} \min\{h, \bar h, \frac{h+\bar h}{2} - |y^\prime-y|\},0\big).
\end{align}
\normalsize
\end{theorem}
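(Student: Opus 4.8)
This is a purely deterministic, geometric statement given the coordinate intervals (which in the pipeline come from the previous theorem), so the plan is to reduce the $3$D IoU to a product of a $1$D overlap and a $2$D overlap and then bound each factor. The key structural fact is that every box in play is \emph{upright}: its only rotation is about the vertical $y$-axis, so its $y$-extent is a plain interval $I=[y-h/2,\,y+h/2]$ and the box factorizes as $I$ times a rotated rectangle $S$ in the $x$–$z$ plane. Hence for any $B_i\in\mathbf B$ with $y$-interval $I_i$ and cross-section $S_i$,
\[\vol(B_i\cap B_{gt}) = |I_i\cap I_{gt}|\cdot\vol(S_i\cap S_{gt}),\qquad \vol(B_i\cup B_{gt}) = h_iw_il_i + hwl - \vol(B_i\cap B_{gt}),\]
so $\iou(B_i,B_{gt}) = N/D$ with $N=|I_i\cap I_{gt}|\cdot\vol(S_i\cap S_{gt})$ and $D=h_iw_il_i+hwl-N$. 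Since $D\ge hwl>0$ always, it suffices to produce a \emph{nonnegative} lower bound $N_{\mathrm{lb}}$ on $N$ and an upper bound $D_{\mathrm{ub}}>0$ on $D$, after which $\iou=N/D\ge N_{\mathrm{lb}}/D\ge N_{\mathrm{lb}}/D_{\mathrm{ub}}$.

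For the $y$-factor I would record the elementary identity $|I_i\cap I_{gt}|=\max\{0,\ \min(h_i,h,\tfrac{h_i+h}{2}-|y_i-y|)\}$, note it is nondecreasing in $h_i$ and concave in $y_i$, and extract two consequences: minimizing over $h_i\in[\underline h,\bar h]$ and $y_i\in[\underline y,\bar y]$ is attained at $h_i=\underline h$ and at an endpoint in $y_i$, which is exactly $h_1$, so $|I_i\cap I_{gt}|\ge h_1$; whereas freezing $h_i=\bar h$ and minimizing only over $y_i$ gives $h_2$ — the quantity that appears once $h_i$ is driven to its maximum in the denominator bound. For the $x$–$z$ factor I would use the two convex hulls. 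Because $S_i$ has side lengths in $[\underline w,\bar w]\times[\underline l,\bar l]$, center in $[\underline x,\bar x]\times[\underline z,\bar z]$ and angle in $[\underline r,\bar r]$, its concentric co-rotated $(\underline w,\underline l)$-sub-rectangle $S_i^-$ is contained in $S_i$ and is one of the generators of $\underline S$, hence $S_i^-\subseteq\underline S$, giving $\vol(S_i\cap S_{gt})\ge\vol(S_i^-\cap S_{gt})=\underline w\underline l-\vol(S_i^-\setminus S_{gt})\ge\underline w\underline l-\vol(\underline S\setminus S_{gt})$; symmetrically $S_i$ sits inside its concentric $(\bar w,\bar l)$-super-rectangle, a generator of $\bar S$, so $S_i\subseteq\bar S$ and $\vol(S_i\setminus S_{gt})\le\vol(\bar S\setminus S_{gt})$. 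Combining the two factors yields $N\ge h_1\big(\underline w\underline l-\vol(\underline S\setminus S_{gt})\big)$ (when the bracket is negative this is vacuous but still valid as $N\ge0$), which is the numerator of the claim.

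The crux is the denominator. I would rewrite $h_iw_il_i-N=(h_i-|I_i\cap I_{gt}|)\,\vol(S_i)+|I_i\cap I_{gt}|\,\vol(S_i\setminus S_{gt})$, upper bound $\vol(S_i)\le\bar w\bar l$ and $\vol(S_i\setminus S_{gt})\le\vol(\bar S\setminus S_{gt})$ to get $h_iw_il_i-N\le h_i\bar w\bar l-|I_i\cap I_{gt}|\big(\bar w\bar l-\vol(\bar S\setminus S_{gt})\big)$, and then maximize the right-hand side over the box parameters. In the non-vacuous regime $\bar w\bar l\ge\vol(\bar S\setminus S_{gt})$ this expression is monotone increasing in $h_i$ — one checks this separately on each branch of the $\min$ defining $|I_i\cap I_{gt}|$ — so the worst case is $h_i=\bar h$, for which the smallest admissible value of $|I_i\cap I_{gt}|$ over $y_i\in[\underline y,\bar y]$ is precisely $h_2$; adding $hwl$ gives $D\le hwl+\bar h\bar w\bar l-h_2\big(\bar w\bar l-\vol(\bar S\setminus S_{gt})\big)$. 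Dividing the numerator bound by this denominator bound gives the stated inequality.

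I expect two delicate points. First, the asymmetry between $h_1$ (numerator) and $h_2$ (denominator) is genuine and comes entirely from the monotonicity-in-$h_i$ claim above — intuitively, a tall $B_i$ is bad for the union volume but simultaneously increases its overlap with $B_{gt}$, and these cannot both be adversarial — so that branch-by-branch monotonicity check is the real content and must be done carefully. Second, the bound rests on treating $\underline S$ and $\bar S$ as explicit convex polygons (convex hulls of a handful of axis-shifted, rotated rectangles) and clipping them against $S_{gt}$ to evaluate $\vol(\underline S\setminus S_{gt})$ and $\vol(\bar S\setminus S_{gt})$; this is routine planar geometry but notationally heavy, and I would relegate the explicit hull construction and the degenerate cases (empty overlaps, negative brackets, the regime $\vol(\bar S\setminus S_{gt})>\bar w\bar l$) to the appendix.
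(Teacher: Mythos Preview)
Your proposal is correct and uses the same core ingredients as the paper: the $y\times(x\text{--}z)$ factorization of upright boxes, the interval-overlap identity yielding $h_1,h_2$, and the convex-hull containments $S_i^-\subseteq\underline S$, $S_i\subseteq\bar S$ for the planar part. The one place you diverge is the denominator. The paper observes at the outset that both $\vol(B_i\cap B_{gt})$ and $\vol(B_i\cup B_{gt})$ are monotone in the box dimensions $(w_i,h_i,l_i)$, so it immediately replaces sizes by $(\underline w,\underline h,\underline l)$ in the numerator and $(\bar w,\bar h,\bar l)$ in the denominator, then writes $\max_{x,y,z,r}\vol(B(\cdot,\bar w,\bar h,\bar l,\cdot)\cup B_{gt})=\bar h\bar w\bar l+hwl-\min_{x,y,z,r}\vol(B(\cdot,\bar w,\bar h,\bar l,\cdot)\cap B_{gt})$; this reduces the union bound to a second intersection-lower-bound problem identical in form to the numerator, and $h_2$ falls out with no further work. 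Your route---rewriting $h_iw_il_i-N=(h_i-|I_i\cap I_{gt}|)\vol(S_i)+|I_i\cap I_{gt}|\vol(S_i\setminus S_{gt})$ and then verifying branch-by-branch monotonicity in $h_i$---reaches the same bound and is a nice self-contained argument, but it is strictly more work: the paper's early size-decoupling makes the $h_1$/$h_2$ asymmetry automatic and avoids the case analysis you flag as the ``real content.'' Either way the final inequality and the convex-hull construction (your second delicate point) are identical.
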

\begin{proof}[Proof Sketch.]
To prove a lower bound for the IoU between $B_i$ and the ground truth $B_{gt}$, we lower bound the volume of the intersection $B_i \cap B_{gt}$ and upper bound the volume of the union $B_i \cup B_{gt}$ separately.
We estimate the upper bound of union by $\vol(B_{gt})+\vol(B_{\max})-\min_{(x,y,z,r)} \vol(B(x,y,z,\bar w,\bar h,\bar l,r)\cap B_{gt})$.
We calculate $h_1$ and $h_2$ as the smallest possible intersection between $B_i$ and $B_{gt}$ along $y$ axis given height $\underline h$ and $\bar h$, respectively.
We then prove the lower bound of their intersection on the $x-z$ plane. We leverage the fact that $\vol(S\cap S_{gt}) = \vol(S) - \vol(S\backslash S_{gt})$ and upper bound the volume of $S\backslash S_{gt}$ by considering the convex hull that contains all possible bounding boxes with bounded $(x,z,r)$.
The full proof is in \Cref{append:proof-iou}.
\end{proof}

We illustrate the computing procedures for both detection and IoU certification in \Cref{append:methods}.

\subsection{Instantiating Certification for Rotation and Shifting}

In this section, we demonstrate how our certification framework works for concrete transformations.
Specifically, we discuss two of the most common transformations for vehicles---rotation and shifting. 
For rotation transformation, we consider a vehicle rotating around the vertical axis based on bounded angle $z$ within a radius $r$, i.e., $z\in [-r, r]$.
For shifting transformation, we consider a vehicle moving along the road based on bounded distance $z\in [a,a+2r]$ where $a$ is the original distance.

We instantiate \Cref{thm:detection} on certifying detection and \Cref{theorem:iou} on certifying IoU against both transformations by computing their interpolation errors $M_x$ and $M_p$ as defined in \Cref{eq:Mp}. We choose $\{\alpha_i\}_{i=1}^K = \{\frac{2i-K}{K}r\}_{i=1}^K$ according to \Cref{lem:upper-bound-mx-mp} and compute the  interpolation errors $M_x$ and $M_p$ for both transformations. 
We then leverage $M_x$ and $M_p$ to derive the lower bound for the detecting confidence score and the IoU regarding the ground-truth bounding box based on \Cref{thm:detection,theorem:iou}.

Although the certification procedure can be time-consuming due to space partitioning, the certification cost usually happens before deployment~(i.e., pre-deployment verification). After the model is deployed, the inference of smoothed inference is  efficient~\cite{cohen2019certified}.
It is an active field to further reduce the inference cost~\cite{horvath2022boosting}, and our framework can seamlessly integrate these advances.

\newpage
\section{Experimental Evaluation}
\vspace{-2mm}
\label{sec:exp}

In this section, we first construct a benchmark for evaluating certified robustness, then systematically evaluate our certification framework \name on several state-of-the-art MSFs.

\textbf{Dataset.} 
There is no established benchmark for certified robustness evaluation for multi-sensor fusion systems to our knowledge.
Hence, we construct a diverse dataset leveraging the CARLA simulator~\cite{Dosovitskiy17}.
We consider two types of transformation: 1) Rotation transformation, which is common in the real world since the relative orientation of the car in front of the ego vehicle frequently changes. 
2) Shifting transformation, which simulates the scenario where the distance between the front and the ego vehicle changes drastically within a short time.

We provide details of training, testing, and certification data construction as below. 

\vspace{-8pt}
\begin{itemize}[noitemsep,leftmargin=*]
    \item \textbf{Training and testing data.} We generate our KITTI-CARLA dataset \cite{deschaud2021kitticarla} with $5,000$ frames in CARLA Town01 with $50$ pedestrians and $100$ vehicles randomly spawned, in which $3,500$ frames are used for training and $1,500$ frames are used for testing. 
    \item \textbf{Rotation certification data.} We spawn our ego vehicle at 15 spawn points randomly chosen in CARLA Town01, and we then spawn a leading vehicle in front of the ego vehicle within rotation interval $[-30^\circ,30^\circ]$. To study the effect of car color and surrounding objects on the rotation robustness, we collect our rotation certification data with 3 different colors of the leading vehicle in 4 different settings (combination of with or without buildings + with or without pedestrians),
    which is summarized in \Cref{tab:certification data} in \Cref{appendix:dataset}. 
    \item \textbf{Shifting certification data.} We spawn our ego vehicle at the same 15 spawn points mentioned above and then spawn a leading vehicle facing forward in front of the ego vehicle. 
    We choose $[10, 15]$ for the shifting intervals, since according to empirical experiments there is a performance difference among different models at a distance of around 11 meters from the ego vehicle.
    Similar to the rotation certification data, we use the same environment settings to study the effect of vehicle color, buildings, and pedestrians.
\end{itemize}

\vspace{-5pt}
In total, in our benchmark dataset, the certification data contains 62 scenarios for rotation and 62 scenarios for shifting.
We set the size of the image input to $64\times87$ following the standard setting.

\textbf{Models.} We choose two fusion models based on image and point clouds, which are highly ranked on the KITTI leaderboard: FocalsConv \cite{chen2022focal} and CLOCs \cite{pang2020clocs}. FocalsConv (Voxel R-CNN (Car) + multimodal) achieves 85.22\% 3D Average Precision (AP) on the moderate KITTI Car detection task and 100\% 3D AP on our KITTI-CARLA dataset. CLOCs (Faster RCNN \cite{ren2015faster} + SECOND \cite{yan2018second}) achieves 80.67\% 3D AP on moderate KITTI Car detection task and 100\% 3D AP on our KITTI-CARLA dataset.

To compare the performance between fusion models and single-modal models, we select a camera-based model--MonoCon \cite{liu2022learning}, and a LiDAR-based model--SECOND \cite{yan2018second}, which achieve 19.03\% and 78.43\% 3D AP respectively on the moderate KITTI Car detection task and 100\% on our KITTI-CARLA moderate car detection task. 

\textbf{Metrics.} We consider two metrics: detection rate and IoU. In detection certification, attackers aim to reduce the object detection confidence score to fool the detectors to detect nothing. We aim to certify the lower bound of the detection rate under a detection threshold, where \textbf{Det@80} means the ratio of detected bounding boxes with confidence score larger than 0.8. 
In IoU certification, we aim to lower bound the IoU between the detected bounding box and the ground truth bounding box when attackers are allowed to attack the IoU in a transformation space. As for the notation in all tables, \textbf{AP@50} means the ratio of detected bounding boxes whose IoU with the ground truth bounding boxes is larger than 0.5. In \Cref{fig:rotation-bounds-th} and \Cref{fig:shift-bounds-th}, we show corresponding results by choosing different detection or IoU thresholds.

\textbf{Certification Details.} To make the models adapt with Gaussian noise smoothed data, we train two sets of models with Gaussian augmentation~\cite{cohen2019certified} using noise variance $\sigma=0.25$ and $\sigma=0.5$.
For the ease of robustness certification,
for rotation certification, we use models trained with $\sigma=0.25$ to construct smoothed models;
for shifting certification, we use models trained with $\sigma=0.5$.
Note that our framework allows using different $\sigma$ and sample strategies for image and point cloud data.



\begin{table*}[!t]
  \fontsize{7}{7}\selectfont
  \centering
  \caption{Certified and empirical robustness of different models against different semantic transformations. 
  Each row represents the corresponding model and attack radius. 
  ``Benign'', ``Adv (Vanilla)'', ``Adv (Smoothed)'', and ``Certification'' stand for benign performance, vanilla models' performance under attacks, smoothed models' performance under attacks, and certified lower bound of performance under bounded transformations. 
  \textbf{Det@80} and \textbf{AP@50} mean that we use 0.8 and 0.5 as the thresholds of confidence score and IoU score.
  Results under other thresholds are in \Cref{subsubsec: detailed-experiments}.}
  \begin{subtable}[]{\linewidth}\centering
  \caption{Rotation}
  \vspace{-0.5em}
  \resizebox{0.85\linewidth}{!}{
  \begin{tabular}{c|c|c|cc|cc|cc|cc}
      \hline
      \multirow{2}{*}{Model} & Input & \multirow{2}{*}{Attack Radius} & \multicolumn{2}{c|}{Benign} & \multicolumn{2}{c|}{Adv (Vanilla)} & \multicolumn{2}{c|}{Adv (Smoothed)} & \multicolumn{2}{c}{Certification}\\
      & Modality & & \textbf{Det@80} & \textbf{AP@50} & \textbf{Det@80} & \textbf{AP@50} & \textbf{Det@80} & \textbf{AP@50} & \textbf{Det@80} & \textbf{AP@50} \\\hline\hline
      \multirowcell{5}{MonoCon\\\cite{liu2022learning}} & \multirowcell{5}{Image} & $|r|\leq10^\circ$ & \multirow{5}{*}{100.00\%} & \multirow{5}{*}{100.00\%} & 58.06\% & 56.45\% & 80.65\% & 82.26\% & 75.81\% & 0.00\% \\
                                                      & & $|r|\leq15^\circ$ & & & 58.06\% & 54.84\% & 80.65\% & 82.26\% & 75.81\% & 0.00\% \\
                                                      & & $|r|\leq20^\circ$ & & & 58.06\% & 53.23\% & 80.65\% & 74.19\% & 75.81\% & 0.00\% \\
                                                      & & $|r|\leq25^\circ$ & & & 45.16\% & 35.48\% & 80.65\% & 16.13\% & 75.81\% & 0.00\% \\
                                                      & & $|r|\leq30^\circ$ & & & 32.26\% & 0.00\% & 80.65\% & 3.23\% & 75.81\% & 0.00\% \\\hline
      \multirowcell{5}{SECOND\\\cite{yan2018second}} & \multirowcell{5}{Point Cloud} &  $|r|\leq10^\circ$ & \multirow{5}{*}{100.00\%} & \multirow{5}{*}{100.00\%} & 19.35\% & 96.77\% & 0.00\% & 100.00\% & 0.00\% & 100.00\% \\
                                                   & & $|r|\leq15^\circ$ & & & 19.35\% & 96.77\% & 0.00\% & 100.00\% & 0.00\% & 100.00\% \\
                                                   & & $|r|\leq20^\circ$ & & & 19.35\% & 96.77\% & 0.00\% & 100.00\% & 0.00\% & 100.00\% \\
                                                   & & $|r|\leq25^\circ$ & & & 1.61\% & 83.87\% & 0.00\% & 96.77\% & 0.00\% & 0.00\% \\
                                                   & & $|r|\leq30^\circ$ & & & 1.61\% & 51.61\% & 0.00\% & 54.84\% & 0.00\% & 0.00\% \\\hline
      \multirowcell{5}{CLOCs\\\cite{pang2020clocs}} & \multirowcell{5}{Image + \\ Point Cloud} & $|r|\leq10^\circ$ & \multirow{5}{*}{100.00\%} & \multirow{5}{*}{100.00\%} & 100.00\% & 90.32\% & 88.71\% & 100.00\% & 88.71\% & 100.00\% \\
                                                  & & $|r|\leq15^\circ$ & & & 100.00\% & 90.32\% & 66.13\% & 98.39\% & 66.13\% & 87.10\% \\
                                                  & & $|r|\leq20^\circ$ & & & 100.00\% & 88.71\% & 50.00\% & 98.39\% & 50.00\% & 69.35\% \\
                                                  & & $|r|\leq25^\circ$ & & & 20.97\% & 87.10\% & 50.00\% & 98.39\% & 50.00\% & 67.74\% \\
                                                  & & $|r|\leq30^\circ$ & & & 3.23\% & 80.65\% & 50.00\% & 98.39\% & 50.00\% & 53.23\% \\\hline
      \multirowcell{5}{FocalsConv\\\cite{chen2022focal}} & \multirowcell{5}{Image +\\ Point Cloud} & $|r|\leq10^\circ$ & \multirow{5}{*}{100.00\%} & \multirow{5}{*}{100.00\%} & 100.00\% & 96.77\% & 100.00\% & 100.00\% & 100.00\% & 0.00\% \\
                                                       & & $|r|\leq15^\circ$ & & & 100.00\% & 0.00\% & 100.00\% & 0.00\% & 100.00\% & 0.00\% \\
                                                       & & $|r|\leq20^\circ$ & & & 100.00\% & 0.00\% & 100.00\% & 0.00\% & 100.00\% & 0.00\% \\
                                                       & & $|r|\leq25^\circ$ & & & 100.00\% & 0.00\% & 100.00\% & 0.00\% & 100.00\% & 0.00\% \\
                                                       & & $|r|\leq30^\circ$ & & & 98.39\% & 0.00\% & 100.00\% & 0.00\% & 100.00\% & 0.00\% \\\hline
      \end{tabular}
  }
  \label{tab:rotation overview}
  \end{subtable}
  \begin{subtable}[]{\linewidth}\centering
  \vspace{1mm}
  \caption{Shifting}
  \resizebox{0.85\linewidth}{!}{
  \begin{tabular}{c|c|c|cc|cc|cc|cc}
      \hline
      \multirow{2}{*}{Model} & Input & \multirow{2}{*}{Attack Radius} & \multicolumn{2}{c|}{Benign} & \multicolumn{2}{c|}{Adv (Vanilla)} & \multicolumn{2}{c|}{Adv (Smoothed)} & \multicolumn{2}{c}{Certification}\\
      & Modality & & \textbf{Det@80} & \textbf{AP@50} & \textbf{Det@80} & \textbf{AP@50} & \textbf{Det@80} & \textbf{AP@50} & \textbf{Det@80} & \textbf{AP@50} \\\hline\hline
      \multirowcell{5}{MonoCon\\\cite{liu2022learning}} & \multirowcell{5}{Image} & $10\leq z\leq11$ & \multirow{5}{*}{100.00\%} & \multirow{5}{*}{100.00\%} & 66.13\% & 77.42\% & 66.13\% & 77.42\% & 64.52\% & 41.94\% \\
                                                      & & $10\leq z\leq12$ & & & 62.90\% & 74.19\% & 62.90\% & 74.19\% & 61.29\% & 1.61\% \\
                                                      & & $10\leq z\leq13$ & & & 56.45\% & 72.58\% & 56.45\% & 72.58\% & 51.61\% & 0.00\% \\
                                                      & & $10\leq z\leq14$ & & & 46.77\% & 33.87\% & 46.77\% & 33.87\% & 41.94\% & 0.00\% \\
                                                      & & $10\leq z\leq15$ & & & 27.42\% & 1.61\% & 27.42\% & 1.61\% & 27.42\% & 0.00\% \\\hline
      \multirowcell{5}{SECOND\\\cite{yan2018second}} & \multirowcell{5}{Point Cloud} & $10\leq z\leq11$ & \multirow{5}{*}{100.00\%} & \multirow{5}{*}{100.00\%} & 0.00\% & 93.55\% & 0.00\% & 100.00\% & 0.00\% & 0.00\% \\
                                                   & & $10\leq z\leq12$ & & & 0.00\% & 80.65\% & 0.00\% & 80.65\% & 0.00\% & 0.00\% \\
                                                   & & $10\leq z\leq13$ & & & 0.00\% & 80.65\% & 0.00\% & 80.65\% & 0.00\% & 0.00\% \\
                                                   & & $10\leq z\leq14$ & & & 0.00\% & 80.65\% & 0.00\% & 80.65\% & 0.00\% & 0.00\% \\
                                                   & & $10\leq z\leq15$ & & & 0.00\% & 80.65\% & 0.00\% & 80.65\% & 0.00\% & 0.00\% \\\hline
      \multirowcell{5}{CLOCs\\\cite{pang2020clocs}} & \multirowcell{5}{Image + \\ Point Cloud} & $10\leq z\leq11$ & \multirow{5}{*}{100.00\%} & \multirow{5}{*}{100.00\%} & 100.00\% & 93.55\% & 93.55\% & 100.00\% & 67.74\% & 79.03\% \\
                                                  & & $10\leq z\leq12$ & & & 100.00\% & 80.65\% & 93.55\% & 80.65\% & 66.13\% & 51.61\% \\
                                                  & & $10\leq z\leq13$ & & & 85.48\% & 80.65\% & 88.71\% & 80.65\% & 64.52\% & 48.39\% \\
                                                  & & $10\leq z\leq14$ & & & 64.52\% & 80.65\% & 85.48\% & 80.65\% & 62.90\% & 48.39\% \\
                                                  & & $10\leq z\leq15$ & & & 64.52\% & 80.65\% & 83.87\% & 80.65\% & 61.29\% & 48.39\% \\\hline
      \multirowcell{5}{FocalsConv\\\cite{chen2022focal}} & \multirowcell{5}{Image + \\ Point Cloud} & $10\leq z\leq11$ & \multirow{5}{*}{100.00\%} & \multirow{5}{*}{100.00\%} & 96.77\% & 0.00\% & 96.77\% & 100.00\% & 54.84\% & 0.00\% \\
                                                       & & $10\leq z\leq12$ & & & 96.77\% & 0.00\% & 96.77\% & 100.00\% & 4.84\% & 0.00\% \\
                                                       & & $10\leq z\leq13$ & & & 0.00\% & 0.00\% & 0.00\% & 82.26\% & 0.00\% & 0.00\% \\
                                                       & & $10\leq z\leq14$ & & & 0.00\% & 0.00\% & 0.00\% & 14.52\% & 0.00\% & 0.00\% \\
                                                       & & $10\leq z\leq15$ & & & 0.00\% & 0.00\% & 0.00\% & 8.06\% & 0.00\% & 0.00\% \\\hline
      \end{tabular}
  }
  \label{tab:shift overview}
  \end{subtable}
  \label{tab:overview}
  \vspace{-2.5em}
\end{table*}


\subsection{Certification against Rotation Transformation}
\label{subsec:rotation certification}

In this section, we present the evaluations for the certified and empirical results of our framework \name against rotation transformation. 
In terms of certification, we use small intervals of rotation angles $0.1^\circ$ and samples 1000 times with $\sigma_x=0.25, \sigma_p=0.25$ Gaussian noises for each interval (in total $600\times 1000$ Gaussian noises with certification confidence 95\%) to estimate $h_q$~(see definition in \Cref{subsec:certification-construction}).
Empirically, we split the rotation intervals into small $0.01^\circ$  and use the models' worst empirical performance in these samples as the empirical robustness against rotation attacks, which is equivalent to the PGD attack with step $0.01^\circ$.
We set the overall confidence of certification to be $95\%$, aligning with the setting in \cite{kang2022certifying}.

{
\renewcommand{\thesubfigure}{\alph{subfigure}}

\begin{figure*}[!t]

\newlength{\utilheightc}
\settoheight{\utilheightc}{\includegraphics[width=.160\linewidth]{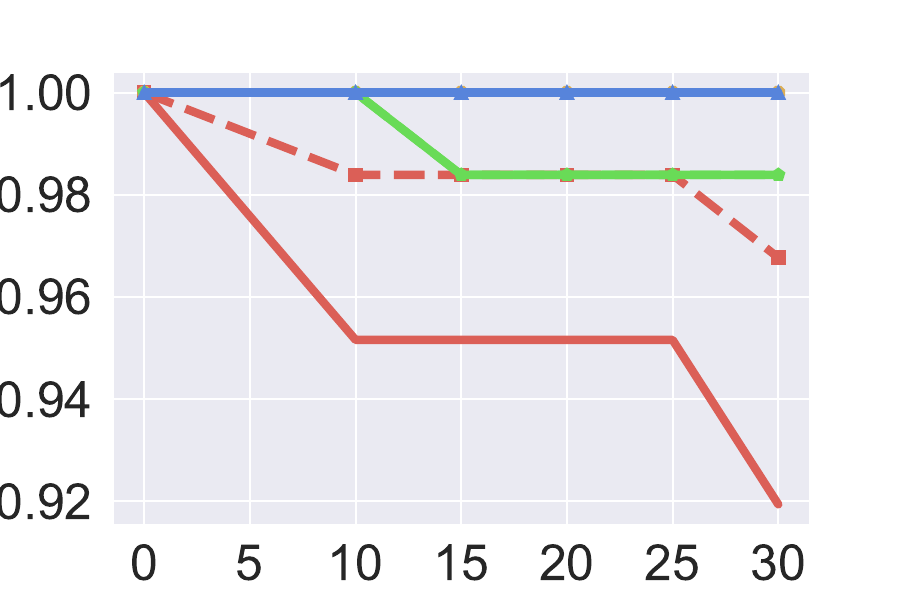}}%

\newlength{\utilheightd}
\settoheight{\utilheightd}{\includegraphics[width=.165\linewidth]{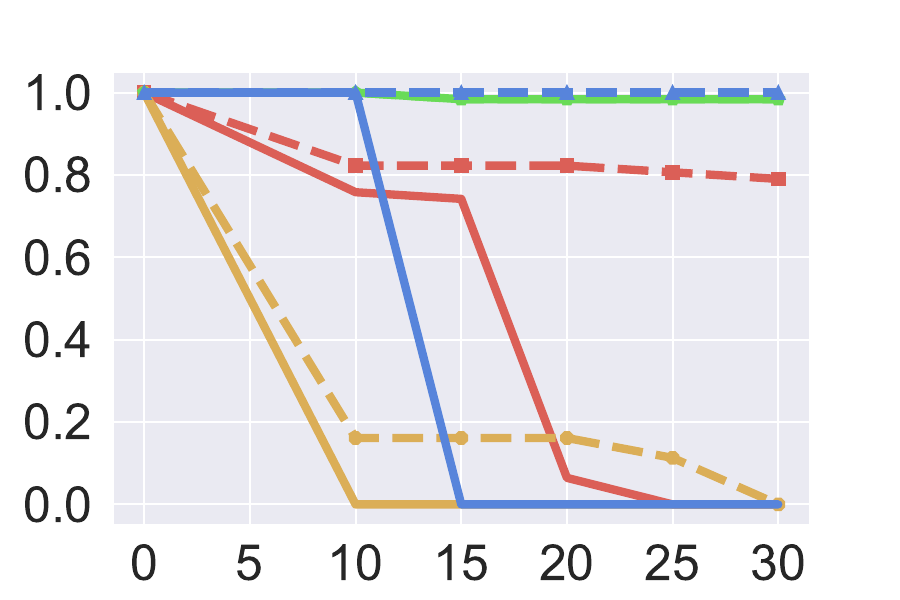}}%







\newlength{\legendheightb}
\setlength{\legendheightb}{0.48\utilheightc}%

\newcommand{\rownamec}[1]
{\rotatebox{90}{\makebox[\utilheightc][c]{\tiny #1}}}

\newcommand{\rownamed}[1]
{\rotatebox{90}{\makebox[\utilheightd][c]{\tiny #1}}}

\centering

{
\renewcommand{\tabcolsep}{10pt}

\begin{subtable}[th]{\linewidth}
\begin{tabular}{l}
\includegraphics[height=1.0\legendheightb]{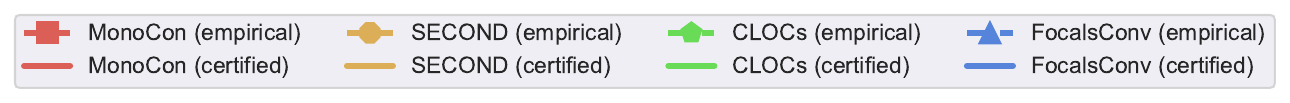}
\end{tabular}
\end{subtable}
\begin{subtable}{\linewidth}
\centering
\resizebox{\linewidth}{!}{%
\begin{tabular}{@{}p{3mm}@{}c@{}c@{}c@{}c@{}c@{}}
        & \makecell{\tiny{$|r|\leq 10^\circ$}}
        & \makecell{\tiny{$|r|\leq 15^\circ$}}
        & \makecell{\tiny{$|r|\leq 20^\circ$}}
        & \makecell{\tiny{$|r|\leq 25^\circ$}}
        & \makecell{\tiny{$|r|\leq 30^\circ$}}
        \vspace{-2pt}\\
\rownamec{\makecell{Detection}}&
\includegraphics[height=\utilheightc]{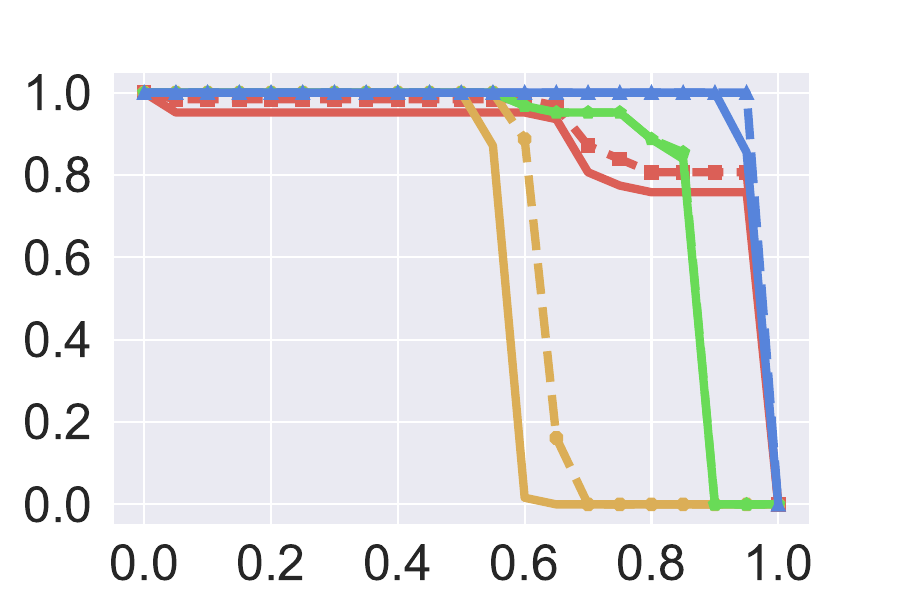}&
\includegraphics[height=\utilheightc]{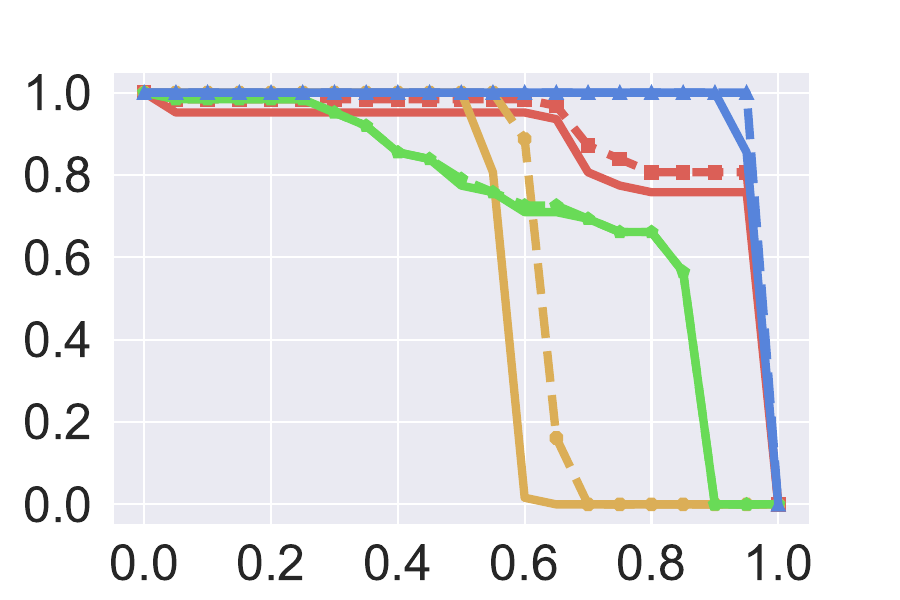}&
\includegraphics[height=\utilheightc]{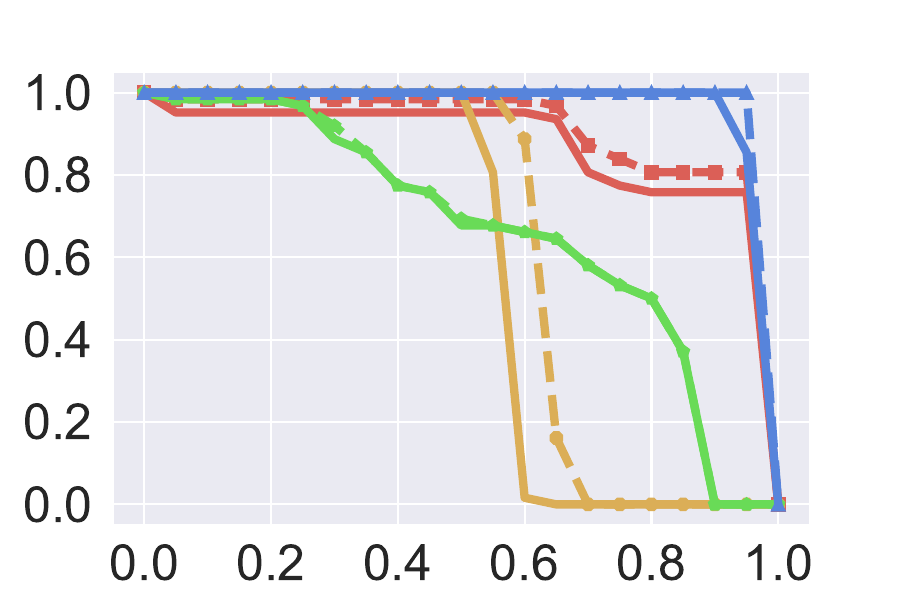}&
\includegraphics[height=\utilheightc]{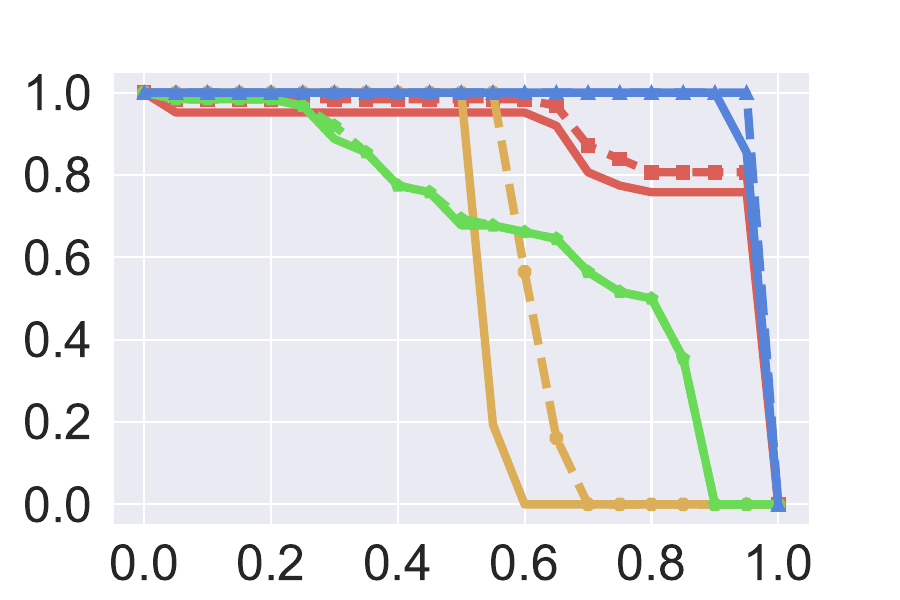}&
\includegraphics[height=\utilheightc]{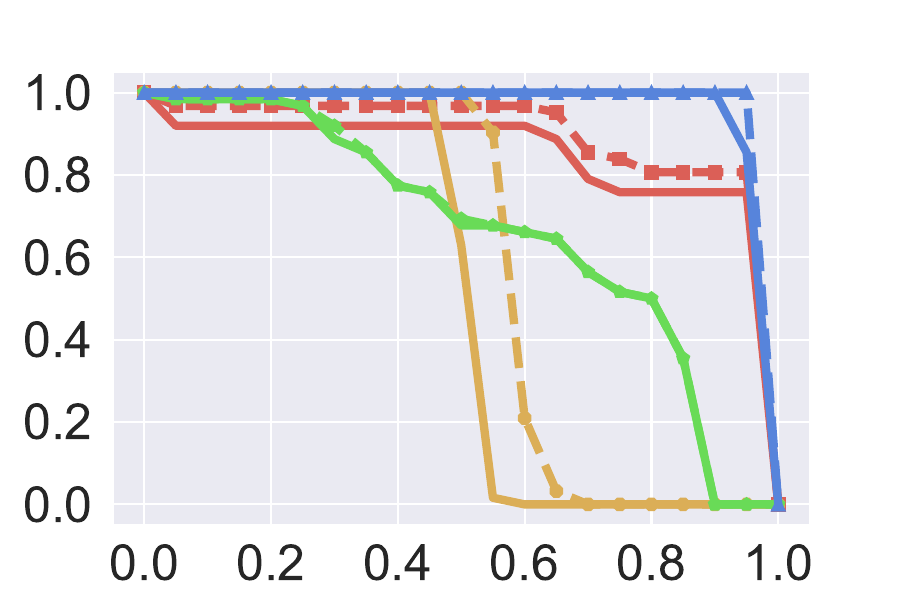}\\[-1.2ex]
        & \makecell{\tiny{$\text{TH}_\text{conf}$}}
        & \makecell{\tiny{$\text{TH}_\text{conf}$}}
        & \makecell{\tiny{$\text{TH}_\text{conf}$}}
        & \makecell{\tiny{$\text{TH}_\text{conf}$}}
        & \makecell{\tiny{$\text{TH}_\text{conf}$}}\\
\rownamec{\makecell{IoU}}&
\includegraphics[height=\utilheightc]{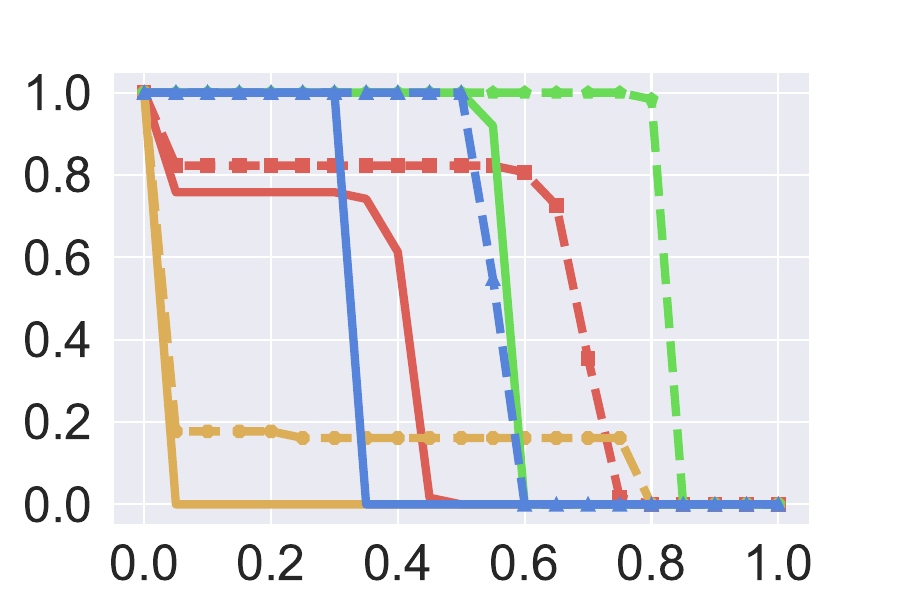}&
\includegraphics[height=\utilheightc]{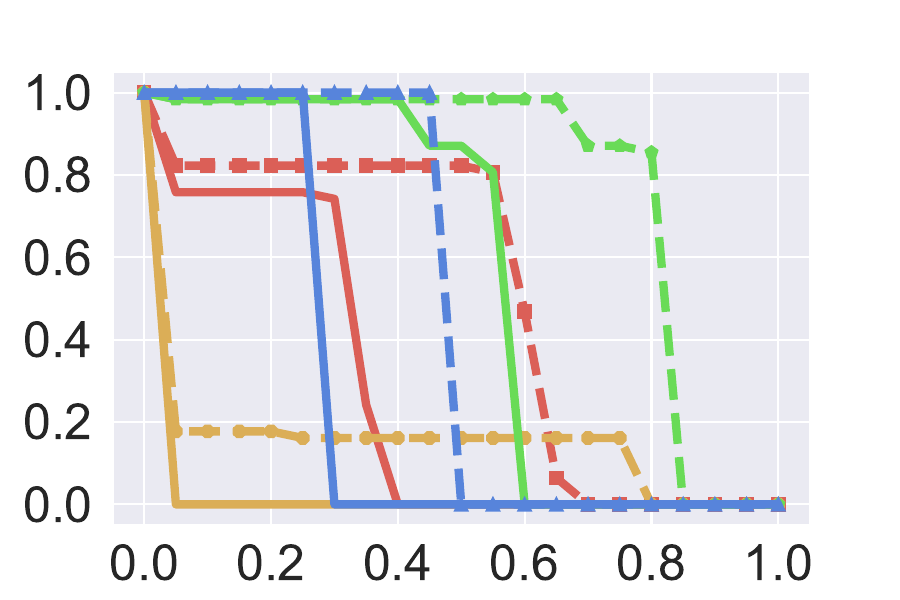}&
\includegraphics[height=\utilheightc]{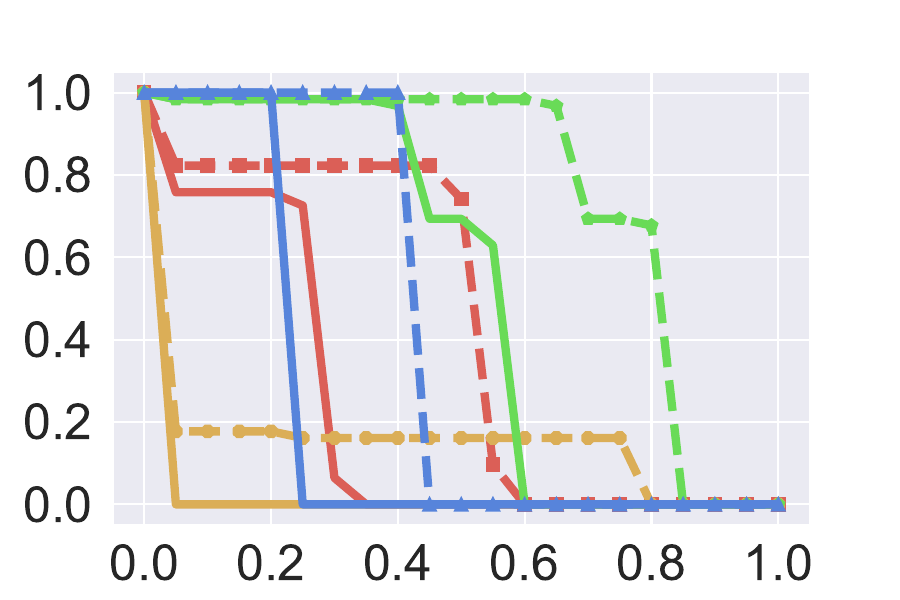}&
\includegraphics[height=\utilheightc]{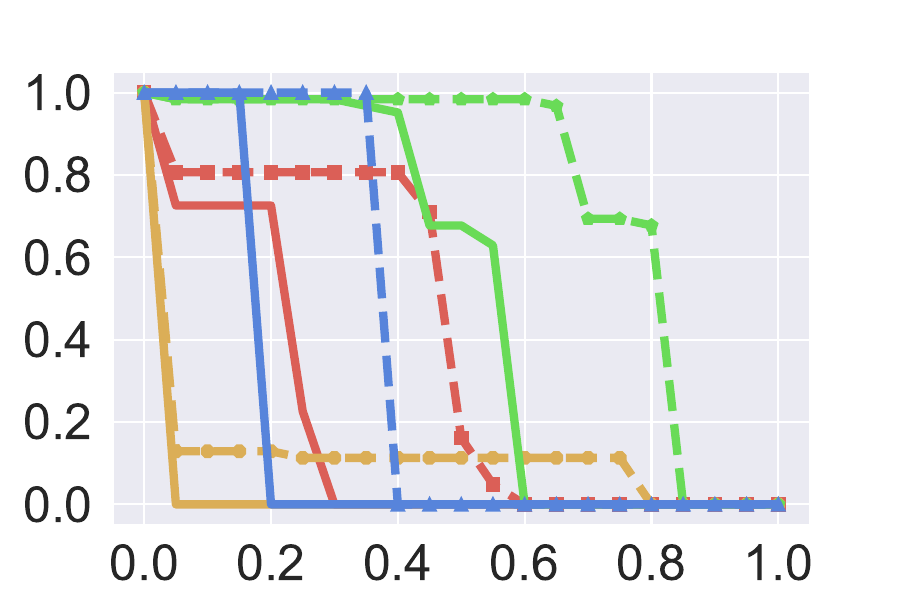}&
\includegraphics[height=\utilheightc]{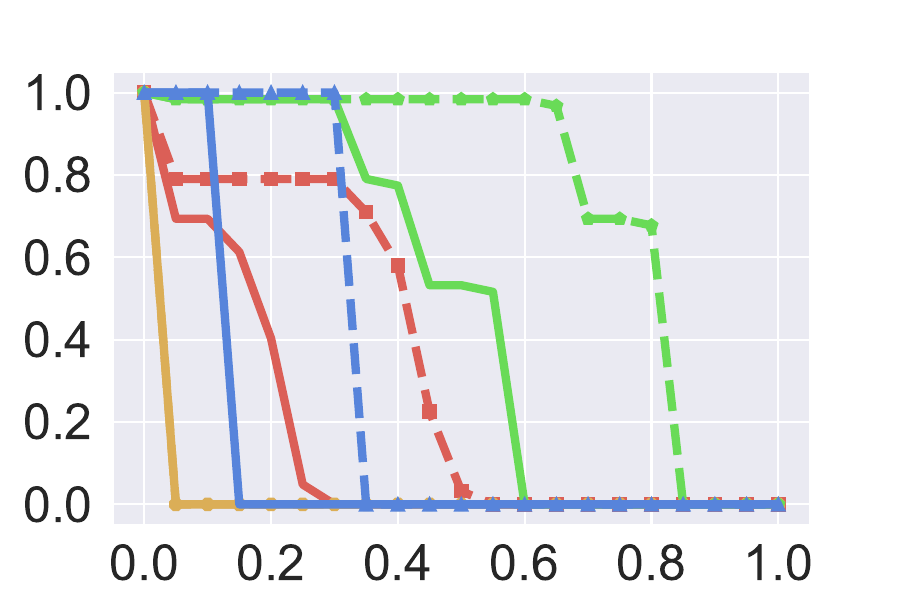}\\[-1.2ex]
        & \makecell{\tiny{$\text{TH}_\text{IoU}$}}
        & \makecell{\tiny{$\text{TH}_\text{IoU}$}}
        & \makecell{\tiny{$\text{TH}_\text{IoU}$}}
        & \makecell{\tiny{$\text{TH}_\text{IoU}$}}
        & \makecell{\tiny{$\text{TH}_\text{IoU}$}}
\end{tabular}
}
\end{subtable}

}

\caption{Certified and empirical robustness on detection rate  and IoU  against rotation transformation (smoothing $\sigma=0.25$) under different thresholds. Solid lines represent the certified bounds, and dashed lines show the empirical performance under PGD attacks. $x$-axis represents the threshold for confidence score ($\text{TH}_\text{conf}$) and IoU score ($\text{TH}_\text{IoU}$), and $y$-axis represents the ratio of detection whose confidence / IoU score is larger than the confidence / IoU threshold. 
}%
\label{fig:rotation-bounds-th}
\end{figure*}
}

\Cref{tab:rotation overview} shows the results in the $[-30^\circ, 30^\circ]$ rotation interval. 
We can see that in terms of detection ability, the robustness order is $\text{FocalsConv} > \text{MonoCon} > \text{CLOCs} > \text{SECOND}$, according to both the empirical and certification results. 
Hence, FocalsConv and MonoCon may be more likely to predict the existence of the object when it exists. Furthermore, we observe that multi-sensor fusion models have better detection robustness compared with single-modal models under the same threshold (e.g., $\text{FocalsConv}>\text{MonoCon},\text{CLOCs}>\text{SECOND}$).
In addition, we find that our certification is pretty tight in many cases. In particular, row ``Certification" serves as the lower bound of row ``Adv (Smoothed)", and in most cases they are very close, indicating the tight robustness certification. 

Now we study the IoU metric since we expect that models can predict not only with high confidence but also precise bounding boxes. By comparing the empirical and certified results in IoU metric, CLOCs outperforms all other models. It is easy to understand that CLOCs outperforms single-modal models since it combines the information from both images and point clouds. However, FocalsConv is not as robust as CLOCs even though it is a also camera-LiDAR fusion model, which could be due to the fusion mechanism, and thus more robust fusion algorithms will help improve the model prediction robustness. 

We also present some interesting findings of rotation transformation in \Cref{subsubsec: detailed-experiments} (e.g., the effect of threshold, the effect of attack radius) and possible reasons for detection failure cases in rotation transformation in \Cref{subsubsec: examples}.


\vspace{-0.5em}
\subsection{Certification against Shifting Transformation}
\label{subsec:shifting certification}





Here we present the evaluations for the certified and empirical results of our framework \name against shifting transformation.
In terms of the robustness certification, we use small shifting intervals of size 0.01 and samples 1000 times with $\sigma_x=0.5, \sigma_p=0.5$ Gaussian noises in each interval to estimate $h_q$~(see definition in \Cref{subsec:certification-construction}).
In the empirical experiments, we divide the shifting intervals into small intervals of size 0.001 and use the worst empirical performance of the model among these samples as the empirical robustness against PGD attacks. 
We set the overall confidence of certification to be $95\%$ following the standard setting.

\Cref{tab:shift overview} shows the certified and empirical robustness of different models against shifting transformation. The robustness  in terms of both detection ability and IoU is $\text{CLOCs} > \text{SECOND} \approx \text{MonoCon} > \text{FocalsConv}$. We  also notice that SECOND outperforms MonoCon when the distance is larger while they have a similar performance within short distances, which could be due to the accurate estimation of large distances by LiDAR sensors and the lack of depth information in the 2D camera images. However, this does not mean that image data do not have meaningful features because CLOCs is always more robust than SECOND, which could be caused by the fact that more candidates are proposed by 2D detectors (Faster RCNN in our case) which are ignored by the point cloud detectors. On the other hand, there is an interesting finding that FocalsConv performs poorly against shifting transformation. The reason might be that FocalsConv highly depends on the image features, and shifting transformation can attack the image and point cloud spaces at the same time. This implies that the design of the fusion mechanism is also an important factor on the robustness of multi-modal sensor fusion models. 

We also present some interesting findings under shifting transformation in \Cref{subsubsec: detailed-experiments} (e.g., the effects of threshold, the effect of attack radius) and possible reasons for detection failure cases in shifting transformation in \Cref{subsubsec: examples}.

\vspace{-0.5em}
\section{Conclusion}
\vspace{-0.5em}
In this work we provide the first robustness certification framework \name for multi-sensor fusion systems against semantic transformations based on different criteria. Our theoretical certification framework is flexible for different  models and transformations.
Our evaluations show that current fusion models are more robust than single-modal models, and the design of the fusion mechanism is an important factor in improving the robustness against semantic transformations.


\clearpage
\bibliography{neurips_2023}
\bibliographystyle{plain}


\newpage
\appendix


The appendices are organized as follows:
\begin{itemize}
    \item In \Cref{append:assum-eval}, we formally present the finite partition assumption~(from \Cref{subsec:setup}) and provide an empirical verification of the assumption.
    \item In \Cref{append:proofs}, we present the detailed proofs for lemmas and theorems in \Cref{sec:det,sec:iou} of the main paper.
    \item In \Cref{append:methods}, we present some additional details of our two certification strategies: \Cref{subsec:detection cert} for certifying the detection rate, and \Cref{subsec:IoU cert} for certifying the IoU between the detection and the ground truth. We include the detailed algorithm description and the complete pseudocode for each algorithm.  We will make our implementation public upon acceptance. 
    \item In \Cref{append:exp}, we show dataset details (\Cref{appendix:dataset}), detailed experimental evaluation (\Cref{subsubsec: detailed-experiments}), some ablation studies on sample strategies (\Cref{subsubsec: sample-strategy}) and smoothing parameter $\sigma$ (\Cref{subsubsec: smoothing-parameter}), and failure case analysis (\Cref{subsubsec: examples}). 
\end{itemize}

\section{Details of Fine Partition Assumption}
\label{append:assum-eval}

As introduced in \Cref{subsec:setup}, we impose the following assumption for the transformation.
\begin{assumption}
\label{assum}
For given transformation $T = \{T_x, T_p\}$ with parameter space $\gZ \subseteq \sR^m$,
there exists a small threshold $\tau > 0$, 
for any polytope of the parameter space $\Zsub \subseteq \gZ$ whose $\ell_\infty$ diameter is smaller than $\tau$, i.e., $\mathrm{diam}_\infty (\gZ_\mathrm{sub}) < \tau$, when parameters are picked from the subspace, the pairwise $\ell_2$ distance between transformed outputs is upper bounded by maximum pairwise $\ell_2$ distance with extreme points picked as parameters.
Formally,
let $E(\Zsub)$ be the set of extreme points of $\Zsub$, then $\forall \vz_1, \vz_2 \in \Zsub, \vx \in \gX, \vp \in \gP$,
\vspace{-3pt}
\begin{equation}
    \label{eq:assum}
    \small
    \begin{aligned}
        & \|T_x(\vx,\vz_1) - T_x(\vx,\vz_2)\|_2 \le \max_{\vz'_1, \vz'_2 \in E(\gZ_{\mathrm{sub}})} \|T_x(\vx,\vz'_1) - T_x(\vx,\vz'_2)\|_2, \\
        & \|T_p(\vp,\vz_1) - T_p(\vp,\vz_2)\|_2 \le \max_{\vz'_1, \vz'_2 \in E(\gZ_{\mathrm{sub}})} \|T_p(\vp,\vz'_1) - T_p(\vp,\vz'_2)\|_2. \\
    \end{aligned}
\end{equation}
\end{assumption}

\begin{remark} 
    Intuitively, the assumption states that, within a tiny subspace of the parameter space, the displacement incurred by the transformation, when measured by Euclidean distance, is proportional to the magnitude between parameters, so the maximum displacement can be upper bounded by the displacement incurred by choosing extreme points as the transformation parameters.
    Taking the rotation as an example, within a sufficiently small range of rotation angle $[r - \Delta, r + \Delta]$ where $2\Delta < \tau$, the assumption means that, the difference between rotated images $\|T_x(\vx,\delta_1) - T_x(\vx,\delta_2)\|_2$ and point clouds $\|T_p(\vp,\delta_1) - T_p(\vp,\delta_2)\|_2$ is no larger than $\|T_x(\vx,r-\Delta) - T_x(\vx,r+\Delta)\|_2$ and $\|T_p(\vp,r-\Delta) - T_p(\vp,r+\Delta)\|_2$ respectively. 
\end{remark}

While it is hard to prove the \Cref{assum}, we empirically evaluate the \Cref{assum} by plotting the distribution of image $\ell_2$ norm with different interval sizes ($0.001^\circ,0.01^\circ,0.02^\circ,0.03^\circ,0.04^\circ,0.05^\circ$ for rotation and $0.001,0.01,0.02,0.03,0.04,0.05$ for shifting) in randomly selected big intervals ($0.06^\circ$ for rotation and 0.07 for shifting) in \Cref{fig:transformation l2}. 

From \Cref{fig:rotation l2} and \Cref{fig:shift l2}, we can notice that with larger rotation and shifting intervals, the image $\ell_2$ norm becomes larger and larger, and the $\ell_2$ distance between the endpoints of each big interval can bound the $\ell_2$ distance between randomly chosen points in that big interval, which means that the pairwise $\ell_2$ distance picks the maximum value with extreme points when the transformation intervals are sufficiently small, and thus \Cref{assum} is empirically confirmed.

{
\renewcommand{\thesubfigure}{\alph{subfigure}}

\begin{figure*}[ht]

\settoheight{\utilheightc}{\includegraphics[width=.160\linewidth]{rotation_conf_0.2.pdf}}%

\settoheight{\utilheightd}{\includegraphics[width=.165\linewidth]{rotation_IoU_0.3.pdf}}%







\setlength{\legendheightb}{0.48\utilheightc}%

\newcommand{\rownamec}[1]
{\rotatebox{90}{\makebox[\utilheightc][c]{\tiny #1}}}

\newcommand{\rownamed}[1]
{\rotatebox{90}{\makebox[\utilheightd][c]{\tiny #1}}}

\centering

{
\renewcommand{\tabcolsep}{10pt}

\begin{subtable}{\linewidth}
\centering
\resizebox{\linewidth}{!}{
\begin{tabular}{@{}p{3mm}@{}c@{}c@{}c@{}c@{}c@{}c@{}}
\rownamec{\makecell{$\ell_2$ norm}}&
\includegraphics[height=\utilheightc]{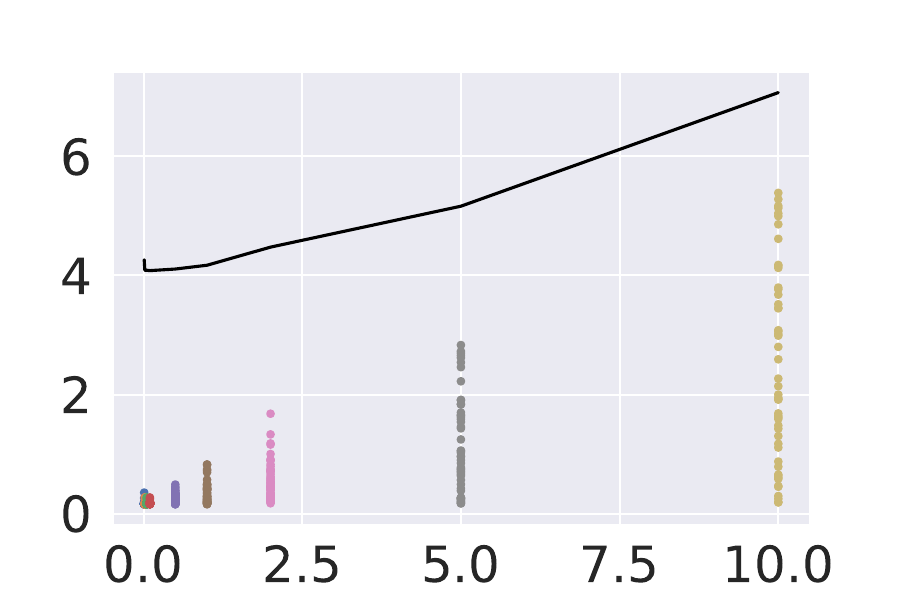}&
\includegraphics[height=\utilheightc]{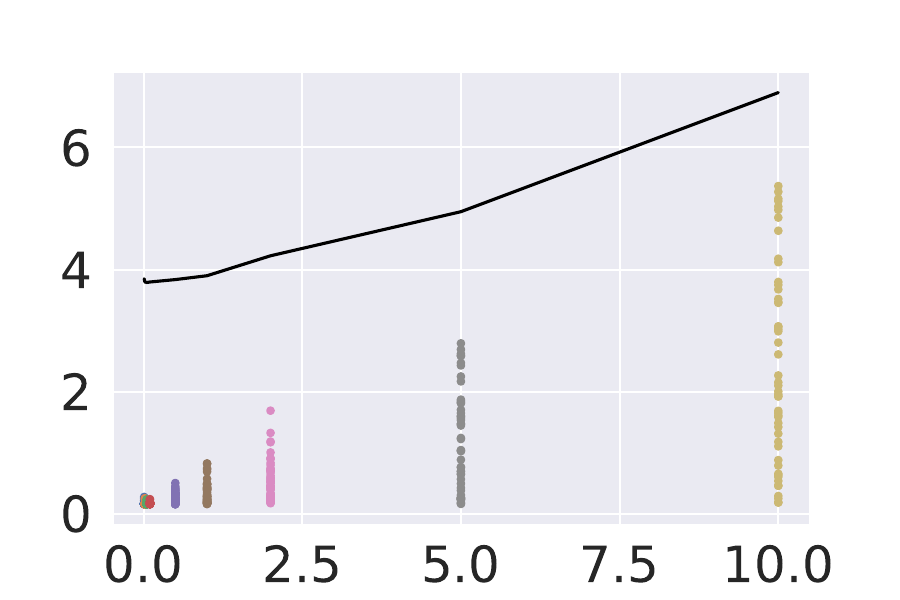}&
\includegraphics[height=\utilheightc]{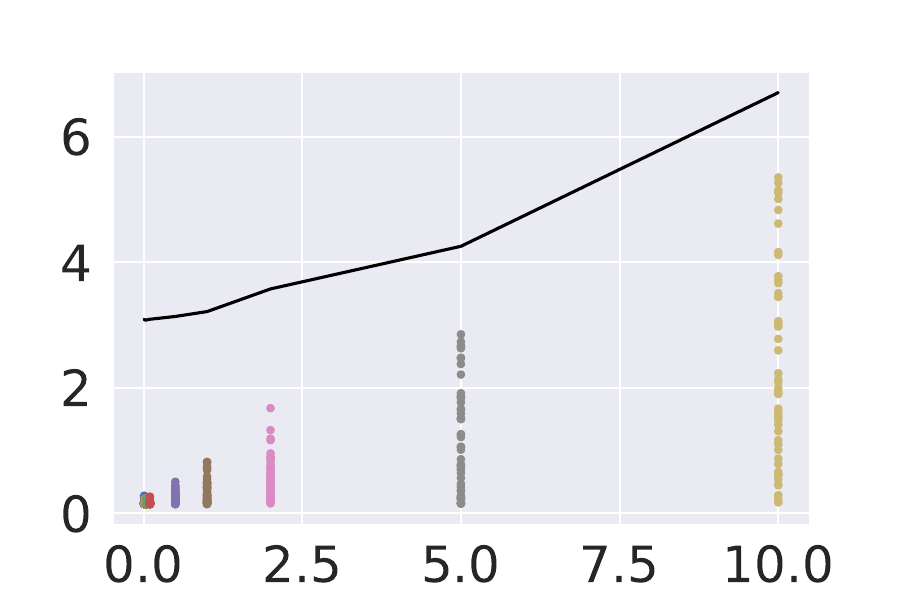}&
\includegraphics[height=\utilheightc]{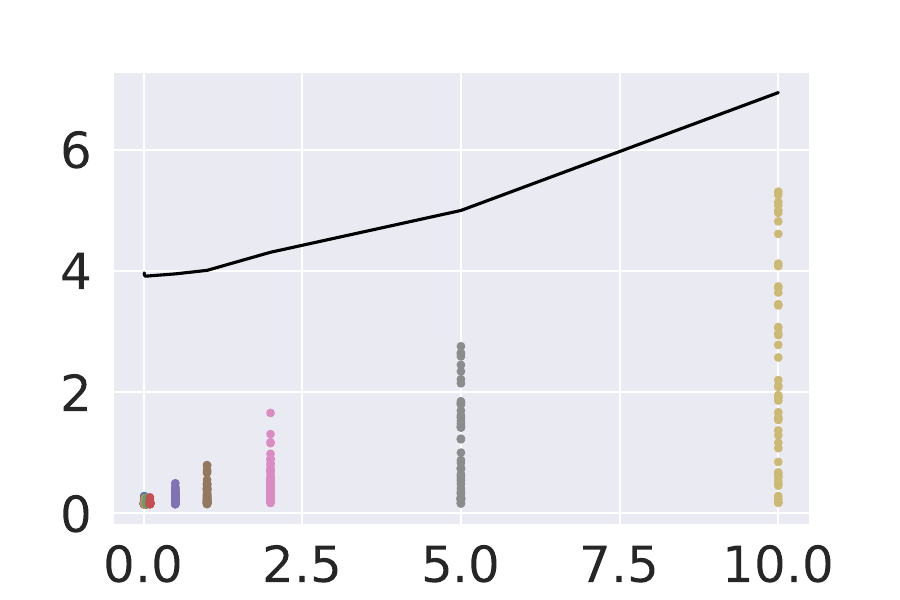}&
\includegraphics[height=\utilheightc]{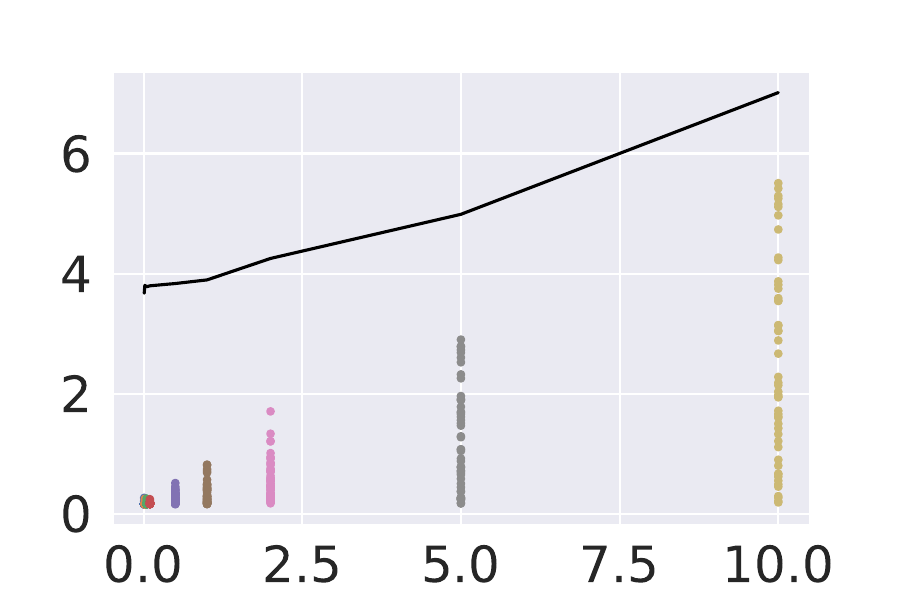}&
\includegraphics[height=\utilheightc]{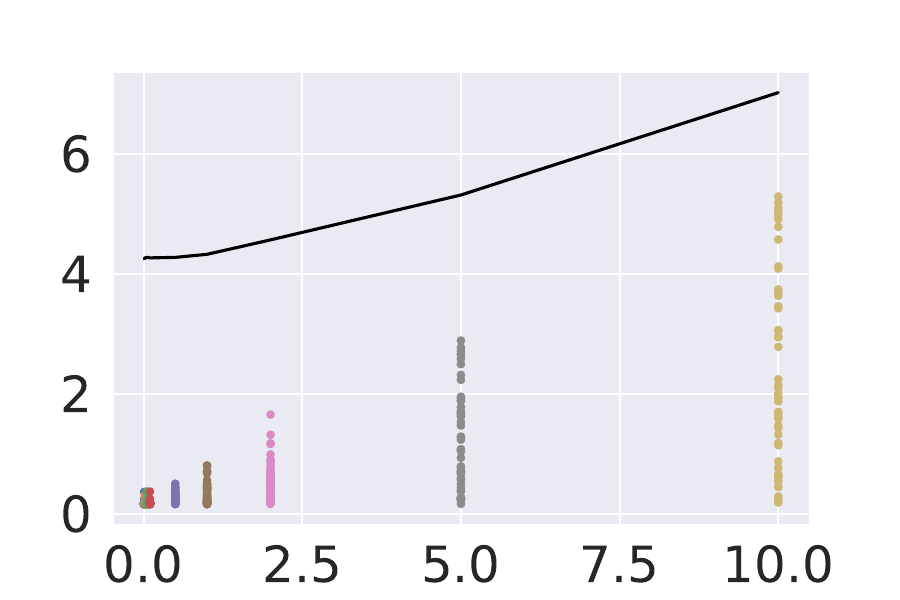}\\[-1.2ex]
        & \makecell{\tiny{Interval size}}
        & \makecell{\tiny{Interval size}}
        & \makecell{\tiny{Interval size}}
        & \makecell{\tiny{Interval size}}
        & \makecell{\tiny{Interval size}}
        & \makecell{\tiny{Interval size}}
\end{tabular}
}
\caption{Image $\ell_2$ norm distribution in rotation transformation.}\label{fig:rotation l2}
\end{subtable}

\begin{subtable}{\linewidth}
\centering
\resizebox{\linewidth}{!}{
\begin{tabular}{@{}p{3mm}@{}c@{}c@{}c@{}c@{}c@{}c@{}}
\rownamec{\makecell{$\ell_2$ norm}}&
\includegraphics[height=\utilheightc]{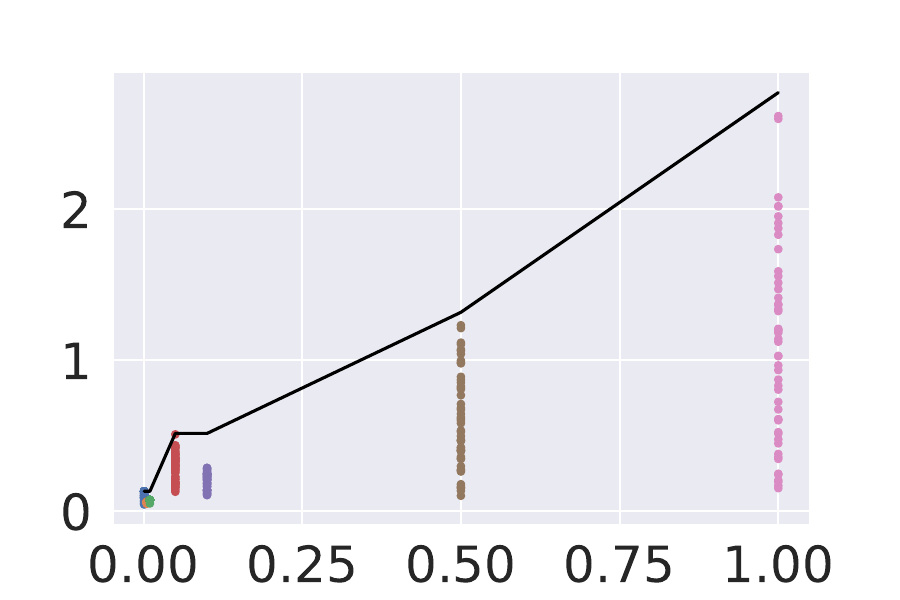}&
\includegraphics[height=\utilheightc]{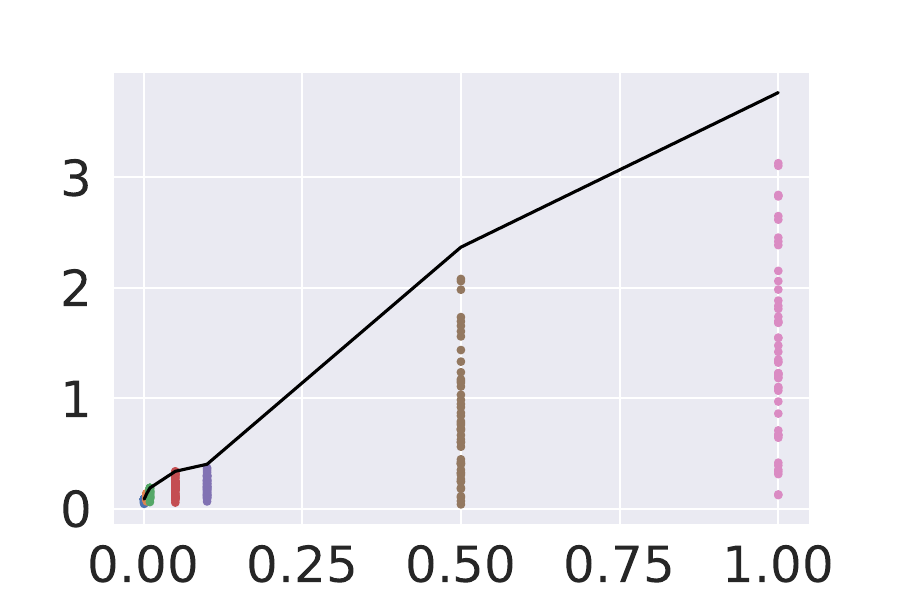}&
\includegraphics[height=\utilheightc]{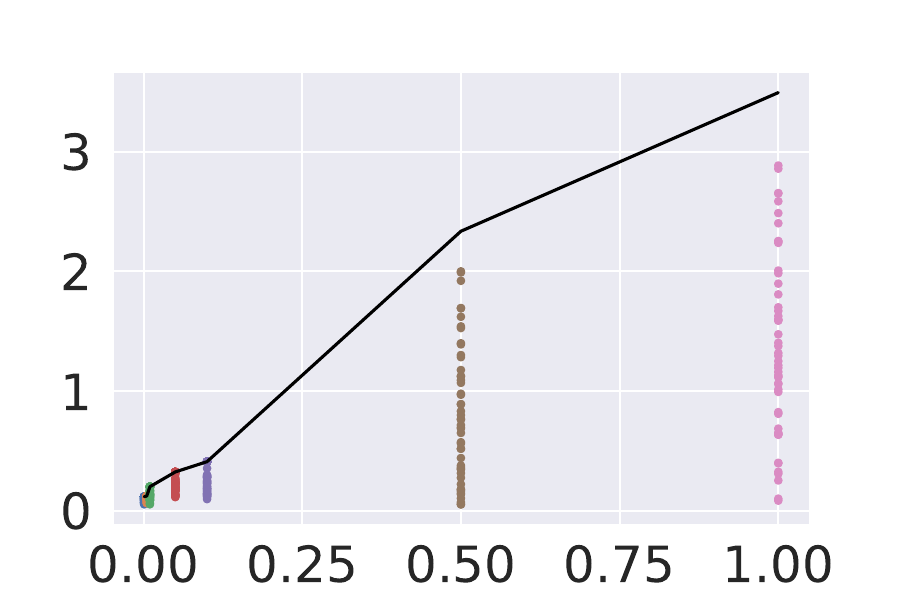}&
\includegraphics[height=\utilheightc]{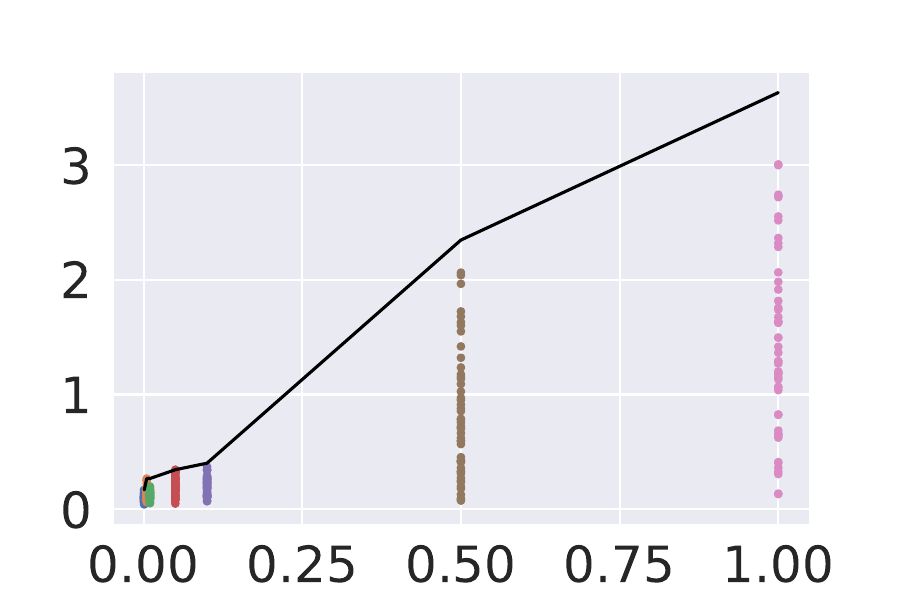}&
\includegraphics[height=\utilheightc]{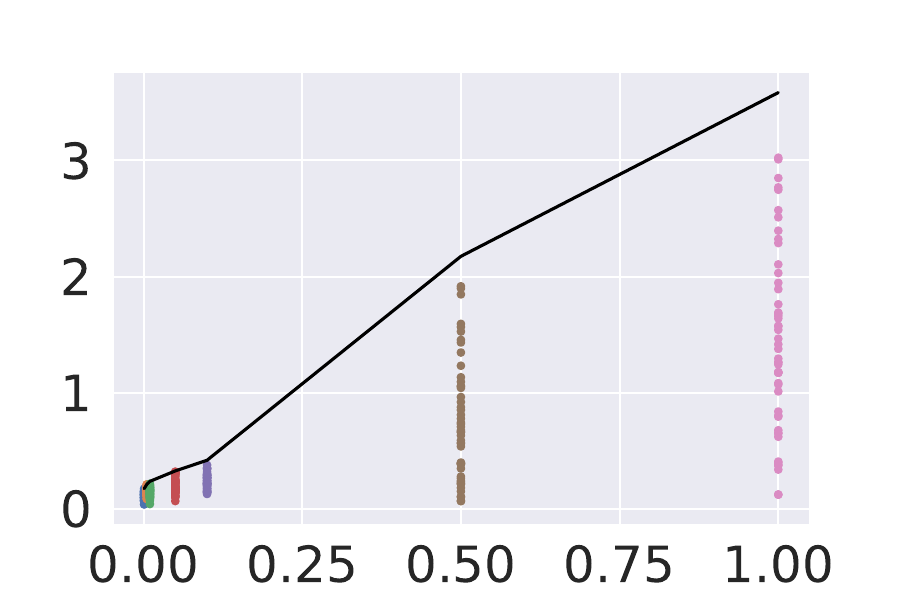}&
\includegraphics[height=\utilheightc]{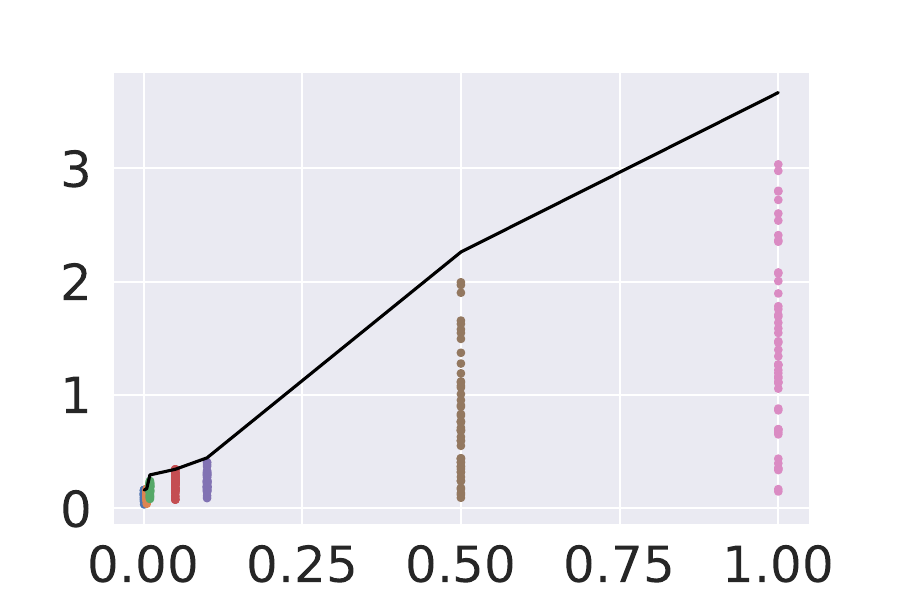}\\[-1.2ex]
        & \makecell{\tiny{Interval size}}
        & \makecell{\tiny{Interval size}}
        & \makecell{\tiny{Interval size}}
        & \makecell{\tiny{Interval size}}
        & \makecell{\tiny{Interval size}}
        & \makecell{\tiny{Interval size}}
\end{tabular}
}
\caption{Image $\ell_2$ norm distribution in shifting transformation.}\label{fig:shift l2}
\end{subtable}

}

\caption{Image $\ell_2$ norm distribution in rotation and shift transformation. Images are from spawn point 15, 30, 43, 46, 57 and 86 from dataset with building and without pedestrian. The scatter plots show the $\ell_2$ distance of randomly chosen pairs in randomly chosen big intervals. The black line is the $\ell_2$ distance between the endpoints of big intervals.}
\label{fig:transformation l2}
\end{figure*}
}

\section{Proofs}
\label{append:proofs}

\subsection{Proof of \cref{thm:detection}: Detection Certification}
\label{append:proof-det}
In this section, we present the full proof of \cref{thm:detection}, which provides generic certification for multi-sensor fusion detection against an abstract transformation. We first restate this theorem from the main text.
\begingroup
\def\thetheorem{\ref{thm:detection}}
\begin{theorem}[restated]
Let $T=\{T_x, T_p\}$ be a transformation with parameter space $\mathcal Z$. Suppose $\mathcal S \subseteq \mathcal Z$ and $\{\alpha_i\}_{i=1}^M\subseteq \mathcal S$. 
For detection confidence $g:\mathcal X \times \mathcal P \to [0,1]$, let $h_q(\vx, \vp)$ be the median smoothing of $g$ as defined in \cref{eq:median}.
Then for all transformations $\vz\in \gS$, the confidence score of the median smoothed detector satisfies: 
\begin{equation}
    h_q(T_x(\vx,\vz), T_p(\vp, \vz)) \geq \min_{1\leq i\leq M} h_{\underline q}(T_x(\vx,\alpha_i), T_p(\vp, \alpha_i))
\end{equation}
where
\begin{align}
    \underline q &= \Phi\left(\Phi^{-1}(q) - \sqrt{\frac{M_x^2}{\sigma_x^2} + \frac{M_p^2}{\sigma_p^2}}\right),\\
    M_x &= \max_{\alpha\in \mathcal S}\min_{1\leq i\leq M} \|T_x(\vx,\alpha) - T_x(\vx,\alpha_i)\|_2, \\
    M_p &= \max_{\alpha\in \mathcal S}\min_{1\leq i\leq M} \|T_p(\vp,\alpha) - T_p(\vp,\alpha_i)\|_2.
\end{align}
\end{theorem}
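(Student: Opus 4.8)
## Proof Proposal for Theorem~\ref{thm:detection}

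The plan is to reduce the multi-sensor, transformation-based statement to a single $\ell_2$-robustness guarantee for median smoothing, applied at a cleverly chosen "anchor" parameter $\alpha_i$. The key structural observation is that $h_q(T_x(\vx,\vz),T_p(\vp,\vz))$ is the $q$-percentile of $g$ evaluated on an input point $(T_x(\vx,\vz),T_p(\vp,\vz))$ corrupted by anisotropic Gaussian noise $(\delta_x,\delta_p)$ with covariance $\mathrm{diag}(\sigma_x^2\mI_d,\sigma_p^2\mI_{3N})$. So the first step is to recall (or re-derive) the standard median-smoothing robustness bound: if $\tilde g$ is the $q$-percentile smoothing of a function under Gaussian noise $\mathcal N(0,\Sigma)$, and two inputs $u,u'$ satisfy $\|\Sigma^{-1/2}(u-u')\|_2 \le R$, then $\tilde g_q(u) \ge \tilde g_{q'}(u')$ where $q' = \Phi(\Phi^{-1}(q) - R)$. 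This is essentially the Gaussian isoperimetric / Neyman–Pearson argument underlying Cohen et al.\ and its percentile-smoothing analogue in Chiang et al.; the anisotropic case follows by the change of variables $v = \Sigma^{-1/2}u$, which turns anisotropic noise into standard isotropic noise.

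Next I would set up the concatenated-coordinate bookkeeping. Write $U(\vz) = (T_x(\vx,\vz),T_p(\vp,\vz))$ and $\Sigma = \mathrm{diag}(\sigma_x^2\mI_d,\sigma_p^2\mI_{3N})$. For an arbitrary adversarial $\vz\in\gS$, by the definitions of $M_x$ and $M_p$ there exists an index $i^\star$ (the nearest anchor, componentwise in the sense of the $\min_i$) with $\|T_x(\vx,\vz)-T_x(\vx,\alpha_{i^\star})\|_2 \le M_x$ and $\|T_p(\vp,\vz)-T_p(\vp,\alpha_{i^\star})\|_2 \le M_p$. Strictly, $M_x$ and $M_p$ are defined via $\max_\alpha \min_i$ separately, so the minimizing index for the image part need not coincide with that for the point-cloud part; I would handle this by noting it suffices to pick \emph{any} index $i^\star$ (say the one realizing the image $\min_i$, or simply work with the bound $\min_i\|\cdot\|$ replaced by its max over $\alpha$) and observe the other distance is still bounded by its respective $\max_\alpha\min_i$, since the $\min_i$ over point-cloud distances at that $\vz$ is $\le M_p$ — in fact the cleanest route is to recognize that the theorem only needs \emph{some} $i$ with both distances bounded, and the definitions of $M_x,M_p$ as $\max_\alpha\min_i$ guarantee that for each $\vz$, $\min_i \max\{\|\cdot\|_x/\text{scale}, \|\cdot\|_p/\text{scale}\}$-type quantity is controlled; I will state this carefully, possibly strengthening the hypothesis to a joint nearest-anchor condition, which is what the grid construction in Lemma~\ref{lem:upper-bound-mx-mp} actually delivers.

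Granting the anchor $i^\star$, the Mahalanobis distance between $U(\vz)$ and $U(\alpha_{i^\star})$ is
\begin{equation}
\|\Sigma^{-1/2}(U(\vz)-U(\alpha_{i^\star}))\|_2 = \sqrt{\tfrac{1}{\sigma_x^2}\|T_x(\vx,\vz)-T_x(\vx,\alpha_{i^\star})\|_2^2 + \tfrac{1}{\sigma_p^2}\|T_p(\vp,\vz)-T_p(\vp,\alpha_{i^\star})\|_2^2} \le \sqrt{\tfrac{M_x^2}{\sigma_x^2}+\tfrac{M_p^2}{\sigma_p^2}} =: R.
\end{equation}
Applying the median-smoothing robustness lemma with this $R$ gives $h_q(U(\vz)) \ge h_{\underline q}(U(\alpha_{i^\star}))$ with $\underline q = \Phi(\Phi^{-1}(q) - R)$, exactly as in the statement. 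Since $i^\star$ depends on $\vz$ but is always one of $1,\dots,M$, we conclude $h_q(U(\vz)) \ge \min_{1\le i\le M} h_{\underline q}(U(\alpha_i))$, and taking this over all $\vz\in\gS$ finishes the proof. The main obstacle I anticipate is the bookkeeping subtlety flagged above: aligning the "separate" definitions of $M_x$ and $M_p$ with the need for a \emph{single} common anchor index that simultaneously controls both modalities — this is where care is needed, and where the fine-partition grid of Lemma~\ref{lem:upper-bound-mx-mp} (which uses the same grid points for both modalities) is doing real work. A secondary (minor) point is making the percentile-smoothing robustness lemma fully rigorous in the anisotropic setting, including measurability of $g$ and the sup in the definition of $h_q$, but that is routine given Chiang et al.
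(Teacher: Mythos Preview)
Your proposal is correct and follows essentially the same route as the paper: reduce anisotropic smoothing to the isotropic case by a linear change of variables (the paper writes this as $\tilde g(\vx,\vp)=g(\vx,\tfrac{\sigma_p}{\sigma_x}\vp)$, you phrase it as $v=\Sigma^{-1/2}u$), then apply the $1$-Lipschitz property of $\sigma\Phi^{-1}(\cdot)$ from Salman et al./Chiang et al.\ between $U(\vz)$ and the nearest anchor $U(\alpha_{i^\star})$, and finally take a $\min$ over anchors. The joint-anchor subtlety you flag---that the separate $\max_\alpha\min_i$ definitions of $M_x$ and $M_p$ do not a priori yield a \emph{single} common $i^\star$ controlling both modalities---is real, and the paper's proof simply asserts ``$\exists\,\alpha_i$'' with both bounds without addressing it; your observation that the grid in Lemma~\ref{lem:upper-bound-mx-mp} (same anchor set for both modalities) is what actually closes this gap is a genuine clarification beyond what the paper spells out.
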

\addtocounter{theorem}{-1}
\endgroup
\begin{proof}
We first recall the median smoothed classifier $h_q(\vx,\vp) = \sup\{y\in \mathbb R\ |\ \Pr [g(\vx+\delta_x,\vp+\delta_p)\leq y]\leq q\}$, where $\delta_x \sim \gN (0,\sigma_x^2 \mathbf I_d)$ and $\delta_p \sim \gN(0,\sigma^2_p \mathbf I_{3\times N})$.

Consider a function $f:\mathbb R^d \times \mathbb R^{3\times N}\to [0,1]$ with $f(\vx,\vp) = \Pr[g(\vx + \delta_x, \vp + \delta_p) \leq h_{\underline q}(\vx, \vp)]$. We define $\tilde g: \gX \times \gP \to [0,1]$ as $\tilde g(\vx, \vp) = g(\vx, \frac{\sigma_p}{\sigma_x}\vp)$. Then
\begin{equation}
    f(\vx, \vp) = \Pr[\tilde g(\vx + \delta_x, \vp^\prime+\delta_p^\prime)\leq h_{\underline q}(\vx, \vp)],\ \text{where}\ \vp^\prime = \frac{\sigma_x}{\sigma_p}\vp,\ \delta_p^\prime \sim \gN (0,\sigma_x^2\mathbf I_{3\times N}).
\end{equation}

We now introduce a lemma from \cite{Salman2019provably}(Lemma 2) and \cite{chiang2020detection}(Corollary 1).
\begin{lemma}
For any $g:\mathbb R^d \to [l,u]$, let $f(x) = \mathbb E[g(x+G)]$ where $G\sim \gN(0,\sigma^2 \mathbf I)$. Then the map $\eta(x) = \sigma\cdot \Phi^{-1}(\frac{f(x)-l}{u-l})$ is 1-Lipschitz.
\end{lemma}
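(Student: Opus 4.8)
The statement to prove is: for $g:\mathbb R^d\to[l,u]$ and $f(x)=\mathbb E[g(x+G)]$ with $G\sim\mathcal N(0,\sigma^2\mathbf I)$, the map $\eta(x)=\sigma\cdot\Phi^{-1}\!\big(\tfrac{f(x)-l}{u-l}\big)$ is $1$-Lipschitz. The plan is to normalize to the unit interval, reduce to the half-space (threshold) case via a standard ``hardest function'' argument, and then compute directly. First I would set $\bar g(x)=\tfrac{g(x)-l}{u-l}\in[0,1]$ and $\bar f(x)=\mathbb E[\bar g(x+G)]=\tfrac{f(x)-l}{u-l}$, so that $\eta(x)=\sigma\,\Phi^{-1}(\bar f(x))$, and it suffices to treat $g:\mathbb R^d\to[0,1]$. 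By translation invariance of the Gaussian it is enough to show that for any unit vector $u\in\mathbb R^d$ and any $x_0$ and $x_1=x_0+t u$ with $t>0$,
\begin{equation}
    \Phi^{-1}(\bar f(x_1))-\Phi^{-1}(\bar f(x_0))\le \frac{t}{\sigma}.
\end{equation}

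Second, I would invoke the Neyman--Pearson / Gaussian-isoperimetric comparison that underlies Cohen et al.'s randomized smoothing: among all measurable $\bar g:\mathbb R^d\to[0,1]$ with a given value $\bar f(x_0)=p$, the function minimizing $\bar f(x_1)$ (equivalently, the one making $\Phi^{-1}(\bar f(x_1))$ as small as possible, and by a symmetric argument the one maximizing it as large as possible) is a half-space indicator of the form $\bar g^\star(z)=\mathbbm 1\{\langle z-x_0,u\rangle\le \sigma\,\Phi^{-1}(p)\}$. For such a half-space, $\bar f^\star(x)=\Pr[\langle x+G-x_0,u\rangle\le \sigma\Phi^{-1}(p)]=\Phi\!\big(\Phi^{-1}(p)-\langle x-x_0,u\rangle/\sigma\big)$, since $\langle G,u\rangle\sim\mathcal N(0,\sigma^2)$. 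Hence $\Phi^{-1}(\bar f^\star(x))=\Phi^{-1}(p)-\langle x-x_0,u\rangle/\sigma$, which is an \emph{exactly} affine function of $x$ with gradient of norm $1/\sigma$. Therefore $\Phi^{-1}(\bar f(x_1))\le\Phi^{-1}(\bar f^\star(x_1))=\Phi^{-1}(p)-t/\sigma$ fails in the wrong direction, so instead I run the extremal argument on the maximizing side: the half-space maximizing $\bar f(x_1)$ given $\bar f(x_0)=p$ is $\mathbbm 1\{\langle z-x_0,u\rangle\ge -\sigma\Phi^{-1}(p)\}$ wait --- more cleanly, both one-sided extremal problems give affine $\Phi^{-1}(\bar f^\star(\cdot))$ with slope magnitude $1/\sigma$, so $\Phi^{-1}(\bar f(x_1))$ lies between $\Phi^{-1}(p)\pm t/\sigma$, i.e. $|\Phi^{-1}(\bar f(x_1))-\Phi^{-1}(\bar f(x_0))|\le t/\sigma$, which rearranges to $|\eta(x_1)-\eta(x_0)|\le t=\|x_1-x_0\|_2$.

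An alternative, fully self-contained route that avoids citing the Neyman--Pearson lemma: differentiate directly. Write $\bar f(x)=\int \bar g(z)\,\phi_\sigma(z-x)\,dz$ where $\phi_\sigma$ is the $\mathcal N(0,\sigma^2\mathbf I)$ density; then $\nabla\bar f(x)=\tfrac1{\sigma^2}\int \bar g(z)(z-x)\phi_\sigma(z-x)\,dz=\tfrac1{\sigma^2}\mathbb E[\bar g(x+G)\,G]$. Since $\nabla\eta=\sigma\,(\Phi^{-1})'(\bar f)\,\nabla\bar f=\tfrac{\sigma\,\nabla\bar f}{\phi(\Phi^{-1}(\bar f))}$ (using $(\Phi^{-1})'(p)=1/\phi(\Phi^{-1}(p))$), I must show $\|\nabla\bar f(x)\|_2\le \tfrac1\sigma\,\phi(\Phi^{-1}(\bar f(x)))$. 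Fixing $x$ and letting $v=\nabla\bar f(x)/\|\nabla\bar f(x)\|$, this is $\mathbb E[\bar g(x+G)\langle G,v\rangle]\le \sigma\,\phi(\Phi^{-1}(\bar f(x)))$; since $\langle G,v\rangle\sim\mathcal N(0,\sigma^2)$ and $0\le\bar g\le1$, the left side is maximized over all $\bar g$ with $\mathbb E[\bar g]=p$ by putting all the mass where $\langle G,v\rangle$ is largest, i.e. $\bar g=\mathbbm 1\{\langle G,v\rangle\ge \sigma\Phi^{-1}(1-p)\}$, giving exactly $\sigma\,\phi(\Phi^{-1}(p))$ by a one-line Gaussian truncated-moment computation ($\int_{a}^\infty s\,\phi(s)\,ds=\phi(a)$). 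This proves $\|\nabla\eta\|_2\le1$ everywhere, hence $1$-Lipschitzness; a standard mollification/monotone-class argument extends from smooth bounded $\bar g$ to general measurable $\bar g$.

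\textbf{Main obstacle.} The routine calculus is easy; the one genuine step is the extremal (rearrangement / Neyman--Pearson) inequality identifying half-spaces as the worst case, together with the mild regularity needed to justify differentiating under the integral and to pass from smooth $g$ to arbitrary bounded measurable $g$. Since the lemma is explicitly borrowed from \cite{Salman2019provably} and \cite{chiang2020detection}, I expect the paper simply to cite it; if a proof is wanted, I would present the gradient-bound version above, as it is the shortest complete argument.
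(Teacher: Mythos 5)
Your proposal is correct; in fact it provides more than the paper does. The paper's ``proof'' of this lemma is a bare citation to Salman et al.\ (their Lemma 2) and Chiang et al.\ (their Corollary 1), exactly as you anticipated in your final paragraph, so there is nothing in the paper to compare against line by line. Both of your proof sketches are sound. The first is the classical Neyman--Pearson / worst-case-half-space argument underlying Cohen et al.'s randomized-smoothing bound, which is also how Salman et al.\ establish the result. The second (bounding $\|\nabla\eta\|_2$ directly, reducing to the one-dimensional truncated Gaussian moment $\int_a^\infty s\,\phi(s)\,ds=\phi(a)$, and using a bathtub/rearrangement maximization over $\{0\le\bar g\le1,\ \mathbb E[\bar g]=p\}$) is equally valid and arguably more self-contained. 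You also correctly noted the reduction to $[0,1]$-valued $g$ by affine renormalization, which is why the factor $(u-l)$ disappears from the Lipschitz constant.

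One small cleanup: in the middle of the first approach you wrote $\Phi^{-1}(\bar f(x_1))\le\Phi^{-1}(\bar f^\star(x_1))$ for the \emph{minimizing} half-space, which has the inequality reversed---since $\bar g^\star$ minimizes $\bar f(x_1)$ among all $\bar g$ with $\bar f(x_0)=p$, the correct inequality is $\Phi^{-1}(\bar f(x_1))\ge\Phi^{-1}(p)-t/\sigma$, giving the lower end of the two-sided bound; the maximizing half-space gives the upper end $\Phi^{-1}(\bar f(x_1))\le\Phi^{-1}(p)+t/\sigma$. You recover from this confusion and state the correct two-sided conclusion, but the intermediate line as written is wrong and could be deleted or fixed. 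The gradient-based second approach does not suffer from this issue and would be the cleaner one to include if the paper wanted an actual proof rather than a citation.
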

Note that $f(\vx, \vp^\prime) = \mathbb E\left(\mathbbm 1[\tilde g(\vx + \delta_x, \vp^\prime + \delta_p^\prime)\leq h_{\underline q}(\vx, \vp)]\right)$, which means $\sigma_x \cdot \Phi^{-1}(f(\vx, \vp^\prime))$ is 1-Lipschitz.

Now let us consider an arbitrary transformation $\vz\in \gS$, by the definition of $M_x$ and $M_p$ we have
\begin{equation}
    \forall \vz\in \gS,\ \exists \alpha_i,\ \|T_x(\vx,\vz) - T_x(\vx, \alpha_i)\|_2 \leq M_x, \|T_p(\vp,\vz) - T_p(\vp,\alpha_i)\|_2 \leq M_p.
\end{equation}

Then
\begin{align}
    \sigma_x &\Phi^{-1}\Big( \Pr\Big[g(T_x(\vx,\vz)+\delta_x, T_p(\vp,\vz)+\delta_p)\leq h_{\underline q}(\vx,\vp)\Big]\Big) = \sigma_x \Phi^{-1}\left(f(T_x(\vx,\vz),\frac{\sigma_x}{\sigma_p}T_p(\vp,\vz))\right)\\
    &\geq \sigma_x \Phi^{-1}\left(f\Big(T_x(\vx,\alpha_i),\frac{\sigma_x}{\sigma_p}T_p(\vp,\alpha_i)\Big)\right) - \sqrt{M_x^2 + \frac{\sigma_x^2}{\sigma_p^2}M_p^2}\\
    &\geq \sigma_x \Phi^{-1}\Big(\Pr\Big[g(T_x(\vx,\alpha_i)+\delta_x, T_p(\vp,\alpha_i)+\delta_p)\leq h_{\underline q}(\vx,\vp)\Big] \Big) - \sqrt{M_x^2 + \frac{\sigma_x^2}{\sigma_p^2}M_p^2}\\
    &= \sigma_x \Phi^{-1}(\underline q)  - \sqrt{M_x^2 + \frac{\sigma_x^2}{\sigma_p^2}M_p^2}\\
    &= \sigma_x\Phi^{-1}(q).
\end{align}

Since that $\Phi(\cdot)$ is monotonic, we conclude that
\begin{equation}
    \forall \vz\in \gS,\ \exists \alpha_i, \mathrm{s.t.}\ h_q(T_x(\vx,\vz), T_p(\vp,\vz)) \geq h_{\underline q}(\vx,\vp).
\end{equation}
\end{proof}

\subsection{Proof of \cref{lem:upper-bound-mx-mp}: Upper Bound for the Interpolation Error}
\label{proof:mx-mp}
\begingroup
\def\thetheorem{\ref{lem:upper-bound-mx-mp}}
\begin{lemma}[restated]
  If the parameter space to certify $\gS = [l_1, u_1] \times \cdots \times [l_m, u_m]$ is a hypercube satisfying \Cref{assum} with threshold $\tau$, and $\{\alpha_i\}_{i=1}^M = \{\frac{K_1-k_1}{K_1}l_1 + \frac{k_1}{K_1}u_1: k_1 = 0,1,\dots,K_1\} \times \cdots \times \{\frac{K_m-k_m}{K_m}l_m + \frac{k_m}{K_m}u_m: k_m = 0,1,\dots,K_m\}$, where $K_i \ge \frac{u_i - l_i}{\tau}$, then
    \begin{align}
        M_x &\leq \sum_{i=1}^m \max_{\vk\in \Delta}\Big\|T_x(\vx, \vw(\vk)) - T_x(\vx, \vw(\vk)+w_i)\Big\|_2, \label{eq:append_Mx}\\
        M_p &\leq \sum_{i=1}^m\max_{\vk\in \Delta}\Big\|T_p(\vp, \vw(\vk)) - T_p(\vp, \vw(\vk)+w_i)\Big\|_2 \label{eq:append_Mp}
    \end{align}
    where $\Delta = \{(k_1,\dots, k_m)\in \mathbb Z^m\ |\ 0\leq k_i< K_i\}$ and $\vw(\vk) = ( \frac{K_1-k_1}{K_1} l_1 + \frac{k_1}{K_1} u_1, \cdots, \frac{K_m-k_m}{K_m} l_m + \frac{k_m}{K_m} u_m )$. $w_i = \frac{u_i - l_i}{K_i}\ve_i$, where $\ve_i$ is a unit vector at coordinate $i$.
\end{lemma}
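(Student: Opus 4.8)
The plan is to reduce the bound on $M_x$ (and symmetrically $M_p$) to a telescoping-style argument along the grid, using the fine partition assumption one small cell at a time. Recall $M_x = \max_{\alpha \in \gS} \min_{1 \le i \le M} \|T_x(\vx,\alpha) - T_x(\vx,\alpha_i)\|_2$. Fix an arbitrary $\alpha \in \gS$. Since the grid points $\{\alpha_i\}$ form a $\tau$-cover of $\gS$ in each coordinate (because $K_i \ge (u_i - l_i)/\tau$ makes each cell have side length $(u_i-l_i)/K_i \le \tau$), $\alpha$ lies in some closed grid cell $C = [\vw(\vk), \vw(\vk) + (w_1,\dots,w_m)]$ whose $\ell_\infty$ diameter is $\max_i (u_i-l_i)/K_i \le \tau$. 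The nearest corner of this cell is one of the $\alpha_i$'s, so it suffices to bound $\|T_x(\vx,\alpha) - T_x(\vx,\vv)\|_2$ where $\vv$ is the corner $\vw(\vk)$ (or any fixed corner), and then take the max over cells.

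The key step is to walk from $\alpha$ to the corner $\vw(\vk)$ one coordinate at a time, adjusting coordinate $1$, then coordinate $2$, and so on. Define the intermediate points $\beta^{(0)} = \alpha$ and $\beta^{(j)}$ obtained from $\beta^{(j-1)}$ by replacing its $j$-th coordinate with $w_j(\vk)$, the $j$-th coordinate of the corner; then $\beta^{(m)} = \vw(\vk)$. Each consecutive pair $\beta^{(j-1)}, \beta^{(j)}$ differs only in coordinate $j$, and both lie inside the cell $C$ (since we only move coordinate-wise within the cell bounds), so both lie in a sub-polytope of $\ell_\infty$-diameter $\le \tau$. Applying \Cref{assum} to the segment between $\beta^{(j-1)}$ and $\beta^{(j)}$ — more precisely, to the one-dimensional sub-cell they span in coordinate $j$ — the extreme points of that sub-cell are $\vw(\vk)$-type and $\vw(\vk)+w_j$-type points (or translates thereof within the cell), giving
\[
\|T_x(\vx,\beta^{(j-1)}) - T_x(\vx,\beta^{(j)})\|_2 \le \max_{\vk' \in \Delta} \|T_x(\vx,\vw(\vk')) - T_x(\vx,\vw(\vk')+w_j)\|_2.
\]
Then by the triangle inequality,
\[
\|T_x(\vx,\alpha) - T_x(\vx,\vw(\vk))\|_2 \le \sum_{j=1}^m \|T_x(\vx,\beta^{(j-1)}) - T_x(\vx,\beta^{(j)})\|_2 \le \sum_{j=1}^m \max_{\vk' \in \Delta}\|T_x(\vx,\vw(\vk')) - T_x(\vx,\vw(\vk')+w_j)\|_2,
\]
and taking the max over $\alpha \in \gS$ (equivalently over all cells) yields \eqref{eq:append_Mx}. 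The argument for $M_p$ is identical with $T_x, \vx$ replaced by $T_p, \vp$.

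The main obstacle I anticipate is a bookkeeping subtlety rather than a conceptual one: making precise that the one-dimensional segment between $\beta^{(j-1)}$ and $\beta^{(j)}$ is itself a polytope of $\ell_\infty$-diameter $< \tau$ whose extreme points are exactly two endpoints $\gamma$ and $\gamma + w_j$ with $\gamma$ of the form $\vw(\vk')$ for some $\vk' \in \Delta$ — this requires checking that the partial walk keeps us at grid-aligned coordinates in the dimensions already processed and within $[l_j, u_j]$ in dimension $j$, so that the relevant extreme-point pair is covered by the max over $\vk' \in \Delta$ with direction $w_j$. One should also be slightly careful that $\beta^{(j-1)}$ need not have its coordinate $j$ equal to a grid value, so \Cref{assum} is applied to the sub-cell $[\gamma, \gamma + w_j]$ containing both $\beta^{(j-1)}$ and $\beta^{(j)}$ (valid since that sub-cell has $\ell_\infty$-diameter $(u_j - l_j)/K_j \le \tau$), and its two extreme points give the stated bound. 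Once this indexing is set up cleanly, the rest is just the triangle inequality and a maximum over finitely many cells.
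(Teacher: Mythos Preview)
Your coordinate-by-coordinate walk from $\alpha$ to the corner $\vw(\vk)$ has a genuine gap. After $j$ steps the intermediate point $\beta^{(j)}$ is grid-aligned only in coordinates $1,\dots,j$; its remaining coordinates $j+1,\dots,m$ are still the arbitrary values of $\alpha$. When you apply \Cref{assum} to the one-dimensional sub-cell $[\gamma,\gamma+w_j]$ containing $\beta^{(j-1)}$ and $\beta^{(j)}$, the extreme points are $\gamma$ and $\gamma+w_j$, but $\gamma$ itself carries those non-grid coordinates in positions $j+1,\dots,m$. Hence $\gamma$ is \emph{not} of the form $\vw(\vk')$ for any $\vk'\in\Delta$, and the step
\[
\|T_x(\vx,\beta^{(j-1)})-T_x(\vx,\beta^{(j)})\|_2 \le \max_{\vk'\in\Delta}\|T_x(\vx,\vw(\vk'))-T_x(\vx,\vw(\vk')+w_j)\|_2
\]
does not follow from the assumption. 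Your own caveat (``the partial walk keeps us at grid-aligned coordinates in the dimensions already processed'') is exactly right but only covers half the issue: it is the \emph{unprocessed} coordinates that spoil the argument.

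The paper closes this gap by reversing the order of the two ideas. It first applies \Cref{assum} once to the full $m$-dimensional cell $\gZ_{\mathrm{sub}}$, which bounds $\|T_x(\vx,\vz)-T_x(\vx,\vw(\vk))\|_2$ by $\|T_x(\vx,\vz_1)-T_x(\vx,\vz_2)\|_2$ for some pair of \emph{corners} $\vz_1,\vz_2$ of the cell. Since corners are grid points, one then walks from $\vz_1$ to $\vz_2$ along edges of the hypercube (each step changes exactly one coordinate and both endpoints are grid points of the form $\vw(\vk'),\vw(\vk')+w_{c_j}$), and the triangle inequality alone---no further use of the assumption---gives the sum in \eqref{eq:append_Mx}. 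The key difference is that the telescoping path lives entirely on grid vertices, so every term is automatically covered by the maximum over $\Delta$.
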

\addtocounter{theorem}{-1}
\endgroup
\begin{proof}
Let  $\vz \in \gS\subseteq \mathbb R^m$ be a parameter in the parameter space to certify and $\vz = (z_1,\dots, z_m)$. There must be $(k_1,k_2,\dots, k_m)$ with $k_i\in \{0,1,\dots, K_i-1\}$, such that
\begin{equation}
    \frac{K_i - k_i}{K_i}l_i + \frac{k_i}{K_i}u_i \leq z_i \leq \frac{K_i - k_i-1}{K_i}l_i + \frac{k_i+1}{K_i}u_i,\ \forall i=1,2,\dots, m.
\end{equation}

Let $\vk = (k_1,k_2,\dots, k_m)$. Consider the small polytope $\gZ_{\mathrm{sub}} = \vk + [0,1]^m \cdot (w_1,\dots, w_m)$. By \cref{assum},
\begin{align}
    \|T_x(\vx, \vz) - T_x(\vx, \vw(k))\|_2 &\leq \max_{\alpha,\beta\in E(\gZ_{\mathrm{sub}})} \|T_x (\vx,\alpha) - T_x(\vx,\beta)\|_2\\
    &= \|T_x(\vx,\vz_1) - T_x(\vx,\vz_2)\|_2
\end{align}
where $\vz_1 = \vk + I_1\cdot (w_1,\dots, w_m)$ and $\vz_2 = \vk + I_2\cdot(w_1,\dots, w_m)$ for some $I_1,I_2 \in \{0,1\}^m$. Let $\{\alpha_1,\alpha_2,\dots, \alpha_n\}$ be a shortest path from $\vz_1$ to $\vz_2$ such that $\alpha_i \in \{0,1\}^m\cdot (w_1,\dots, w_m) + \vk$, $\alpha_1 = \vk_1$, and $\alpha_n = \vk_2$. Moreover, $\alpha_j$ and $\alpha_{j+1}$ differ on exactly 1 non-repeating coordinate $c_j$. Then
\begin{align}
    \|T_x(\vx,\vz_1) - T_x(\vx,\vz_2)\|_2 & = \|\sum_{j=1}^{n-1} T_x(\vx,\alpha_j) - T_x(x,\alpha_{j+1})\|_2\\
    &\leq \sum_{j=1}^{n-1} \|T_x(\vx,\alpha_{j}) - T_x(\vx,\alpha_{j+1})\|_2\\
    &\leq \sum_{j=1}^{n-1} \max_{\vk\in \Delta} \|T_x(\vx,\vw(\vk)) - T_x(\vx,\vw(\vk)+w_{c_j})\|_2,\\
    &\leq \sum_{i=1}^m \max_{\vk\in \Delta}\|T_x(\vx,\vw(\vk)) - T_x(\vx,\vw(\vk)+w_i)\|_2.
\end{align}
which implies \cref{eq:append_Mx}. \cref{eq:append_Mp} also holds, following exactly the same argument for point clouds.
\end{proof}

\subsection{Proof of \cref{theorem:iou}: General IoU Certification for 3D Bounding Boxes}
\label{append:proof-iou}
We first recall \cref{theorem:iou} from the main paper. Note that we omit details for the convex hulls $\underline S,\bar S$ in the main paper version for simplicity. Here we provide a complete version with a formal description for $\underline S,\bar S$.
\begingroup
\def\thetheorem{\ref{theorem:iou}}
\begin{theorem}[restated]
Let $\mathbf B$ be a set of bounding boxes whose coordinates are bounded. We denote the lower bound of each coordinate by $(\underline x, \underline y, \underline z, \underline w,\underline h , \underline l, \underline r)$ and upper bound by $(\bar x,\bar y,\bar z, \bar w, \bar h, \bar l, \bar r)$. Let $B_{gt} = (x,y,z,w,h,l,r)$ be the ground truth bounding box.
Then for any $B_i \in \mathbf B$,
\begin{equation}
    \iou(B_i,B_{gt}) \geq \frac{h_1 \cdot \left(\underline l\underline w-\vol(\underline S \backslash S_{gt})\right)}{hwl + \bar h\bar w \bar l - h_2 \cdot \left(\bar l \bar w - \vol(\bar S \backslash S_{gt})\right)}
\end{equation}
where $S_{gt} = (x,z,w,l,r)_{gt}$ is the projection of $B_{gt}$ to the $x-z$ plane.
\begin{align}
    h_1 &= \max\big(\min_{y^\prime\in [\underline y, \bar y]} \min\{h, \underline h, \frac{h+\underline h}{2} - |y^\prime-y|\},0\big),\nonumber\\
    h_2 &= \max\big(\min_{y^\prime\in [\underline y, \bar y]} \min\{h, \bar h, \frac{h+\bar h}{2} - |y^\prime-y|\},0\big).
\end{align}
$\underline S, \bar S$ are convex hulls formed by $(\underline x,\underline z,\underline r,\bar x,\bar z,\bar r)$ with respect to $(\underline w,\underline l)$ and $(\bar w,\bar l)$. Here we formally define $C(\underline x, \underline z,\underline r,\bar x,\bar z,\bar r,w,l)$. Let $\varphi = \arctan(\frac{l}{w})$, we first define
\begin{align}
    \Delta x_{\max,k} &= \max_{\theta\in [\underline r, \bar r]} \sqrt{w^2 + l^2}\cos(\theta + k\varphi),\\
    \Delta x_{\min,k} &= \min_{\theta \in [\underline r,\bar r]} \sqrt{w^2 + l^2} \cos(\theta + k\varphi),\\
    \Delta z_{\max,k} &= \max_{\theta\in [\underline r, \bar r]} \sqrt{w^2 + l^2}\sin(\theta + k\varphi),\\
    \Delta z_{\min,k} &= \min_{\theta\in [\underline r, \bar r]} \sqrt{w^2 + l^2}\sin(\theta + k\varphi).
\end{align}
where $k\in \{-1,1\}$. The range for each of the four coordinates can be expressed as
\begin{align}
    P_{1,1} &= \{\underline x + \frac{\Delta x_{\min,1}}{2}, \bar x + \frac{\Delta x_{\max, 1}}{2}\} \otimes  \{\underline z + \frac{\Delta z_{\min,1}}{2}, \bar z + \frac{\Delta z_{\max, 1}}{2}\},\\
    P_{1,-1} &= \{\underline x + \frac{\Delta x_{\min,-1}}{2}, \bar x + \frac{\Delta x_{\max, -1}}{2}\} \otimes  \{\underline z + \frac{\Delta z_{\min,-1}}{2}, \bar z + \frac{\Delta z_{\max, -1}}{2}\},\\
    P_{-1,1} &= \{\underline x - \frac{\Delta x_{\max,-1}}{2}, \bar x - \frac{\Delta x_{\min, -1}}{2}\} \otimes  \{\underline z - \frac{\Delta z_{\max,-1}}{2}, \bar z - \frac{\Delta z_{\min, -1}}{2}\},\\
    P_{-1,-1} &= \{\underline x - \frac{\Delta x_{\max,1}}{2}, \bar x - \frac{\Delta x_{\min, 1}}{2}\} \otimes  \{\underline z - \frac{\Delta z_{\max,1}}{2}, \bar z - \frac{\Delta z_{\min, 1}}{2}\}.
\end{align}
The convex hull $C(\underline x,\underline z, \underline r, \bar x,\bar z,\bar r,w,l)$ is
\begin{equation}
        C(\underline x, \underline z,\underline r,\bar x,\bar z,\bar r, w,l) = \mathrm{Conv}\left( P_{1,1}\cup P_{1,-1} \cup P_{-1,1}\cup P_{-1,-1}\right).
\end{equation}
The convex hull $\underline S = C(\underline x,\underline z, \underline r, \bar x,\bar z,\bar r,\underline w,\underline l)$, and $\bar S = C(\underline x,\underline z, \underline r, \bar x,\bar z,\bar r,\bar w,\bar l)$.
\end{theorem}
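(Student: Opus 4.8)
The plan is to bound $\iou(B_i,B_{gt}) = \vol(B_i\cap B_{gt})/\vol(B_i\cup B_{gt})$ by lower‑bounding the numerator and upper‑bounding the denominator separately, exploiting that the rotation angle $r$ only tilts a box within the $x$--$z$ plane: every box is a vertical interval along the $y$-axis times a (possibly rotated) rectangle in the $x$--$z$ plane, so its volume factors as (length of the $y$-overlap) $\times$ (planar intersection area). I will repeatedly use two facts: two boxes with a common center and rotation are nested according to their side lengths; and $\vol(S\cap S_{gt}) = \vol(S) - \vol(S\setminus S_{gt})$, so a lower bound on a planar intersection area follows from an \emph{upper} bound on $\vol(S\setminus S_{gt})$.

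\emph{Upper bound on the union.} Starting from $\vol(B_i\cup B_{gt}) = \vol(B_i) + \vol(B_{gt}) - \vol(B_i\cap B_{gt})$, I would enlarge $B_i$ to the box $B'$ with the same center and rotation but side lengths exactly $(\bar w,\bar h,\bar l)$; since $B_i\subseteq B'$ we get $\vol(B_i\cup B_{gt})\le \bar w\bar h\bar l + hwl - \vol(B'\cap B_{gt})$, and it remains to minimize $\vol(B'\cap B_{gt})$ over centers in $[\underline x,\bar x]\times[\underline y,\bar y]\times[\underline z,\bar z]$ and angles in $[\underline r,\bar r]$. Factoring $\vol(B'\cap B_{gt})$, the minimal $y$-overlap of a height-$\bar h$ box against a height-$h$ box at center $y$ over $y'\in[\underline y,\bar y]$ is exactly $h_2$ — a one‑line computation showing that the overlap of $[y'-\bar h/2,\,y'+\bar h/2]$ and $[y-h/2,\,y+h/2]$ equals $\max\{0,\min\{h,\bar h,(h+\bar h)/2-|y'-y|\}\}$ — and the planar factor obeys $\vol(S'\cap S_{gt}) = \bar w\bar l - \vol(S'\setminus S_{gt}) \ge \bar w\bar l - \vol(\bar S\setminus S_{gt})$ once we know $S'\subseteq\bar S$. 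This yields the claimed denominator $\bar w\bar h\bar l + hwl - h_2(\bar l\bar w - \vol(\bar S\setminus S_{gt}))$, which is $\ge\vol(B_i\cup B_{gt})\ge\vol(B_{gt})>0$, so the quotient is well defined.

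\emph{Lower bound on the intersection.} Symmetrically, I would shrink $B_i$ to the box $B''$ with the same center and rotation but side lengths $(\underline w,\underline h,\underline l)$; then $B''\subseteq B_i$, so $\vol(B_i\cap B_{gt})\ge\vol(B''\cap B_{gt})$. Factoring this, the $y$-overlap of a height-$\underline h$ box over $y'\in[\underline y,\bar y]$ is at least $h_1$, and the planar part satisfies $\vol(S''\cap S_{gt}) = \underline w\underline l - \vol(S''\setminus S_{gt})\ge\underline w\underline l - \vol(\underline S\setminus S_{gt})$ using $S''\subseteq\underline S$. This gives the claimed numerator $h_1(\underline l\underline w - \vol(\underline S\setminus S_{gt}))$. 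Dividing the numerator lower bound by the denominator upper bound finishes the proof: if the numerator bound is negative the claim is trivial since $\iou\ge0$, and otherwise $a/b\ge a_{\mathrm{low}}/b_{\mathrm{high}}$ as usual (the $\max\{0,\cdot\}$ truncations in $h_1,h_2$ and the area‑difference terms are harmless even when some quantity would turn negative, since each only weakens one side of an inequality already established).

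\emph{The geometric claim and the main obstacle.} The one non‑bookkeeping step — and the part I expect to be hardest — is showing $S'\subseteq\bar S$ and $S''\subseteq\underline S$. For a rectangle with center $(x',z')$, side lengths $(w,l)$, and tilt $\theta\in[\underline r,\bar r]$, setting $\varphi=\arctan(l/w)$ writes its four corner offsets from the center as $\pm\tfrac12\sqrt{w^2+l^2}(\cos(\theta+\varphi),\sin(\theta+\varphi))$ and $\pm\tfrac12\sqrt{w^2+l^2}(\cos(\theta-\varphi),\sin(\theta-\varphi))$. As $\theta$ sweeps $[\underline r,\bar r]$, each offset coordinate ranges over the intervals $[\Delta x_{\min,k}/2,\Delta x_{\max,k}/2]$ and $[\Delta z_{\min,k}/2,\Delta z_{\max,k}/2]$ for $k=\pm1$; adding the center's independent ranges $[\underline x,\bar x]$ and $[\underline z,\bar z]$ confines each corner to one of the axis‑aligned boxes $P_{k_1,k_2}$ in the statement. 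Since a convex set containing the four corners of a rectangle contains the whole rectangle, and $\bar S = \mathrm{Conv}(P_{1,1}\cup P_{1,-1}\cup P_{-1,1}\cup P_{-1,-1})$ with $(w,l)=(\bar w,\bar l)$ contains every such corner, $S'\subseteq\bar S$ for all admissible $S'$; taking instead $(w,l)=(\underline w,\underline l)$ gives $S''\subseteq\underline S$. The remaining care is routine: checking that the corner‑range boxes and the $\Delta$-extrema are set up consistently for all four corners, and that the minima and maxima over the compact center and angle domains are attained.
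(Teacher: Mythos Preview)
Your proposal is correct and follows essentially the same route as the paper's proof: both lower-bound the intersection and upper-bound the union by replacing the unknown side lengths with $(\underline w,\underline h,\underline l)$ and $(\bar w,\bar h,\bar l)$ respectively (the paper phrases this as monotonicity in the size parameters, you phrase it as nesting $B''\subseteq B_i\subseteq B'$), then factor each 3D intersection into a $y$-axis overlap times an $x$--$z$ planar area, compute $h_1,h_2$ by the same one-line interval calculation, and bound the planar part via $\vol(S\cap S_{gt})=\vol(S)-\vol(S\setminus S_{gt})$ together with the containment $S\subseteq C(\underline x,\underline z,\underline r,\bar x,\bar z,\bar r,w,l)$ established through the identical corner parametrization $\tfrac12\sqrt{w^2+l^2}(\cos(\theta\pm\varphi),\sin(\theta\pm\varphi))$. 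Your write-up is in fact a bit more careful than the paper's in a couple of places (you note the separability that makes the $y$-factor and the $(x,z,r)$-factor minimize independently, and you handle the sign/positivity issues for the final quotient explicitly), but there is no substantive difference in the argument.
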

\addtocounter{theorem}{-1}
\endgroup
\begin{proof}
Let $B_i\in \mathbf B$ be a bounding box whose coordinates are lower bounded by $(\underline x, \underline y, \underline z, \underline w,\underline h, \underline l, \underline r)$ and upper bounded by $(\bar x,\bar y,\bar z, \bar w, \bar h, \bar l, \bar r)$. Let $B_{gt} = (x,y,z,w,h,l,r)$ be the ground truth.
\begin{equation}
    \mathrm{IoU}(B_i, B_{gt}) = \frac{\vol(B_i \cap B_{gt})}{\vol(B_i \cup B_{gt})}
\end{equation}
Given a fixed center $(x,y,z)_i$ and a rotation angle $r_i$ for $B_i$, the volumes $\vol(B_i\cap B_{gt})$ and $\vol(B_i \cup B_{gt})$ are both monotonic in terms of the size of $B_i$, $(w,h,l)_i$. Hence
\begin{equation}
    \mathrm{IoU}(B_i, B_{gt}) \geq \frac{\min_{x,y,z,r} \vol(B_i(x,y,z, \underline w,\underline h, \underline l, r) \cap B_{gt})}{\max_{x,y,z,r} \vol(B_i(x,y,z,\bar w, \bar h, \bar l, r)\cup B_{gt})}
    \label{eq:proof-iou}
\end{equation}

Note that
\begin{align}
    & \vol(B_i(x_i,y_i,z_i,\bar w, \bar h, \bar l, r_i)\cup B_{gt}) \\
    = & \vol(B_i(x_i,y_i,z_i,\bar w, \bar h, \bar l, r_i))+\vol(B_{gt})-\vol(B_i(x_i,y_i,z_i,\bar w, \bar h, \bar l, r_i)\cap B_{gt})\nonumber \\
    = & \bar w \bar h\bar l + whl -\vol(B_i(x_i,y_i,z_i,\bar w, \bar h, \bar l, r_i)\cap B_{gt}).
\end{align}
Therefore,
\begin{equation}
\label{eq:proof-union}
    \max_{x,y,z,r} \vol\big(B_i(x,y,z,\bar w,\bar h,\bar l, r)\cup B_{gt}\big) = \bar w \bar h \bar l + whl - \min_{x,y,z,r} \vol(B_i(x,y,z,\bar w,\bar h,\bar l,r)\cap B_{gt}).
\end{equation}
Combine \cref{eq:proof-iou,eq:proof-union}, we are left with the work of estimating $\min_{x,y,z,r}\vol(B_i(x,y,z,w_i,h_i,l_i,r)\cap B_{gt})$ for some fixed $(w_i,h_i,l_i) = (\underline w,\underline h,\underline l)$ or $(\bar w,\bar h,\bar l)$. Notice that 3D bounding boxes can be arbitrarily rotated along the y-axis, we consider the intersection on the y-axis and on the x-z plane separately.

\textbf{Intersection on the y-axis.} Projecting $B_i$ and $B_{gt}$ to the y-axis, we want to lower bound the intersection between an interval $I_1$ with length $h_i$ centered at $y_i \in [\underline y, \bar y]$ and the ground truth interval $I_2 = [y-\frac{h}{2}, y+\frac{h}{2}]$.

Suppose $h_i < h$. If $|y_i-y|<\frac{h-h_i}{2}$, $|I_1(y_i)\cap I_2| = h_i$; otherwise $|I_1(y_i)\cap I_2| = \max\{\frac{h+h_i}{2}-|y_i-y|,0\}$. In this case we conclude that $|I_i(y_i)\cap I_2| = \max\{\min\{h_i,\frac{h+h_i}{2}-|y^\prime - y|\},0\}$. By the exact same argument, when $h_i \geq h$, $|I_1(y_i)\cap I_2| = \max\{\min\{h,\frac{h+h_i}{2}-|y_i - y|\},0\}$. Thus,
\begin{equation}
    |I_1(y_i)\cap I_2| \geq \max\{\min_{y_i\in [\underline y, \bar y]} \min\{h,h_i, \frac{h+h_i}{2} - |y_i- y|\} ,0\}.
\end{equation}
In particular, when $h_i = \underline h$ and $h_i = \bar h$, the intersection between $B_i$ and $B_{gt}$ on y-axis is larger than $h_1$ and $h_2$, respectively, where
\begin{align}
    h_1 &= \max\big(\min_{y^\prime\in [\underline y, \bar y]} \min\{h, \underline h, \frac{h+\underline h}{2} - |y^\prime-y|\},0\big),\nonumber\\
    h_2 &= \max\big(\min_{y^\prime\in [\underline y, \bar y]} \min\{h, \bar h, \frac{h+\bar h}{2} - |y^\prime-y|\},0\big).
\end{align}
Note that both $h_1$ and $h_2$ can be precisely numerically computed, where the pseudocode is in \Cref{alg:iou_lower_bound}.

\textbf{Intersection on the x-z plane.} Next, we consider the projection of $B_i$ and $B_{gt}$ on the x-z plane, denoted by $S_i$ and $S_{gt}$, respectively.
We have
\begin{align}
    \min_{x,z,r}\vol(S_i(x,z,w_i,l_i,r)\cap S_{gt}) &= \vol(S_i(x,z,w_i,l_i,r)) - \min_{x,z,r}\vol(S_i(x,z,w_i,l_i,r)\backslash S_{gt})\\
    &=  w_i l_i - \min_{x,z,r}\vol(S_i(x,z,w_i,l_i,r)\backslash S_{gt})\\
    &\geq w_i l_i - \vol(C(\underline x, \underline z,\underline r, \bar x,\bar z,\bar r, w_i,l_i) \backslash S_{gt})\label{eq:append:iouxz}
\end{align}
where $C(\underline x, \underline z,\underline r, \bar x,\bar z,\bar r, w_i,l_i)$ is an envelop that contains all possible x-z bounding boxes $S_i$ with $(\underline x,\underline z,\underline r) \leq(x,z,r)\leq (\bar x,\bar z,\bar r)$ and a fixed size $(w_i,l_i)$.
\begin{figure}[ht]
    \centering
    \includegraphics[width=.5\textwidth]{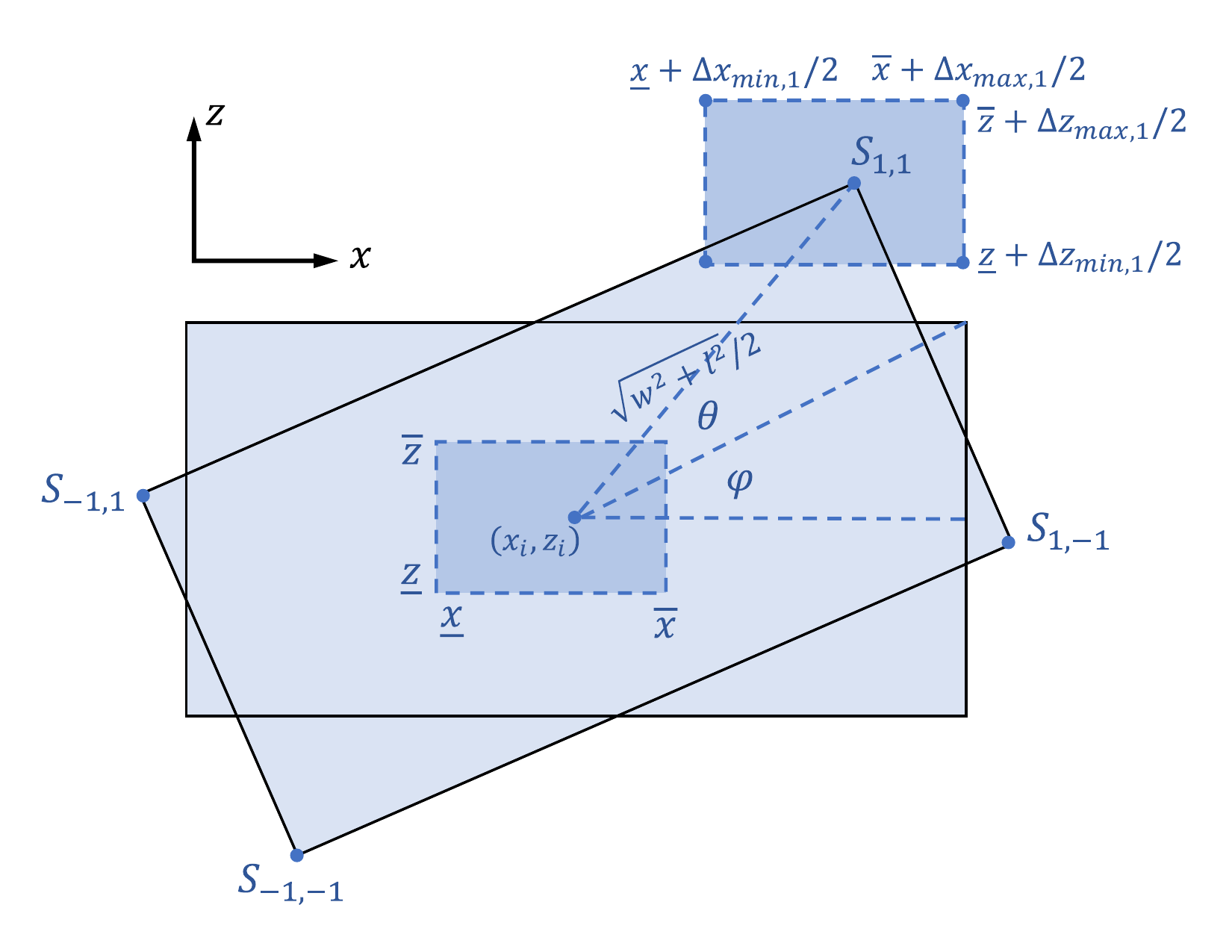}
    \caption{Illustration of $C(\underline x,\underline z,\underline r, \bar x,\bar z,\bar r, w,l)$ on $x-z$ plane.}
    \label{fig:append-iou_xz}
\end{figure}

As shown in \cref{fig:append-iou_xz}, we calculate the possible range for each of the four vertices of bounding box $S_i$. For example, \cref{fig:append-iou_xz} illustrates that the $x-z$ coordinate of the upper-right vertex is $s_{1,1}= (x_i + \frac{\sqrt{w^2 + l^2}}{2} \cos(\theta + \varphi), z_i + \frac{\sqrt{w^2+l^2}}{2}\sin (\theta + \varphi))$. Therefore, its $x,z$ coordinate of $s_{1,1} $satisfies
\begin{align}
    \underline x + \min_{\theta\in [\underline r,\bar r]} \frac{1}{2}\sqrt{w^2 + l^2} \cos(\theta + \varphi)\leq &x_{1,1} \leq \bar x + \max_{\theta\in [\underline r,\bar r]} \frac{1}{2}\sqrt{w^2+l^2} \cos (\theta + \varphi)\\
    \underline z + \min_{\theta\in [\underline r, \bar r]} \frac{1}{2}\sqrt{w^2+l^2}\sin(\theta +\varphi) \leq & z_{1,1} \leq \bar z + \max_{\theta\in [\underline r, \bar r]} \frac{1}{2} \sqrt{w^2 + l^2} \sin(\theta + \varphi).
\end{align}
which means $s_{1,1} = (x_{1,1},z_{1,1})$ is contained by the rectangle formed by the four points in $S_{1,1}$, i.e., $s_{1,1}\in \mathrm{Conv}(P_{1,1})$. Similar arguments also hold for rest of vertices: $s_{1,-1}\in \mathrm {Conv}(P_{1,-1})$, $s_{-1,1}\in \mathrm {Conv}(P_{-1,1})$, and $s_{-1,-1}\in \mathrm {Conv}(P_{-1,-1})$. Finally, we conclude that
\begin{equation}
    S_{i} = \mathrm{Conv}(s_{1,1},s_{1,-1},s_{-1,1},s_{-1,-1}) \subseteq \mathrm{Conv}(P_{1,1},P_{1,-1},P_{-1,1},P_{-1,-1}) \label{eq:append:conv}
\end{equation}
Note that $\underline S = C(\underline x,\underline z, \underline r, \bar x,\bar z,\bar r,\underline w,\underline l)$, and $\bar S = C(\underline x,\underline z, \underline r, \bar x,\bar z,\bar r,\bar w,\bar l)$. By \cref{eq:append:iouxz,eq:append:conv},
\begin{equation}
    \min_{x,z,r} \vol(S_i(x,z,\underline w,\underline l,r)\cap S_{gt}) \geq\underline w\underline l - \vol(\underline S \backslash S_{gt})
\end{equation}
and
\begin{equation}
    \min_{x,z,r} \vol(S_i(x,z,\bar w,\bar l,r)\cap S_{gt}) \geq\bar w\bar l - \vol(\bar S \backslash S_{gt}).
\end{equation}
Combining the above, we have
\begin{align}
    \mathrm{IoU}(B_i,B_{gt}) \geq \frac{h_1(\underline w\underline l - \vol(\underline S \backslash S_{gt}))}{hwl+\bar h\bar w \bar l - h_2(\bar w\bar l - \vol(\bar S\backslash S_{gt}))}.
\end{align}
\end{proof}

\section{Additional Details of Certification Strategies}
\label{append:methods}
In this section, we present the details of our detection (\Cref{subsec:detection cert}) and IoU (\Cref{subsec:IoU cert}) certification algorithms for camera and LiDAR fusion models, including the pseudocode and some details of our implementation. Note that our algorithms can be adapted to any fusion framework and single-modality model by doing smoothing inference and certification for corresponding modules. 

\subsection{Detection Certification}

\label{subsec:detection cert}
\Cref{alg:0/1 detection} presents the pseudocode of our detection certification (\textbf{function} \textsc{Certify}) and median smoothing detection (\textbf{function} \textsc{Inference}). In our implementation, we directly sample on the left endpoints of each interval and do smoothing inference, which can also be replaced by a random point in each small interval. 

\begin{algorithm}[H]
\algsetup{linenosize=\scriptsize}
\scriptsize
\DontPrintSemicolon
\caption{0/1 Detection and Certification}\label{alg:0/1 detection}
\KwIn{clean image input $\vx_0 \in \gX \subseteq \sR^d$, clean point cloud input  $\vp_0 \in \gP \subseteq \sR^{3\times N}$, multi-sensor fusion pipeline $F: \gX \times \gP \to ([C] \times [0, 1])_n$, ground truth label $y$, image Gaussian noise variance $\sigma_x^2$, point Gaussian noise variance $\sigma_p^2$, transformation space $[\vz_{l}, \vz_{u}]$, transformation function $T$, detection threshold $\gamma_0$}
\KwOut{smoothed confidence score of ground truth label $c_{median}$, lower bound and upper bound of confidence score of ground truth $c_u, c_l$, whether the object is robustly detected as a boolean variable $b$}
\SetKwFunction{Fdetect}{\textsc{Inference}}
\SetKwFunction{FAddGaussianNoise}{\textsc{AddGaussianNoise}}
\SetKwFunction{FComputeConfidenceScores}{\textsc{ComputeConfidenceScores}}
\SetKwFunction{FSort}{\textsc{Sort}}
\SetKwProg{Fn}{function}{:}{}
\Fn{\Fdetect{$F, \vx_0, \vp_0, y_k, \sigma_x, \sigma_p, n,\gamma_0$}}{
    $\hat{x} \leftarrow$ \FAddGaussianNoise$(\vx_0, \sigma_x, n)$\;
    $\hat{p} \leftarrow$ \FAddGaussianNoise$(\vp_0, \sigma_p, n)$\;
    $\hat{c} \leftarrow$ \FComputeConfidenceScores$(F, \hat{x}, \hat{p}, y_k)$\;
    $\hat{c} \leftarrow$ \FSort$(\hat{c})$\;
    $c_{median} \leftarrow \hat{c}_{\lfloor 0.5n \rfloor}$\;
    $b \gets \1[c_{median} \ge \gamma_0]$\;
    \textbf{return} $c_{median}, b$
}
\SetKwFunction{Fcert}{\textsc{Certify}}
\SetKwFunction{FSplitInterval}{\textsc{SplitInterval}}
\SetKwFunction{FComputeEps}{\textsc{ComputeEps}}
\SetKwFunction{FGetEmpiricalPerc}{\textsc{GetEmpiricalPerc}}
\Fn{\Fcert{$F, \vx_0, \vp_0, y_k, \sigma_x, \sigma_p, n, [\vz_{l}, \vz_{u}], c, \alpha, \gamma_0$}}{
$interval\_list \leftarrow$ \FSplitInterval($[\vz_{l}, \vz_{u}]$)\;
$c_{median}\_list$, $c_u\_list$, $c_l\_list \leftarrow$ [], [], []\;
\For{$[z_{sl}, z_{su}] \in$ interval\_list}{
    $\epsilon \leftarrow$ \FComputeEps($T, \vx_0, \vp_0, \sigma_x, \sigma_p, \vz_{sl}, \vz_{su}$) \Comment*[r]{\scriptsize compute $\eps$ according to equation (13)}\;
    $q_u, q_l \leftarrow$ \FGetEmpiricalPerc$(n, \epsilon, c, \alpha/|\text{interval\_list}|)$ \Comment*[r]{\scriptsize compute $q_u, q_l$ according to equation (10)}\;
    $\hat{x} \leftarrow$ \FAddGaussianNoise$(T_x(\vx_0, \vz_{sl}), \sigma_x, n)$ \;
    \Comment*[r]{\scriptsize sample $n$ Gaussian noises $\vdelta_x \sim \gN(0, \sigma_x^2 \mI_d)$ and add to $T_x(\vx_0, \vz_{sl})$ to get $n$ noisy samples $\hat{x}$}\;
    $\hat{p} \leftarrow$ \FAddGaussianNoise$(T_p(\vp_0, \vz_{sl}), \sigma_p, n)$ \;
    \Comment*[r]{\scriptsize sample $n$ Gaussian noises $\vdelta_p \sim \gN(0, \sigma_p^2 \mI_{3\times N})$ and add to $(T_p(\vp_0, \vz_{sl})$ to get $n$ noisy samples $\hat{p}$}\;
    $\hat{c} \leftarrow$ \FComputeConfidenceScores$(F, \hat{x}, \hat{p}, y)$ \;
    \Comment*[r]{\scriptsize collect the confidence score of $y$ for each $(\vx,\vp) \in (\hat{x}, \hat{p})$ based on $F(\vx,\vp) = \max_{1\le k\le \ell: y_k = y} c_k$}\;
    $\hat{c} \leftarrow$ \FSort$(\hat{c})$\;
    $c_{median} \leftarrow \hat{c}_{\lfloor 0.5n \rfloor}$\;
    \eIf {$q_l = -1$}{
        $c_l \leftarrow -\infty$ \Comment*[r]{\scriptsize $-\infty$ \text{means cannot certify}}
    } {
        $c_l \leftarrow \hat{c}_{q_l}$
    }
    \eIf {$q_u = \infty$}{
        $c_u \leftarrow \infty$ \Comment*[r]{\scriptsize $\infty$ \text{means cannot certify}}
    } {
        $c_u \leftarrow \hat{c}_{q_u}$
    }
    $c_{median}\_list$ \textbf{add} $c_{median}$, $c_u\_list$ \textbf{add} $c_u$, $c_l\_list$ \textbf{add} $c_l$
}
    $b \gets \1[\min c_l\_list \ge \gamma_0]$\;
    \textbf{return} $\min c_l\_list$, $\max c_u\_list$, $b$\;
}
\SetKwFunction{FBinarySearchLeft}{\textsc{BinarySearchLeft}}
\SetKwFunction{FBinarySearchRight}{\textsc{BinarySearchRight}}
\Fn{\FGetEmpiricalPerc$(n, \epsilon, c, \alpha)$}{
    $\underline{p} \leftarrow \Phi \left( \Phi ^{-1}( p) - \epsilon \right)$ \;
    $q_l \leftarrow \FBinarySearchLeft(BinomialCDF(n, \underline{p}), 1-\alpha) $\Comment*[r]{\scriptsize do binary search and choose the left endpoint}\;
    \If {$\text{BinomialCDF}(q_l, \underline{p}) > 1 - \alpha$} {
        $q_l \leftarrow -1$
    } 
    $\overline{p} \leftarrow \Phi \left( \Phi ^{-1}( p) + \epsilon \right)$ \;
    $q_u \leftarrow \FBinarySearchRight(BinomialCDF(n, \overline{p}), \alpha) $\Comment*[r]{\scriptsize do binary search and choose the right endpoint}\;
    \If {$\text{BinomialCDF}(q_u, \overline{p}) < \alpha$} {
        $q_u \leftarrow \infty$
    }
    \textbf{return} $q_u, q_l$
}
\end{algorithm}

\vspace{-1em}
\subsection{IoU Certification}
\label{subsec:IoU cert}
\Cref{alg:bounding box detection} presents the pseudocode of our IoU certification (\textbf{function} \textsc{Certify}) and smoothing bounding box detection (\textbf{function} \textsc{Inference}). As in the detection certification framework, we also do sampling on the lower endpoint of each small interval. \Cref{alg:lower bound of IoU} presents the pseudocode of IoU lower bound computation given the bounding box parameter intervals and the ground truth bounding box, and \Cref{alg:intervals of corners' xz} presents the pseudocode for computing the $x,z$ coordinate intervals for bounding box endpoints. Note that our framework focuses on the 7-parameter representation of bounding boxes ($x,y,z$, height, width, length, rotation angle), which can be adapted to other representation formats easily (e.g. 8 parameter representation, endpoint representation). 

\begin{algorithm}[H]
\algsetup{linenosize=\scriptsize}
\scriptsize
\DontPrintSemicolon
\caption{Bounding Box Detection and Certification}\label{alg:bounding box detection}
\KwIn{clean image input $\vx_0 \in \gX \subseteq \sR^d$, clean point cloud input  $\vp_0 \in \gP \subseteq \sR^{3\times N}$, multi-sensor fusion pipeline $F: \gX \times \gP \to ([X] \times [Y] \times [Z] \times [W] \times [H] \times [L] \times [R] \times [C] \times [0,1])_n$, image Gaussian noise $\vdelta_x \sim \gN(0, \sigma_x^2 \mI_d)$, point Gaussian noise $\vdelta_p \sim \gN(0, \sigma_p^2 \mI_{3\times N})$, transformation space $[\vz_{l}, \vz_{u}]$.}
\KwOut{smoothed prediction $bbox_{median}$, lower bound of IoU between predicted bounding boxes and ground truth bounding boxes $\underline{IoU}$}
\SetKwFunction{Fdetect}{\textsc{Inference}}
\SetKwFunction{FComputeBBoxParams}{\textsc{ComputeBBoxParams}}
\SetKwProg{Fn}{function}{:}{}
\Fn{\Fdetect{$F, \vx_0, \vp_0, \sigma_x, \sigma_p, n$}}{
    $\hat{x} \leftarrow$ \FAddGaussianNoise$(\vx_0, \sigma_x, n)$ \;
    $\hat{p} \leftarrow$ \FAddGaussianNoise$(\vp_0, \sigma_p, n)$ \;
    $\hat{bbox} \leftarrow$ \FComputeBBoxParams($F(\hat{x},\hat{p}))$ \;
    $\hat{bbox} \leftarrow$ \FSort$(\hat{bbox})$ \Comment*[r]{\scriptsize sort on each parameter}\;
    $bbox_{median} \leftarrow \hat{bbox}_{\lfloor 0.5n \rfloor}$ \Comment*[r]{\scriptsize take the median of each parameter}\;
    \textbf{return} $bbox_{median}$
}
\SetKwFunction{Fcert}{\textsc{Certify}}
\SetKwFunction{FIoU}{\textsc{IoULowerBound}}
\Fn{\Fcert{$F, \vx_0, \vp_0, \sigma_x, \sigma_p, n, [\vz_{sl}, \vz_{su}], c, \alpha$}}{
    $interval\_list \leftarrow$ \FSplitInterval($[\vz_{sl}, \vz_{su}]$) \;
    $IoU\_list\leftarrow$ []\;
    \For{$[z_{sl}, z_{su}] \in$ interval\_list}{
    $\epsilon \leftarrow$ \FComputeEps($T, \vx_0, \vp_0, \sigma_x, \sigma_p, \vz_{l}, \vz_{u}$) \Comment*[r]{\scriptsize compute $\eps$ according to equation (13)}\;
    $q_u, q_l \leftarrow$ \FGetEmpiricalPerc$(n, \epsilon, c, \alpha/|\text{interval\_list}|)$ \Comment*[r]{\scriptsize compute $q_u, q_l$ according to equation (10)}\;
    $\hat{x} \leftarrow$ \FAddGaussianNoise$(T_x(\vx_0, \vz_{sl}), \sigma_x, n)$ \;
    \Comment*[r]{\scriptsize sample $n$ Gaussian noises $\vdelta_x \sim \gN(0, \sigma_x^2 \mI_d)$ and add to $T_x(\vx_0, \vz_{sl})$ to get $n$ noisy samples $\hat{x}$}\;
    $\hat{p} \leftarrow$ \FAddGaussianNoise$(T_p(\vp_0, \vz_{sl}), \sigma_p, n)$ \;
    \Comment*[r]{\scriptsize sample $n$ Gaussian noises $\vdelta_p \sim \gN(0, \sigma_p^2 \mI_{3\times N})$ and add to $(T_p(\vp_0, \vz_{sl})$ to get $n$ noisy samples $\hat{p}$}\;
    $\hat{bbox} \leftarrow$ \FComputeBBoxParams($F(\hat{x},\hat{p}))$ \;
    $\hat{bbox} \leftarrow$ \FSort$(\hat{bbox})$ \Comment*[r]{\scriptsize sort on each parameter}\;
    \eIf {$q_l = -1$}{
        $\underline{IoU} \leftarrow -\infty$ \Comment*[r]{\scriptsize $-\infty$ \text{means cannot certify}} \;
        \textbf{return} $\underline{IoU}$
    } {
        $bbox_l \leftarrow \hat{bbox}_{q_l}$
    }
    \eIf {$q_u = \infty$}{
        $\underline{IoU} \leftarrow \infty$ \Comment*[r]{\scriptsize $\infty$ \text{means cannot certify}} \;
        \textbf{return} $\underline{IoU}$
    } {
        $bbox_u \leftarrow \hat{bbox}_{q_u}$
    }
    $\underline{IoU} \leftarrow$ \FIoU($\underline{bbox},\overline{bbox},bbox$) \;
    $IoU\_list$ \textbf{add} $\underline{IoU}$
}
    \textbf{return} $\min(IoU\_list)$
}
\Fn{\FGetEmpiricalPerc$(n, \epsilon, c, \alpha)$}{
    $\underline{p} \leftarrow \Phi \left( \Phi ^{-1}( p) - \epsilon \right)$ \;
    $q_l \leftarrow \FBinarySearchLeft(BinomialCDF(n, \underline{p}), 1-\alpha) $\Comment*[r]{\scriptsize do binary search and choose the left endpoint}\;
    \If {$\text{BinomialCDF}(q_l, \underline{p}) > 1 - \alpha$} {
        $q_l \leftarrow -1$
    } 
    $\overline{p} \leftarrow \Phi \left( \Phi ^{-1}( p) + \epsilon \right)$ \;
    $q_u \leftarrow \FBinarySearchRight(BinomialCDF(n, \overline{p}), \alpha) $\Comment*[r]{\scriptsize do binary search and choose the right endpoint}\;
    \If {$\text{BinomialCDF}(q_u, \overline{p}) < \alpha$} {
        $q_u \leftarrow \infty$
    }
    \textbf{return} $q_u, q_l$
}
\end{algorithm}

\newpage

\begin{algorithm}[H]
    \algsetup{linenosize=\scriptsize}
    \scriptsize
    \DontPrintSemicolon
    \caption{IoU Lower Bound}\label{alg:lower bound of IoU}
    \begin{multicols}{2}
    \KwIn{upper bound of bounding boxes' parameters $\overline{bbox}$, lower bound of bounding boxes' parameters $\underline{bbox}$, ground truth bounding boxes' parameters $bbox$}
    \KwOut{lower bound of IoU between predicted bounding boxes and ground truth bounding boxes $\underline{IoU}$}
    \SetKwFunction{FIoU}{IoULowerBound}
    \SetKwFunction{FIntersectionLowerBound}{IntersectionLowerBound}
    \SetKwFunction{FUnionUpperBound}{UnionUpperBound}
    \SetKwFunction{FCornerXZIntervals}{CornerXZIntervals}
    \SetKwFunction{FAreaDiffUpper}{AreaDiffUpper}
    \SetKwFunction{FOverlapAreaLower}{OverlapAreaLower}
    \SetKwFunction{FComputeXZ}{ComputeXZ}
    \SetKwFunction{FComputeXZInterval}{ComputeXZInterval}
    \SetKwProg{Fn}{function}{:}{}
    \Fn{\FIoU{$\overline{bbox},\underline{bbox},bbox$}}{
        $\underline{V_I} \leftarrow$ \FIntersectionLowerBound($\overline{bbox},\underline{bbox},bbox$) \;
        $\overline{V_U} \leftarrow$ \FUnionUpperBound($\overline{bbox},\underline{bbox},bbox$) \;
        \textbf{return} $\underline{V_I}/\overline{V_U}$
    }
    \Fn{\FIntersectionLowerBound$(\overline{bbox},\underline{bbox},bbox)$}{
        \eIf {$y<(\overline{y}+\underline{y})/2$}{
            $y_l = \overline{y}$
        } {
            $y_l = \underline{y}$
        }
        \eIf {$y\leq y_l-\underline{h}$ $\mathbf{or}$ $y_l\leq y-h$}{
            \textbf{return} $0$
        } {
            $h_{I}=\min(y_l,y)-\max(y_l-\underline{h},y-h)$
        }
        $xz_l,xz_u\leftarrow$ \FCornerXZIntervals($\underline{l},\underline{w},\underline{x},\underline{z},\underline{r},\overline{x},\overline{z},\overline{r}$) \;
        $\underline{S_{overlap}}\leftarrow$ \FOverlapAreaLower($xz_l,xz_u,bbox$) \;
        \eIf{$\underline{S_{overlap}}\le 0$}{
            \textbf{return} $0$
        }{
            \textbf{return} $h_{I}\cdot \underline{S_{overlap}}$
        }
    }
    \Fn{\FUnionUpperBound$(\overline{bbox},\underline{bbox},bbox)$}{
        \eIf {$y<(\overline{y}+\underline{y})/2$}{
            $y_u = \underline{y}$
        } {
            $y_u = \overline{y}$
        }
        \eIf {$y\leq y_u-\overline{h}$ $\mathbf{or}$ $y_u\leq y-h$}{
            \textbf{return} $0$
        } {
            $h_{I}=\min(y_u,y)-\max(y_u-\overline{h},y-h)$
        }
        $xz_l,xz_u\leftarrow$ \FCornerXZIntervals($\overline{l},\overline{w},\underline{x},\underline{z},\underline{r},\overline{x},\overline{z},\overline{r}$) \;
        $\underline{S_{overlap}}\leftarrow$ \FOverlapAreaLower($xz_l,xz_u,bbox$) \;
        \eIf{$\underline{S_{overlap}}\le 0$}{
            \textbf{return} $h\cdot w\cdot l + \overline{h}\cdot\overline{w}\cdot\overline{l}$
        }{
            \textbf{return} $h\cdot w\cdot l + \overline{h}\cdot\overline{w}\cdot\overline{l}-h_{I}\cdot \underline{S_{overlap}}$
        }
    }
    \end{multicols}
\vspace{1em}
    \label{alg:iou_lower_bound}
\end{algorithm}

\begin{algorithm}[H]
\algsetup{linenosize=\scriptsize}
\scriptsize
\DontPrintSemicolon
\caption{Corners' $x,z$ Intervals}\label{alg:intervals of corners' xz}
    \KwIn{bounding box length $l$, width $w$, $x$ lower bound $\underline{x}$, $z$ lower bound $\underline{z}$, rotation angel lower bound $\underline{r}$, $x$ upper bound $\overline{x}$, $z$ upper bound $\overline{z}$, rotation upper bound $\overline{r}$}
    \KwOut{bounding box corners' $xz$ coordinates lower bound $xz_l$, bounding box corners' xz coordinates upper bound $xz_u$}
    \SetKwProg{Fn}{function}{:}{}
    \Fn{\FCornerXZIntervals($l,w,\underline{x},\underline{z},\underline{r},\overline{x},\overline{z},\overline{r}$)}{
        $xzCoorsList\leftarrow \left[\right]$ \;
        \For{$x,z$ $\mathbf{in}$ $\left[\left[\underline{x},\underline{z}\right],\left[\overline{x},\overline{z}\right]\right]$ }{
            \For{$r$ $\mathbf{in}$ $\underline{r},\overline{r}$}{
                $xzCoors\leftarrow$ \FComputeXZ($x,z,r$) \;
                $xzCoorsList$ \textbf{add} $xzCoors$                    
            }
        }
        $xz_l,xz_u\leftarrow$ \FComputeXZInterval($xzCoorsList$) \Comment*[r]{\scriptsize compute $x,z$ intervals roughly}\;
        \Comment*[l]{\scriptsize consider extremum cases}
        $\alpha\leftarrow\arctan(w/l)$ \;
        $d\leftarrow\sqrt{\left(l/2\right)^2+\left(w/2\right)^2}$ \;
        \If{$\pi-\alpha\geq\underline{r}$ $\mathbf{and}$ $\pi-\alpha\leq\overline{r}$}{
            $xz_l[0]\leftarrow\underline{x}-d$ \;
            $xz_u[4]\leftarrow\overline{x}+d$
        }
        \If{$2\pi-\alpha\geq\underline{r}$ $\mathbf{and}$ $2\pi-\alpha\leq\overline{r}$}{
            $xz_u[0]\leftarrow\overline{x}+d$ \;
            $xz_l[4]\leftarrow\underline{x}-d$
        }
        \If{$\pi/2-\alpha\geq\underline{r}$ $\mathbf{and}$ $\pi/2-\alpha\leq\overline{r}$}{
            $xz_u[1]\leftarrow\overline{z}+d$ \;
            $xz_l[5]\leftarrow\underline{z}-d$
        }
        \If{$3\pi/2-\alpha\geq\underline{r}$ $\mathbf{and}$ $3\pi/2-\alpha\leq\overline{r}$}{
            $xz_l[1]\leftarrow\underline{z}-d$ \;
            $xz_u[5]\leftarrow\overline{z}+d$
        }
        \If{$\alpha\geq\underline{r}$ $\mathbf{and}$ $\alpha\leq\overline{r}$}{
            $xz_u[2]\leftarrow\overline{x}+d$ \;
            $xz_l[6]\leftarrow\underline{x}-d$
        }
        \If{$\pi+\alpha\geq\underline{r}$ $\mathbf{and}$ $\pi+\alpha\leq\overline{r}$}{
            $xz_l[2]\leftarrow\underline{x}-d$ \;
            $xz_u[6]\leftarrow\overline{x}+d$
        }
        \If{$\pi/2+\alpha\geq\underline{r}$ $\mathbf{and}$ $\pi/2+\alpha\leq\overline{r}$}{
            $xz_u[3]\leftarrow\overline{z}+d$ \;
            $xz_l[7]\leftarrow\underline{z}-d$
        }
        \If{$3\pi/2+\alpha\geq\underline{r}$ $\mathbf{and}$ $3\pi/2+\alpha\leq\overline{r}$}{
            $xz_l[3]\leftarrow\underline{z}-d$ \;
            $xz_u[7]\leftarrow\overline{z}+d$
        }
    }
    \textbf{return} $xz_l,xz_u$
\end{algorithm}

\section{Additional Experimental Details}
\label{append:exp}


\subsection{Dataset}
\label{appendix:dataset}

\begin{table}[!t]
    \centering
    \caption{Certification data. 
    Each row stands for one setting, and the columns ``vehicle color", ``building", ``pedestrian", and ``amount" represent the color of the car, whether buildings exist, whether a pedestrian exists, and the number of corresponding data respectively.}
    \scalebox{1.0}{
    \begin{tabular}{c|c|c|c}
    \hline
        vehicle color & building & pedestrian & amount \\\hline\hline
        blue & yes & no & 15 \\
        red & yes & no & 4 \\
        red & yes & yes & 4 \\
        black & yes & no & 4 \\
        black & yes & yes & 4 \\
        blue & no & no & 15 \\
        red & no & no & 4 \\
        red & no & yes & 4 \\
        black & no & no & 4 \\
        black & no & yes & 4 \\
    \hline
    \end{tabular}}
    \label{tab:certification data}
\end{table}

As introduced in \Cref{sec:exp}, we generate the certification data via spawning the ego vehicle at a few randomly chosen spawn points.
For settings with 15 spawn points in \Cref{tab:certification data}, the randomly-chosen spawn point index in the CARLA Town01 map is 15, 30, 43, 46, 57, 86, 102, 11, 136, 14, 29, 6, 61, 81, and 88. 
For settings with 4 spawn points in \Cref{tab:certification data}, the randomly-chosen spawn point index is 15, 30, 43, and 46.

\paragraph{Certification setup.}
As reflected in \Cref{lem:upper-bound-mx-mp}, to certify the robustness, we need to partition the transformation's parameter space.
For rotation certification, we split the rotation angle interval $[-30^\circ, 30^\circ]$ uniformly into 600 tiny intervals of 0.1 degree; 
for shifting certification, we split the distance interval $[10, 15]$ uniformly into 500 tiny intervals of 0.01 meter.

\paragraph{Empirical attack setup.}
In \Cref{sec:exp}, we evaluate the empirical robustness, i.e., the robustness under attacks, to show vanilla models' vulnerability and estimate the certification tightness.
For rotation, we conduct the attack by enumerating the lowest detection rate and IoU score among 6000 parameters uniformly sampled with a distance 0.01 degree~(distance = $60^\circ / 6000 = 0.01^\circ$); 
for shifting, we conduct the attack by enumerating the lowest detection rate and IoU score among 5000 parameters uniformly sampled with a distance of 0.001 meter~(distance = $5 / 5000 = 0.001$).



\subsection{Detailed Experimental Evaluation}
\label{subsubsec: detailed-experiments}
In this section, we present the complete experimental results, which include rotation transformation (\Cref{appendix-tab:rotation overview 1}) and shifting transformation (\Cref{appendix-tab:shift overview}) considering different thresholds for detection and IoU certification.

\paragraph{Certification against rotation transformation.} As shown in \Cref{appendix-tab:rotation overview 1} and discussed in \Cref{subsec:rotation certification}, the order of robustness against rotation transformation in the detection metric is $\text{FocalsConv}>\text{MonoCon}>\text{CLOCs}>\text{SECOND}$, but the most robust model in the IoU metric is CLOCs. Moreover, with experimental results in different thresholds (\Cref{appendix-tab:rotation overview 1(b)}, \Cref{tab:bound-rotation} and \Cref{fig:rotation-bounds-th}), the performance of models all drop when the threshold or attack radius increase, and all models' certified IoU drop to 0 when $\text{TH}_\text{IoU}\approx0.6$ , which reflects the problem of models' robustness against rotation transformation. We can also find that the performance difference between models increases when the threshold or attack radius increases.

\paragraph{Certification against shifting transformation.} As shown in \Cref{appendix-tab:shift overview} and discussed in \Cref{subsec:shifting certification}, the order of robustness against shifting transformation in the detection metric is $\text{CLOCs}>\text{SECOND}\approx\text{MonoCon}>\text{FocalsConv}$, and the order in the IoU metric is $\text{CLOCs}>\text{MonoCon}>\text{SECOND}>\text{FocalsConv}$. This shows the advantage of fusion models (e.g. CLOCs) on the one hand but also shows this makes the attack space larger on another hand (e.g. FocalsConv), which leads to a new question of robust fusion mechanism design.

\begin{table*}[ht]
    \fontsize{5.5}{5.5}\selectfont
    \centering
    \caption{Overview of rotation transformation experiment results (smoothing $\sigma=0.25$). Each row represents the corresponding model and attack radius. ``Benign'', ``Adv (Vanilla)'', ``Adv (Smoothed)'', and ``Certification'' stands for benign performance, vanilla models' performance under attacks, smoothed models' performance under attacks, and certified lower bound of smoothed model performance under attacks. Each column represents the results under different thresholds.}
    \begin{subtable}[]{\linewidth}\centering
    \caption{Detection rate under rotation transformation}
    \resizebox{\linewidth}{!}{
    \begin{tabular}{c|c|ccc|ccc|ccc|ccc}
        \hline
        \multirow{2}{*}{Model} & \multirow{2}{*}{Attack Radius} & \multicolumn{3}{c|}{Benign} & \multicolumn{3}{c|}{Adv (Vanilla)} & \multicolumn{3}{c|}{Adv (Smoothed)} & \multicolumn{3}{c}{Certification}\\
        & & \textbf{Det@20} & \textbf{Det@50} & \textbf{Det@80} & \textbf{Det@20} & \textbf{Det@50} & \textbf{Det@80} & \textbf{Det@20} & \textbf{Det@50} & \textbf{Det@80} & \textbf{Det@20} & \textbf{Det@50} & \textbf{Det@80} \\\hline\hline
        \multirowcell{5}{MonoCon\\\cite{liu2022learning}} & $|r|\leq10^\circ$ & \multirow{5}{*}{100.00\%} & \multirow{5}{*}{100.00\%} & \multirow{5}{*}{100.00\%} & 70.97\% & 70.97\% & 58.06\% & 98.39\% & 98.39\% & 80.65\% & 95.16\% & 95.16\% & 75.81\% \\
                                                        & $|r|\leq15^\circ$ & & & & 70.97\% & 70.97\% & 58.06\% & 98.39\% & 98.39\% & 80.65\% & 95.16\% & 95.16\% & 75.81\% \\
                                                        & $|r|\leq20^\circ$ & & & & 70.97\% & 70.97\% & 58.06\% & 98.39\% & 98.39\% & 80.65\% & 95.16\% & 95.16\% & 75.81\% \\
                                                        & $|r|\leq25^\circ$ & & & & 70.97\% & 70.97\% & 45.16\% & 98.39\% & 98.39\% & 80.65\% & 95.16\% & 95.16\% & 75.81\% \\
                                                        & $|r|\leq30^\circ$ & & & & 70.97\% & 70.97\% & 32.26\% & 96.77\% & 96.77\% & 80.65\% & 91.94\% & 91.94\% & 75.81\% \\\hline
        \multirowcell{5}{SECOND\\\cite{yan2018second}} & $|r|\leq10^\circ$ & \multirow{5}{*}{100.00\%} & \multirow{5}{*}{100.00\%} & \multirow{5}{*}{100.00\%} & 100.00\% & 100.00\% & 0.00\% & 100.00\% & 100.00\% & 0.00\% & 100.00\% & 100.00\% & 0.00\% \\
                                                     & $|r|\leq15^\circ$ & & & & 100.00\% & 100.00\% & 0.00\% & 100.00\% & 100.00\% & 0.00\% & 100.00\% & 100.00\% & 0.00\% \\
                                                     & $|r|\leq20^\circ$ & & & & 100.00\% & 100.00\% & 0.00\% & 100.00\% & 100.00\% & 0.00\% & 100.00\% & 100.00\% & 0.00\% \\
                                                     & $|r|\leq25^\circ$ & & & & 100.00\% & 12.90\% & 0.00\% & 100.00\% & 100.00\% & 0.00\% & 100.00\% & 100.00\% & 0.00\% \\
                                                     & $|r|\leq30^\circ$ & & & & 67.74\% & 3.23\% & 0.00\% & 100.00\% & 100.00\% & 0.00\% & 100.00\% & 62.90\% & 0.00\% \\\hline
        \multirowcell{5}{CLOCs\\\cite{pang2020clocs}} & $|r|\leq10^\circ$ & \multirow{5}{*}{100.00\%} & \multirow{5}{*}{100.00\%} & \multirow{5}{*}{100.00\%} & 100.00\% & 100.00\% & 100.00\% & 100.00\% & 100.00\% & 88.71\% & 100.00\% & 100.00\% & 88.71\% \\
                                                    & $|r|\leq15^\circ$ & & & & 100.00\% & 100.00\% & 100.00\% & 98.39\% & 79.03\% & 66.13\% & 98.39\% & 77.42\% & 66.13\% \\
                                                    & $|r|\leq20^\circ$ & & & & 100.00\% & 100.00\% & 100.00\% & 98.39\% & 69.35\% & 50.00\% & 98.39\% & 67.74\% & 50.00\% \\
                                                    & $|r|\leq25^\circ$ & & & & 100.00\% & 91.94\% & 20.97\% & 98.39\% & 69.35\% & 50.00\% & 98.39\% & 67.74\% & 50.00\% \\
                                                    & $|r|\leq30^\circ$ & & & & 100.00\% & 74.19\% & 3.23\% & 98.39\% & 69.35\% & 50.00\% & 98.39\% & 67.74\% & 50.00\% \\\hline
        \multirowcell{5}{FocalsConv\\\cite{chen2022focal}} & $|r|\leq10^\circ$ & \multirow{5}{*}{100.00\%} & \multirow{5}{*}{100.00\%} & \multirow{5}{*}{100.00\%} & 100.00\% & 100.00\% & 100.00\% & 100.00\% & 100.00\% & 100.00\% & 100.00\% & 100.00\%  & 100.00\% \\
                                                         & $|r|\leq15^\circ$ & & & & 100.00\% & 100.00\% & 100.00\% & 100.00\% & 100.00\% & 100.00\% & 100.00\% & 100.00\%  & 100.00\% \\
                                                         & $|r|\leq20^\circ$ & & & & 100.00\% & 100.00\% & 100.00\% & 100.00\% & 100.00\% & 100.00\% & 100.00\% & 100.00\%  & 100.00\% \\
                                                         & $|r|\leq25^\circ$ & & & & 100.00\% & 100.00\% & 100.00\% & 100.00\% & 100.00\% & 100.00\% & 100.00\% & 100.00\%  & 100.00\% \\
                                                         & $|r|\leq30^\circ$ & & & & 100.00\% & 100.00\% & 98.39\% & 100.00\% & 100.00\% & 100.00\% & 100.00\% & 100.00\% & 100.00\% \\\hline
        \end{tabular}
        }
    \label{appendix-tab:rotation overview 1(a)}
    \end{subtable}
    \begin{subtable}[]{\linewidth}\centering
    \caption{IoU with ground truth under rotation transformation}
    \resizebox{\linewidth}{!}{
    \begin{tabular}{c|c|ccc|ccc|ccc|ccc}
        \hline
        \multirow{2}{*}{Model} & \multirow{2}{*}{Attack Radius} & \multicolumn{3}{c|}{Benign} & \multicolumn{3}{c|}{Adv (Vanilla)} & \multicolumn{3}{c|}{Adv (Smoothed)} & \multicolumn{3}{c}{Certification}\\
        & & \textbf{AP@30} & \textbf{AP@50} & \textbf{AP@80} & \textbf{AP@30} & \textbf{AP@50} & \textbf{AP@80} & \textbf{AP@30} & \textbf{AP@50} & \textbf{AP@80} & \textbf{AP@30} & \textbf{AP@50} & \textbf{AP@80} \\\hline\hline
        \multirowcell{5}{MonoCon\\\cite{liu2022learning}} & $|r|\leq10^\circ$ & \multirow{5}{*}{100.00\%} & \multirow{5}{*}{100.00\%} & \multirow{5}{*}{100.00\%} & 56.45\% & 56.45\% & 0.00\% & 82.26\% & 82.26\% & 0.00\% & 75.81\% & 0.00\% & 0.00\% \\
                                                        & $|r|\leq15^\circ$ & & & & 54.84\% & 54.84\% & 0.00\% & 82.26\% & 82.26\% & 0.00\% & 74.19\% & 0.00\% & 0.00\% \\
                                                        & $|r|\leq20^\circ$ & & & & 54.84\% & 53.23\% & 0.00\% & 82.26\% & 74.19\% & 0.00\% & 6.45\% & 0.00\% & 0.00\% \\
                                                        & $|r|\leq25^\circ$ & & & & 51.61\% & 16.13\% & 0.00\% & 80.65\% & 16.13\% & 0.00\% & 0.00\% & 0.00\% & 0.00\% \\
                                                        & $|r|\leq30^\circ$ & & & & 46.77\% & 0.00\% & 0.00\% & 79.03\% & 3.23\% & 0.00\% & 0.00\% & 0.00\% & 0.00\% \\\hline
        \multirowcell{5}{SECOND\\\cite{yan2018second}} & $|r|\leq10^\circ$ & \multirow{5}{*}{100.00\%} & \multirow{5}{*}{100.00\%} & \multirow{5}{*}{100.00\%} & 96.77\% & 96.77\% & 0.00\% & 100.00\% & 100.00\% & 100.00\% & 100.00\% & 100.00\% & 0.00\% \\
                                                     & $|r|\leq15^\circ$ & & & & 96.77\% & 96.77\% & 0.00\% & 100.00\% & 100.00\% & 100.00\% & 100.00\% & 100.00\% & 0.00\% \\
                                                     & $|r|\leq20^\circ$ & & & & 96.77\% & 96.77\% & 0.00\% & 100.00\% & 100.00\% & 54.84\% & 100.00\% & 100.00\% & 0.00\% \\
                                                     & $|r|\leq25^\circ$ & & & & 83.87\% & 83.87\% & 0.00\% & 100.00\% & 96.77\% & 0.00\% & 100.00\% & 0.00\% & 0.00\% \\
                                                     & $|r|\leq30^\circ$ & & & & 51.61\% & 51.61\% & 0.00\% & 54.84\% & 54.84\% & 0.00\% & 11.29\% & 0.00\% & 0.00\% \\\hline
        \multirowcell{5}{CLOCs\\\cite{pang2020clocs}} & $|r|\leq10^\circ$ & \multirow{5}{*}{100.00\%} & \multirow{5}{*}{100.00\%} & \multirow{5}{*}{100.00\%} & 90.32\% & 90.32\% & 90.32\% & 100.00\% & 100.00\% & 98.39\% & 100.00\% & 100.00\% & 0.00\%\\
                                                    & $|r|\leq15^\circ$ & & & & 90.32\% & 90.32\% & 90.32\% & 98.39\% & 98.39\% & 85.48\% & 98.39\% & 87.10\% & 0.00\%\\
                                                    & $|r|\leq20^\circ$ & & & & 88.71\% & 88.71\% & 77.42\% & 98.39\% & 98.39\% & 67.74\% & 98.39\% & 69.35\% & 0.00\%\\
                                                    & $|r|\leq25^\circ$ & & & & 87.10\% & 87.10\% & 0.00\% & 98.39\% & 98.39\% & 67.74\% & 98.39\% & 67.74\% & 0.00\%\\
                                                    & $|r|\leq30^\circ$ & & & & 80.65\% & 80.65\% & 0.00\% & 98.39\% & 98.39\% & 67.74\% & 98.39\% & 53.23\% & 0.00\%\\\hline
        \multirowcell{5}{FocalsConv\\\cite{chen2022focal}} & $|r|\leq10^\circ$ & \multirow{5}{*}{100.00\%} & \multirow{5}{*}{100.00\%} & \multirow{5}{*}{100.00\%} & 96.77\% & 96.77\% & 0.00\% & 100.00\% & 100.00\% & 0.00\% & 100.00\% & 0.00\% & 0.00\% \\
                                                         & $|r|\leq15^\circ$ & & & & 96.77\% & 0.00\% & 0.00\% & 100.00\% & 0.00\% & 0.00\% & 0.00\% & 0.00\% & 0.00\% \\
                                                         & $|r|\leq20^\circ$ & & & & 96.77\% & 0.00\% & 0.00\% & 100.00\% & 0.00\% & 0.00\% & 0.00\% & 0.00\% & 0.00\% \\
                                                         & $|r|\leq25^\circ$ & & & & 96.77\% & 0.00\% & 0.00\% & 100.00\% & 0.00\% & 0.00\% & 0.00\% & 0.00\% & 0.00\% \\
                                                         & $|r|\leq30^\circ$ & & & & 96.77\% & 0.00\% & 0.00\% & 100.00\% & 0.00\% & 0.00\% & 0.00\% & 0.00\% & 0.00\% \\\hline
    \end{tabular}
    }
    \label{appendix-tab:rotation overview 1(b)}
    \end{subtable}
    \label{appendix-tab:rotation overview 1}
\end{table*}

\begin{table*}[ht]
    \fontsize{5.5}{5.5}\selectfont
    \centering
    \caption{Overview of shifting transformation experiment results. Each row represents the corresponding model and attack radius. ``Benign'', ``Adv (Vanilla)'', ``Adv (Smoothed)'', and ``Certification'' stands for benign performance, vanilla models' performance under attacks, smoothed models' performance under attacks, and certified lower bound of smoothed model performance under attacks. Each column represents the results under different thresholds.}
    \begin{subtable}[]{\linewidth}\centering
    \caption{Detection rate under shifting transformation}
    \resizebox{\linewidth}{!}{
    \begin{tabular}{c|c|ccc|ccc|ccc|ccc}
        \hline
        \multirow{2}{*}{Model} & \multirow{2}{*}{Attack Radius} & \multicolumn{3}{c|}{Benign} & \multicolumn{3}{c|}{Adv (Vanilla)} & \multicolumn{3}{c|}{Adv (Smoothed)} & \multicolumn{3}{c}{Certification}\\
        & & \textbf{Det@20} & \textbf{Det@50} & \textbf{Det@80} & \textbf{Det@20} & \textbf{Det@50} & \textbf{Det@80} & \textbf{Det@20} & \textbf{Det@50} & \textbf{Det@80} & \textbf{Det@20} & \textbf{Det@50} & \textbf{Det@80} \\\hline\hline
        \multirowcell{5}{MonoCon\\\cite{liu2022learning}} & $10\leq z\leq 11$ & \multirow{5}{*}{100.00\%} & \multirow{5}{*}{100.00\%} & \multirow{5}{*}{100.00\%} & 87.10\% & 87.10\% & 66.13\% & 87.10\% & 87.10\% & 66.13\% & 83.87\% & 83.87\% & 64.52\% \\
                                                        & $10\leq z\leq 12$ & & & & 85.48\% &85.48\% & 62.90\% & 85.48\% &85.48\% & 62.90\% & 75.81\% & 75.81\% & 61.29\% \\
                                                        & $10\leq z\leq 13$ & & & & 82.26\% & 82.26\% & 56.45\% & 82.26\% & 82.26\% & 56.45\% & 72.58\% & 72.58\% & 51.61\% \\
                                                        & $10\leq z\leq 14$ & & & & 75.81\% & 75.81\% & 46.77\% & 75.81\% & 75.81\% & 46.77\% & 66.13\% & 66.13\% & 41.94\% \\
                                                        & $10\leq z\leq 15$ & & & & 50.00\% & 50.00\% & 27.42\% & 50.00\% & 50.00\% & 27.42\% & 48.39\% & 48.39\% & 27.42\% \\\hline
        \multirowcell{5}{SECOND\\\cite{yan2018second}} & $10\leq z\leq 11$ & \multirow{5}{*}{100.00\%} & \multirow{5}{*}{100.00\%} & \multirow{5}{*}{100.00\%} & 100.00\% & 70.97\% & 0.00\% & 100.00\% & 70.97\% & 0.00\% & 100.00\% & 0.00\% & 0.00\% \\
                                                     & $10\leq z\leq 12$ & & & & 100.00\% & 70.97\% & 0.00\% & 100.00\% & 70.97\% & 0.00\% & 100.00\% & 0.00\% & 0.00\% \\
                                                     & $10\leq z\leq 13$ & & & & 100.00\% & 12.90\% & 0.00\% & 100.00\% & 70.97\% & 0.00\% & 100.00\% & 0.00\% & 0.00\% \\
                                                     & $10\leq z\leq 14$ & & & & 100.00\% & 12.90\% & 0.00\% & 100.00\% & 70.97\% & 0.00\% & 100.00\% & 0.00\% & 0.00\% \\
                                                     & $10\leq z\leq 15$ & & & & 100.00\% & 12.90\% & 0.00\% & 100.00\% & 70.97\% & 0.00\% & 100.00\% & 0.00\% & 0.00\% \\\hline
        \multirowcell{5}{CLOCs\\\cite{pang2020clocs}} & $10\leq z\leq 11$ & \multirow{5}{*}{100.00\%} & \multirow{5}{*}{100.00\%} & \multirow{5}{*}{100.00\%} & 100.00\% & 100.00\% & 93.54\% & 100.00\% & 100.00\% & 93.54\% & 100.00\% & 100.00\% & 67.74\% \\
                                                    & $10\leq z\leq 12$ & & & & 100.00\% & 100.00\% & 93.54\% & 100.00\% & 100.00\% & 93.54\% & 100.00\% & 100.00\% & 66.13\% \\
                                                    & $10\leq z\leq 13$ & & & & 100.00\% & 100.00\% & 85.45\% & 100.00\% & 100.00\% & 88.71\% & 100.00\% & 100.00\% & 64.52\% \\
                                                    & $10\leq z\leq 14$ & & & & 100.00\% & 100.00\% & 64.52\% & 100.00\% & 100.00\% & 85.48\% & 100.00\% & 100.00\% & 62.90\% \\
                                                    & $10\leq z\leq 15$ & & & & 100.00\% & 100.00\% & 64.52\% & 100.00\% & 100.00\% & 83.87\% & 100.00\% & 100.00\% & 61.29\% \\\hline
        \multirowcell{5}{FocalsConv\\\cite{chen2022focal}} & $10\leq z\leq 11$ & \multirow{5}{*}{100.00\%} & \multirow{5}{*}{100.00\%} & \multirow{5}{*}{100.00\%} & 100.00\% & 100.00\% & 96.77\% & 100.00\% & 100.00\% & 96.77\% & 91.94\% & 88.71\% & 54.84\% \\
                                                         & $10\leq z\leq 12$ & & & & 100.00\% & 100.00\% & 96.77\% & 100.00\% & 100.00\% & 96.77\% & 87.10\% & 79.03\% & 4.84\% \\
                                                         & $10\leq z\leq 13$ & & & & 82.26\% & 22.58\% & 0.00\% & 82.26\% & 22.58\% & 0.00\% & 4.84\% & 0.00\% & 0.00\% \\
                                                         & $10\leq z\leq 14$ & & & & 14.52\% & 0.00\% & 0.00\% & 14.52\% & 0.00\% & 0.00\% & 0.00\% & 0.00\% & 0.00\% \\
                                                         & $10\leq z\leq 15$ & & & & 8.06\% & 0.00\% & 0.00\% & 8.06\% & 0.00\% & 0.00\% & 0.00\% & 0.00\% & 0.00\% \\\hline
    \end{tabular}
    }
    \label{appendix-tab:shift overview (a)}
    \end{subtable}
    \begin{subtable}[]{\linewidth}\centering
    \caption{IoU with ground truth under shifting transformation}
    \resizebox{\linewidth}{!}{
    \begin{tabular}{c|c|ccc|ccc|ccc|ccc}
        \hline
        \multirow{2}{*}{Model} & \multirow{2}{*}{Attack Radius} & \multicolumn{3}{c|}{Benign} & \multicolumn{3}{c|}{Adv (Vanilla)} & \multicolumn{3}{c|}{Adv (Smoothed)} & \multicolumn{3}{c}{Certification}\\
        & & \textbf{AP@30} & \textbf{AP@50} & \textbf{AP@80} & \textbf{AP@30} & \textbf{AP@50} & \textbf{AP@80} & \textbf{AP@30} & \textbf{AP@50} & \textbf{AP@80} & \textbf{AP@30} & \textbf{AP@50} & \textbf{AP@80} \\\hline\hline
        \multirowcell{5}{MonoCon\\\cite{liu2022learning}} & $10\leq z\leq 11$ & \multirow{5}{*}{100.00\%} & \multirow{5}{*}{100.00\%} & \multirow{5}{*}{100.00\%} & 77.42\% & 77.42\% & 41.94\% & 77.42\% & 77.42\% & 41.94\% & 74.19\% & 41.94\% & 0.00\% \\
                                                        & $10\leq z\leq 12$ & & & & 74.19\% & 74.19\% & 0.00\% & 74.19\% & 74.19\% & 0.00\% & 69.35\% & 1.61\% & 0.00\% \\
                                                        & $10\leq z\leq 13$ & & & & 72.58\% & 72.58\% & 0.00\% & 72.58\% & 72.58\% & 0.00\% & 61.29\% & 0.00\% & 0.00\% \\
                                                        & $10\leq z\leq 14$ & & & & 40.32\% & 33.87\% & 0.00\% & 40.32\% & 33.87\% & 0.00\% & 20.97\% & 0.00\% & 0.00\% \\
                                                        & $10\leq z\leq 15$ & & & & 6.45\% & 1.61\% & 0.00\% & 6.45\% & 1.61\% & 0.00\% & 0.00\% & 0.00\% & 0.00\% \\\hline
        \multirowcell{5}{SECOND\\\cite{yan2018second}} & $10\leq z\leq 11$ & \multirow{5}{*}{100.00\%} & \multirow{5}{*}{100.00\%} & \multirow{5}{*}{100.00\%} & 93.55\% & 93.55\% & 93.55\% & 100.00\% & 100.00\% & 0.00\% & 0.00\% & 0.00\% & 0.00\% \\
                                                     & $10\leq z\leq 12$ & & & & 93.55\% & 93.55\% & 93.55\% & 100.00\% & 100.00\% & 0.00\% & 0.00\% & 0.00\% & 0.00\% \\
                                                     & $10\leq z\leq 13$ & & & & 87.10\% & 87.10\% & 87.10\% & 100.00\% & 100.00\% & 0.00\% & 0.00\% & 0.00\% & 0.00\% \\
                                                     & $10\leq z\leq 14$ & & & & 87.10\% & 87.10\% & 87.10\% & 100.00\% & 100.00\% & 0.00\% & 0.00\% & 0.00\% & 0.00\% \\
                                                     & $10\leq z\leq 15$ & & & &87.10\% & 87.10\% & 87.10\% & 100.00\% & 100.00\% & 0.00\% & 0.00\% & 0.00\% & 0.00\% \\\hline
        \multirowcell{5}{CLOCs\\\cite{pang2020clocs}} & $10\leq z\leq 11$ & \multirow{5}{*}{100.00\%} & \multirow{5}{*}{100.00\%} & \multirow{5}{*}{100.00\%} & 93.55\% & 93.55\% & 93.55\% & 100.00\% & 100.00\% & 100.00\% & 79.03\% & 79.03\% & 0.00\% \\
                                                    & $10\leq z\leq 12$ & & & & 80.65\% & 80.65\% & 80.65\% & 80.65\% & 80.65\% & 80.65\% & 51.61\% & 51.61\% & 0.00\% \\
                                                    & $10\leq z\leq 13$ & & & & 80.65\% & 80.65\% & 77.42\% & 80.65\% & 80.65\% & 77.42\% & 51.61\% & 48.39\% & 0.00\% \\
                                                    & $10\leq z\leq 14$ & & & & 80.65\% & 80.65\% & 77.42\% & 80.65\% & 80.65\% & 77.42\% & 51.61\% & 48.39\% & 0.00\% \\
                                                    & $10\leq z\leq 15$ & & & & 80.65\% & 80.65\% & 77.42\% & 80.65\% & 80.65\% & 77.42\% & 51.61\% & 48.39\% & 0.00\% \\\hline
        \multirowcell{5}{FocalsConv\\\cite{chen2022focal}} & $10\leq z\leq 11$ & \multirow{5}{*}{100.00\%} & \multirow{5}{*}{100.00\%} & \multirow{5}{*}{100.00\%} & 97.77\% & 0.00\% & 0.00\% & 100.00\% & 100.00\% & 0.00\% & 85.48\% & 0.00\% & 0.00\% \\
                                                         & $10\leq z\leq 12$ & & & & 0.00\% & 0.00\% & 0.00\% & 100.00\% & 100.00\% & 0.00\% & 83.87\% & 0.00\% & 0.00\% \\
                                                         & $10\leq z\leq 13$ & & & & 0.00\% & 0.00\% & 0.00\% & 82.26\% & 82.26\% & 0.00\% & 4.84\% & 0.00\% & 0.00\% \\
                                                         & $10\leq z\leq 14$ & & & & 0.00\% & 0.00\% & 0.00\% & 14.52\% & 14.52\% & 0.00\% & 0.00\% & 0.00\% & 0.00\% \\
                                                         & $10\leq z\leq 15$ & & & & 0.00\% & 0.00\% & 0.00\% & 8.06\% & 8.06\% & 0.00\% & 0.00\% & 0.00\% & 0.00\% \\\hline
    \end{tabular}
    }
    \label{appendix-tab:shift overview (b)}
    \end{subtable}
    \label{appendix-tab:shift overview}
\end{table*}

{
\renewcommand{\thesubfigure}{\alph{subfigure}}

\begin{figure*}[ht]

\settoheight{\utilheightc}{\includegraphics[width=.160\linewidth]{rotation_conf_0.2.pdf}}%

\settoheight{\utilheightd}{\includegraphics[width=.165\linewidth]{rotation_IoU_0.3.pdf}}%







\setlength{\legendheightb}{0.48\utilheightc}%

\newcommand{\rownamec}[1]
{\rotatebox{90}{\makebox[\utilheightc][c]{\tiny #1}}}

\newcommand{\rownamed}[1]
{\rotatebox{90}{\makebox[\utilheightd][c]{\tiny #1}}}

\centering

{
\renewcommand{\tabcolsep}{10pt}

\begin{subtable}[]{\linewidth}
\begin{tabular}{l}
\includegraphics[height=\legendheightb]{legend_marker.pdf}
\end{tabular}
\end{subtable}
\vspace{-2pt}
\begin{subtable}{\linewidth}
\centering
\resizebox{\linewidth}{!}{%
\begin{tabular}{@{}p{3mm}@{}c@{}c@{}c@{}c@{}c@{}c@{}}
        & \makecell{\tiny{$\text{TH}_\text{conf}=0.2$}}
        & \makecell{\tiny{$\text{TH}_\text{conf}=0.5$}}
        & \makecell{\tiny{$\text{TH}_\text{conf}=0.8$}}
        & \makecell{\tiny{$\text{TH}_\text{IoU}=0.3$}}
        & \makecell{\tiny{$\text{TH}_\text{IoU}=0.5$}}
        & \makecell{\tiny{$\text{TH}_\text{IoU}=0.8$}}
        \vspace{-2pt}\\
\rownamec{\makecell{Detection}}&
\includegraphics[height=\utilheightc]{rotation_conf_0.2.pdf}&
\includegraphics[height=\utilheightc]{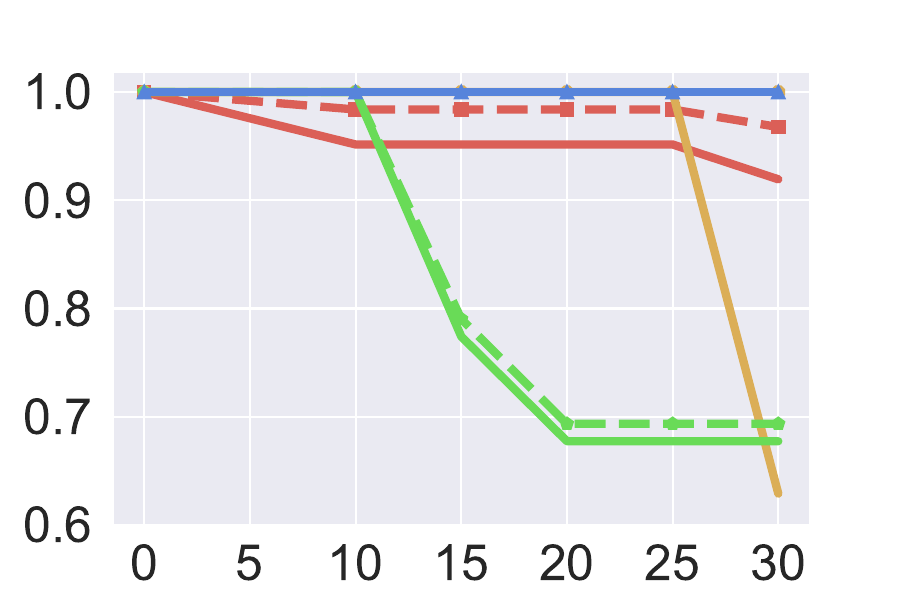}&
\includegraphics[height=\utilheightc]{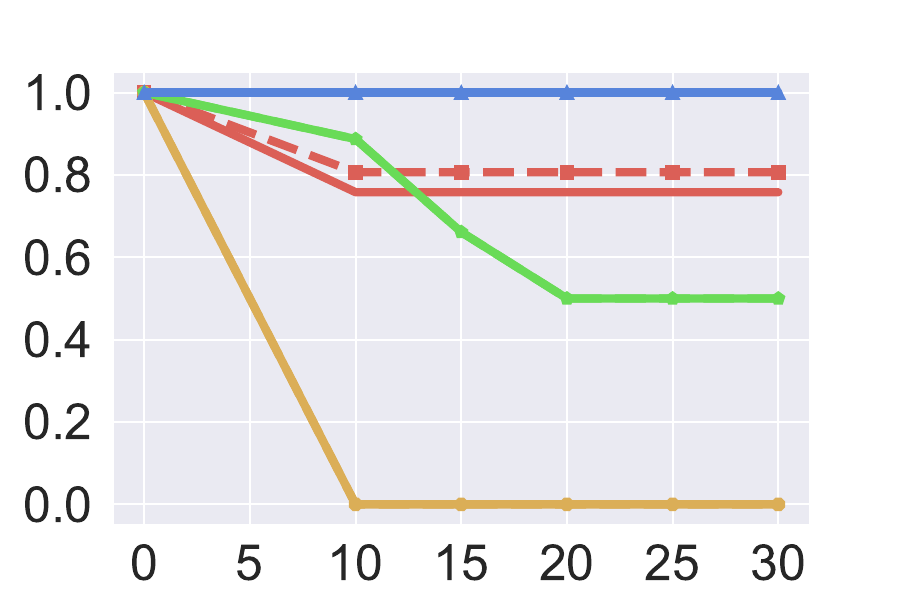}
\rownamec{\makecell{IoU}}&
\includegraphics[height=\utilheightd]{rotation_IoU_0.3.pdf}&
\includegraphics[height=\utilheightd]{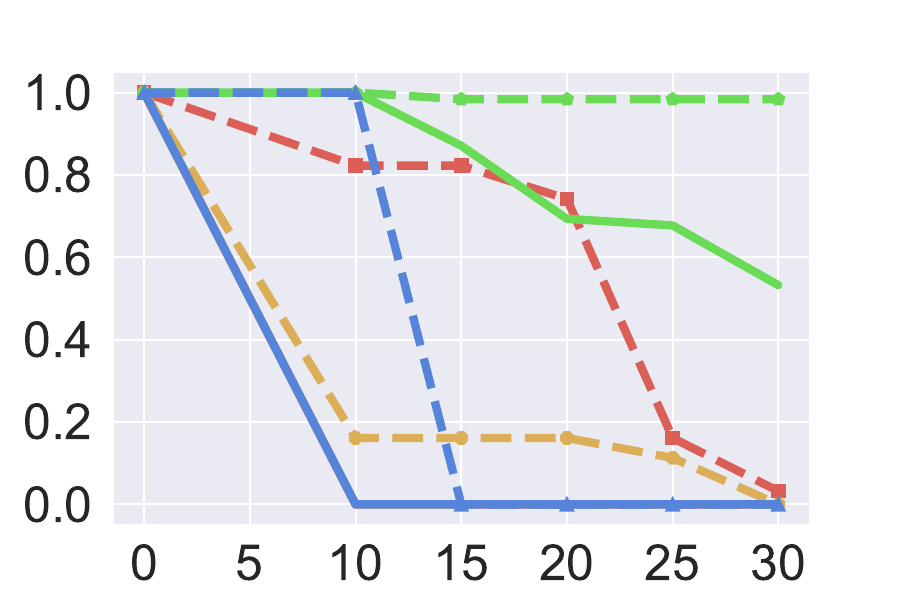}&
\includegraphics[height=\utilheightd]{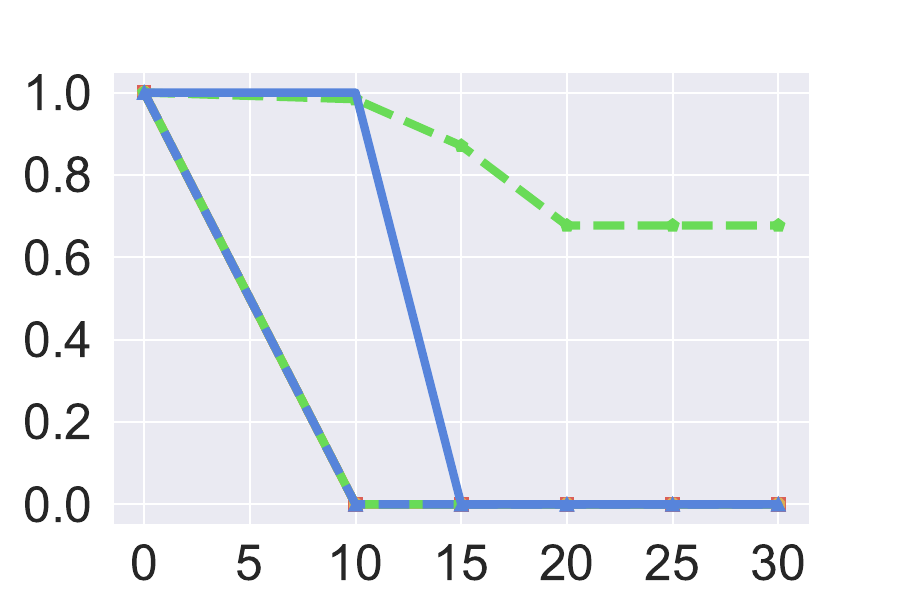}\\[-1.2ex]
        & \makecell{\tiny{attack $\epsilon$}}
        & \makecell{\tiny{attack $\epsilon$}}
        & \makecell{\tiny{attack $\epsilon$}}
        & \makecell{\tiny{attack $\epsilon$}}
        & \makecell{\tiny{attack $\epsilon$}}
        & \makecell{\tiny{attack $\epsilon$}}
\end{tabular}
}
\caption{\small Robustness certification for rotation transformation (smoothing $\sigma=0.25$)}\label{tab:bound-rotation}
\end{subtable}

\begin{subtable}{\linewidth}
\centering
\resizebox{\linewidth}{!}{%
\begin{tabular}{@{}p{3mm}@{}c@{}c@{}c@{}c@{}c@{}c@{}}
        & \makecell{\tiny{$\text{TH}_\text{conf}=0.2$}}
        & \makecell{\tiny{$\text{TH}_\text{conf}=0.5$}}
        & \makecell{\tiny{$\text{TH}_\text{conf}=0.8$}}
        & \makecell{\tiny{$\text{TH}_\text{IoU}=0.3$}}
        & \makecell{\tiny{$\text{TH}_\text{IoU}=0.5$}}
        & \makecell{\tiny{$\text{TH}_\text{IoU}=0.8$}}\\
\rownamec{\makecell{Detection}}&
\includegraphics[height=\utilheightc]{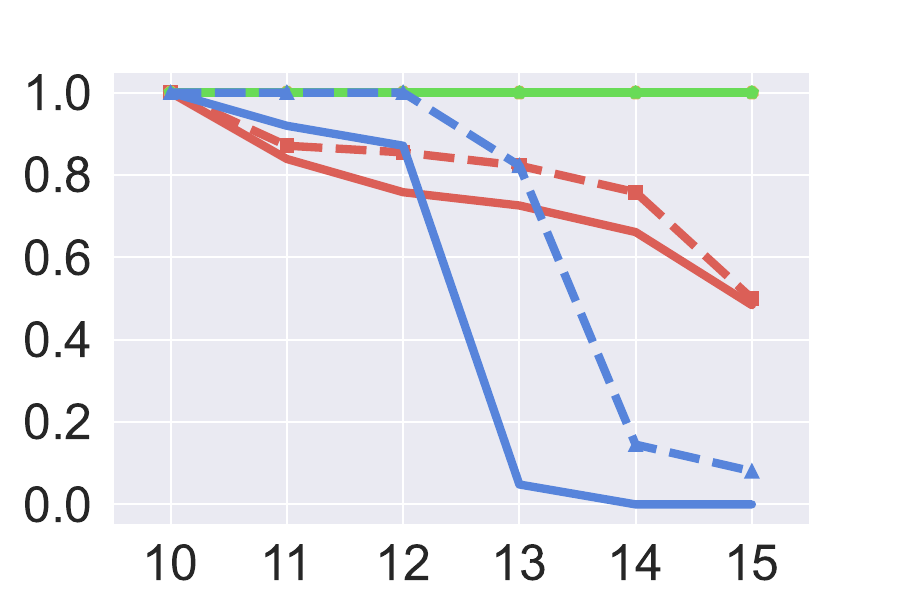}&
\includegraphics[height=\utilheightc]{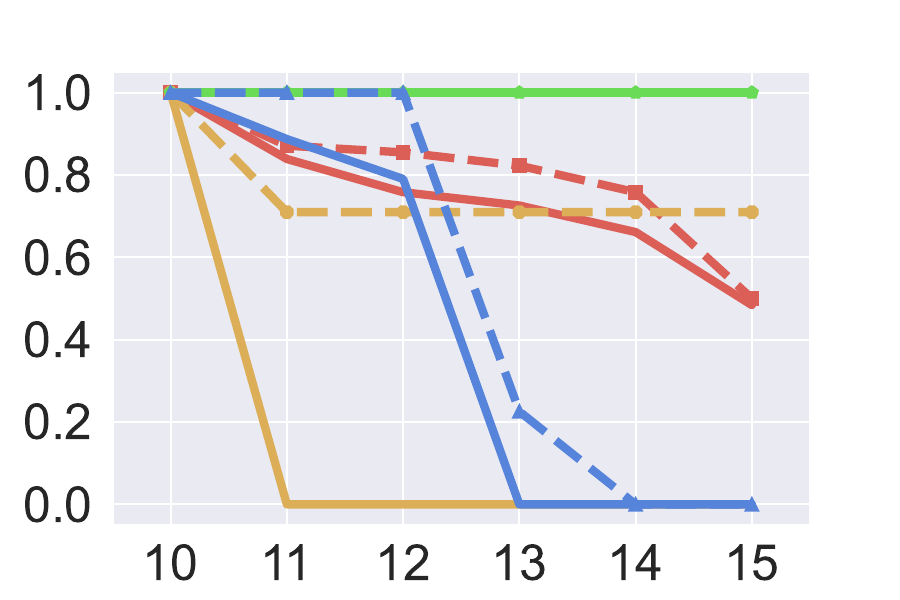}&
\includegraphics[height=\utilheightc]{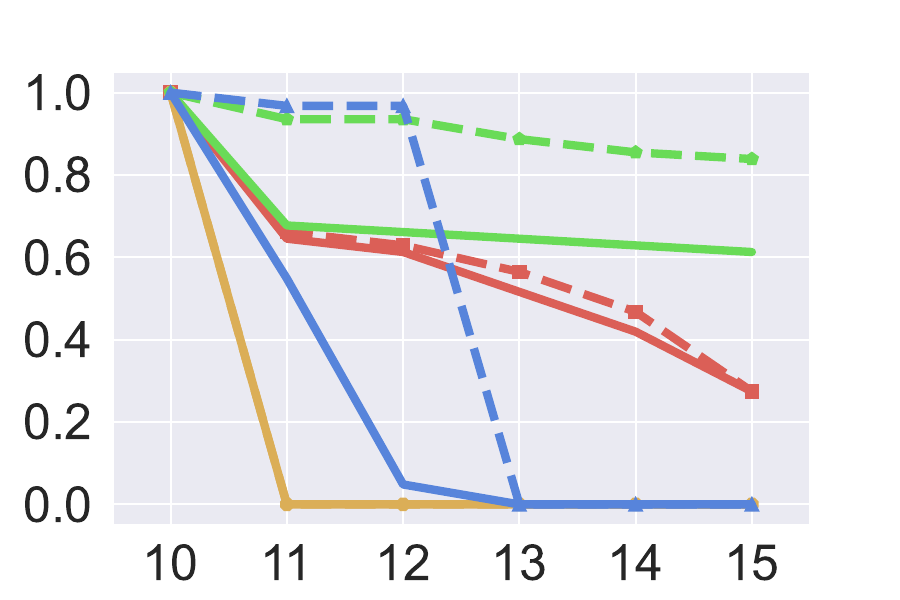}
\rownamec{\makecell{IoU}}&
\includegraphics[height=\utilheightd]{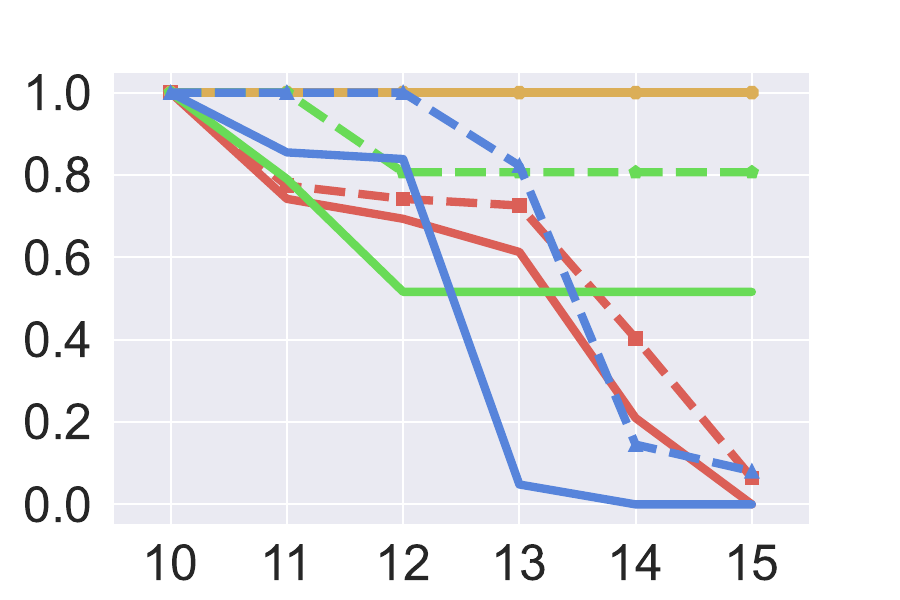}&
\includegraphics[height=\utilheightd]{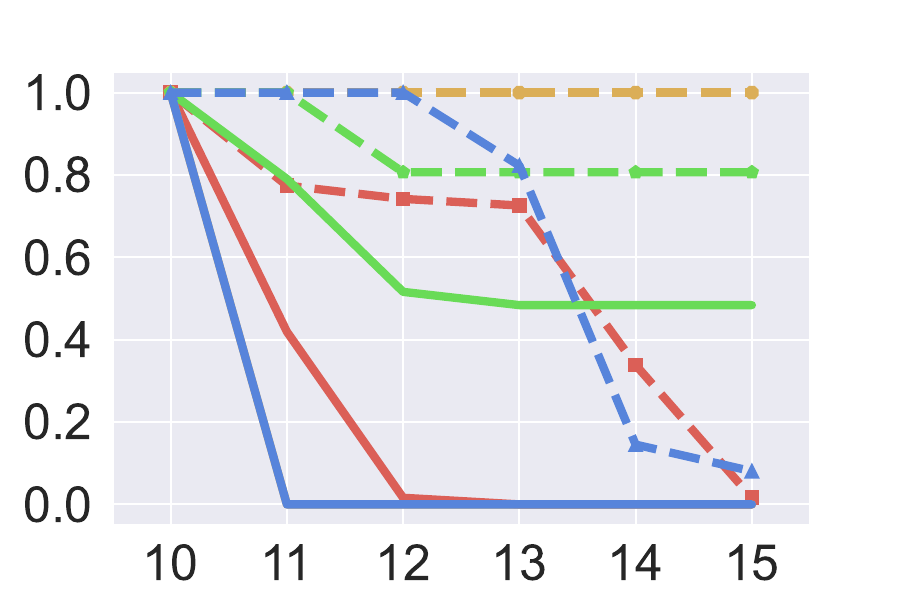}&
\includegraphics[height=\utilheightd]{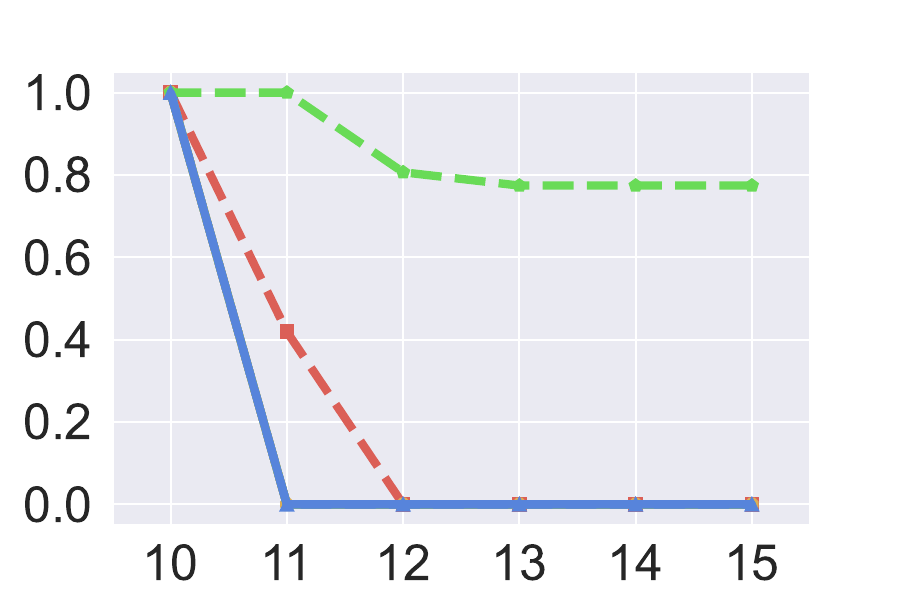}\\[-1.2ex]
        & \makecell{\tiny{attack $\epsilon$}}
        & \makecell{\tiny{attack $\epsilon$}}
        & \makecell{\tiny{attack $\epsilon$}}
        & \makecell{\tiny{attack $\epsilon$}}
        & \makecell{\tiny{attack $\epsilon$}}
        & \makecell{\tiny{attack $\epsilon$}}
\end{tabular}
}
\caption{\small Robustness certification for shift transformation}\label{tab:bound-shift}
\end{subtable}

\begin{subtable}{\linewidth}
\centering
\resizebox{\linewidth}{!}{%
\begin{tabular}{@{}p{3mm}@{}c@{}c@{}c@{}c@{}c@{}c@{}}
        & \makecell{\tiny{$\text{TH}_\text{conf}=0.2$}}
        & \makecell{\tiny{$\text{TH}_\text{conf}=0.5$}}
        & \makecell{\tiny{$\text{TH}_\text{conf}=0.8$}}
        & \makecell{\tiny{$\text{TH}_\text{IoU}=0.3$}}
        & \makecell{\tiny{$\text{TH}_\text{IoU}=0.5$}}
        & \makecell{\tiny{$\text{TH}_\text{IoU}=0.8$}}\\
\rownamec{\makecell{Detection}}&
\includegraphics[height=\utilheightc]{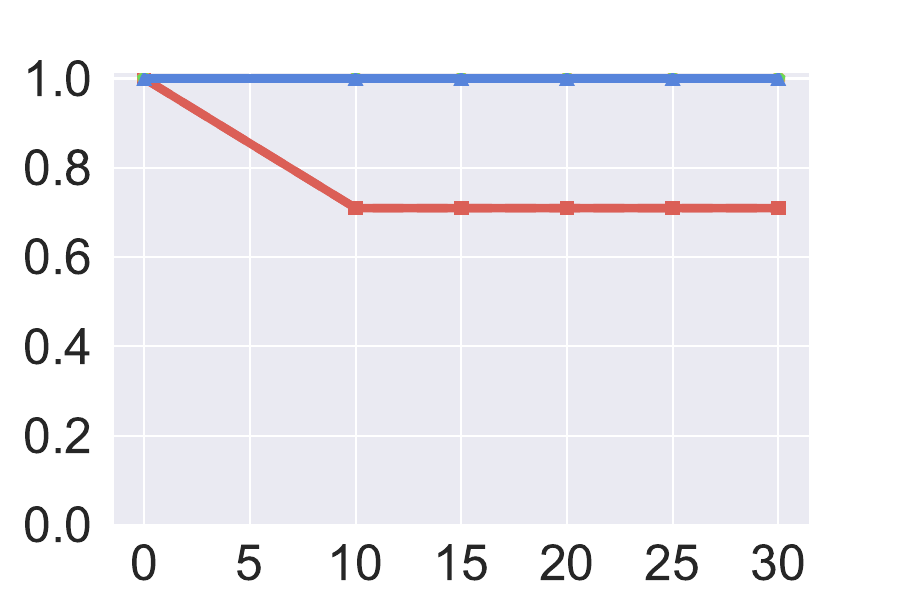}&
\includegraphics[height=\utilheightc]{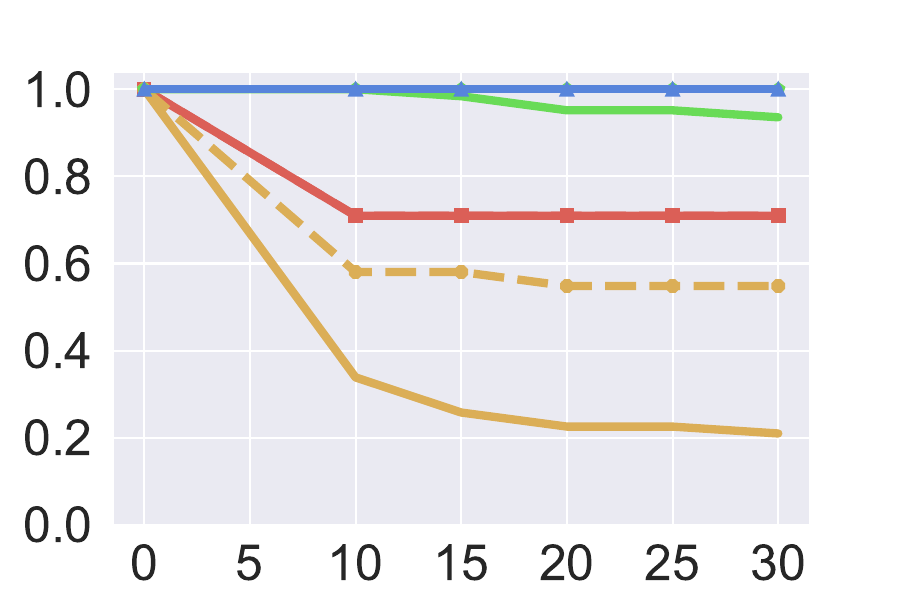}&
\includegraphics[height=\utilheightc]{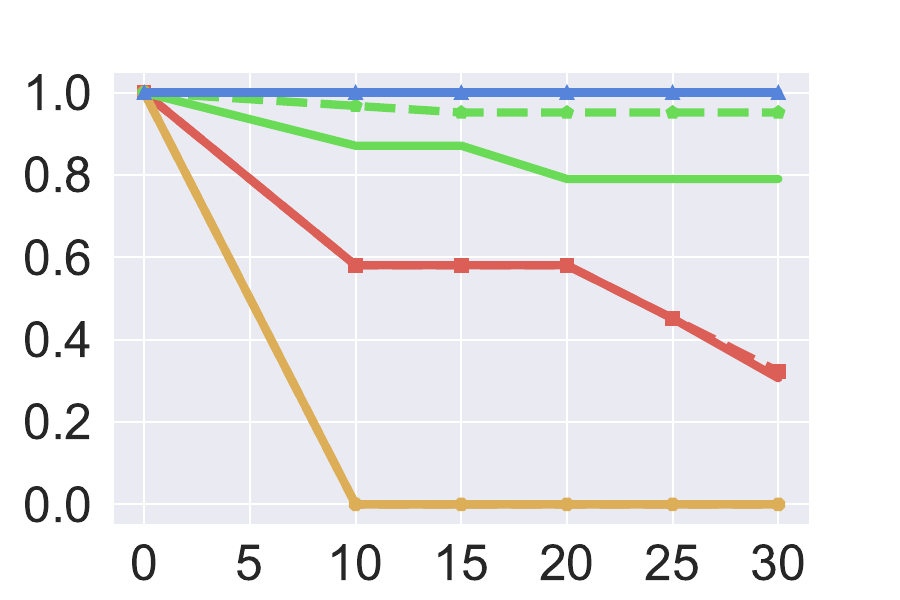}
\rownamec{\makecell{IoU}}&
\includegraphics[height=\utilheightd]{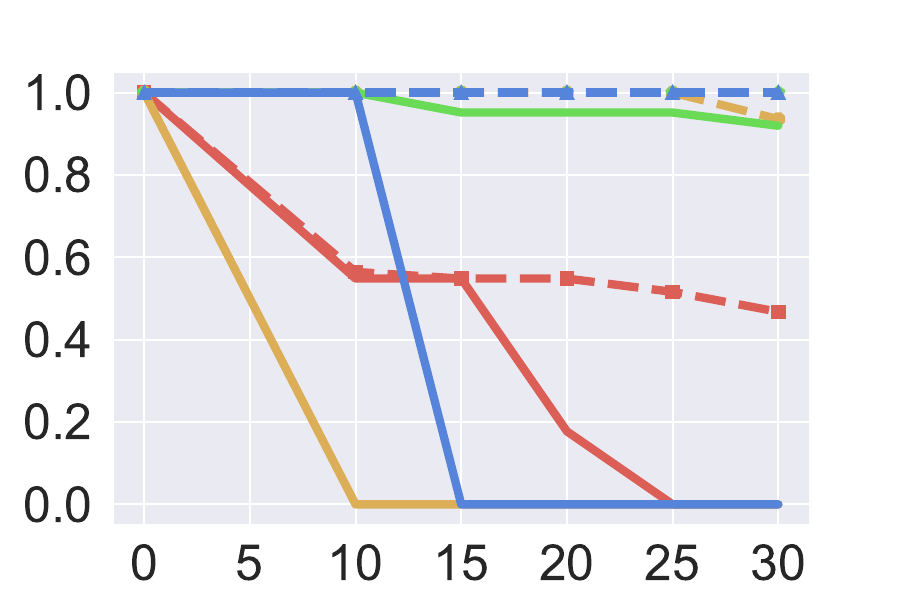}&
\includegraphics[height=\utilheightd]{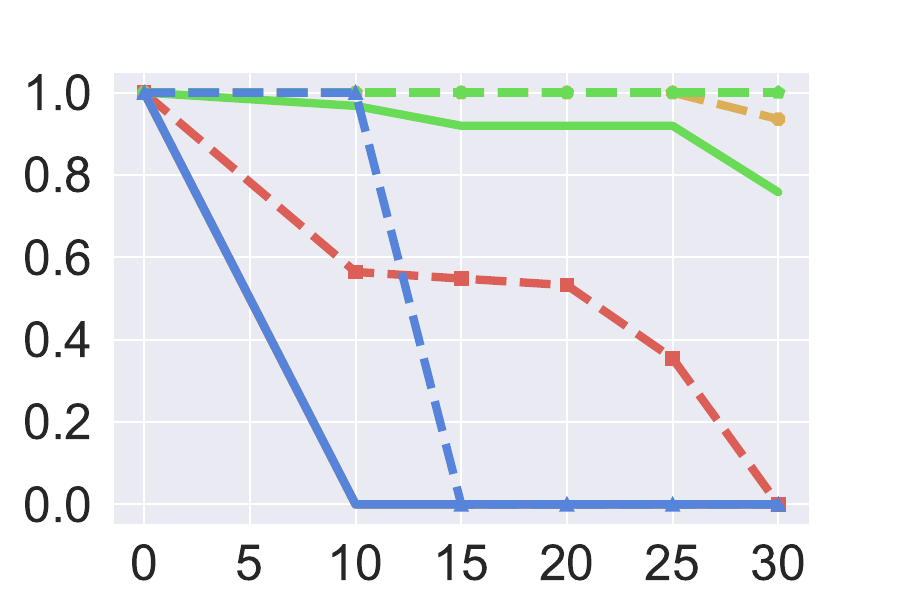}&
\includegraphics[height=\utilheightd]{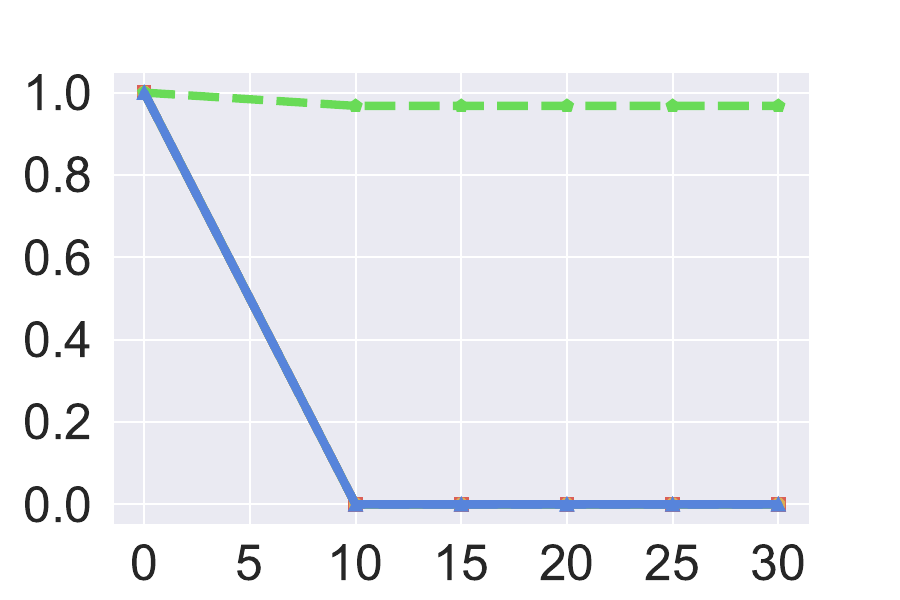}\\[-1.2ex]
        & \makecell{\tiny{attack $\epsilon$}}
        & \makecell{\tiny{attack $\epsilon$}}
        & \makecell{\tiny{attack $\epsilon$}}
        & \makecell{\tiny{attack $\epsilon$}}
        & \makecell{\tiny{attack $\epsilon$}}
        & \makecell{\tiny{attack $\epsilon$}}
\end{tabular}
}
\caption{\small Robustness certification for rotation transformation (smoothing $\sigma=0.5$)}\label{tab:bound-rotation-0.5}
\end{subtable}

}

\caption{\small Robustness certification for rotation and shifting transformation, including detection rate bound and IoU bound. Solid lines represent the certified bounds, and dashed lines show the empirical performance under PGD.
}%
\label{fig:all-bounds}
\end{figure*}
}

{
\renewcommand{\thesubfigure}{\alph{subfigure}}

\begin{figure*}[ht]

\settoheight{\utilheightc}{\includegraphics[width=.160\linewidth]{rotation_conf_0.2.pdf}}%

\settoheight{\utilheightd}{\includegraphics[width=.165\linewidth]{rotation_IoU_0.3.pdf}}%







\setlength{\legendheightb}{0.48\utilheightc}%

\newcommand{\rownamec}[1]
{\rotatebox{90}{\makebox[\utilheightc][c]{\tiny #1}}}

\newcommand{\rownamed}[1]
{\rotatebox{90}{\makebox[\utilheightd][c]{\tiny #1}}}

\centering

{
\renewcommand{\tabcolsep}{10pt}

\begin{subtable}[]{\linewidth}
\begin{tabular}{l}
\includegraphics[height=\legendheightb]{legend_marker.pdf}
\end{tabular}
\end{subtable}
\vspace{-2pt}

\begin{subtable}{\linewidth}
\centering
\resizebox{\linewidth}{!}{%
\begin{tabular}{@{}p{3mm}@{}c@{}c@{}c@{}c@{}c@{}}
        & \makecell{\tiny{$10\leq z\leq11$}}
        & \makecell{\tiny{$10\leq z\leq12$}}
        & \makecell{\tiny{$10\leq z\leq13$}}
        & \makecell{\tiny{$10\leq z\leq14$}}
        & \makecell{\tiny{$10\leq z\leq15$}}
        \vspace{-2pt}\\
\rownamec{\makecell{Detection}}&
\includegraphics[height=\utilheightc]{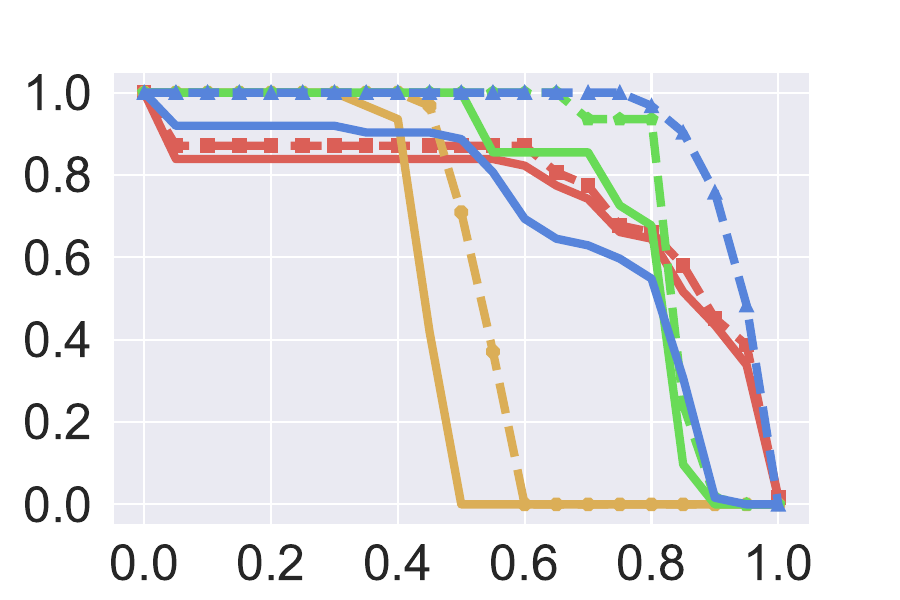}&
\includegraphics[height=\utilheightc]{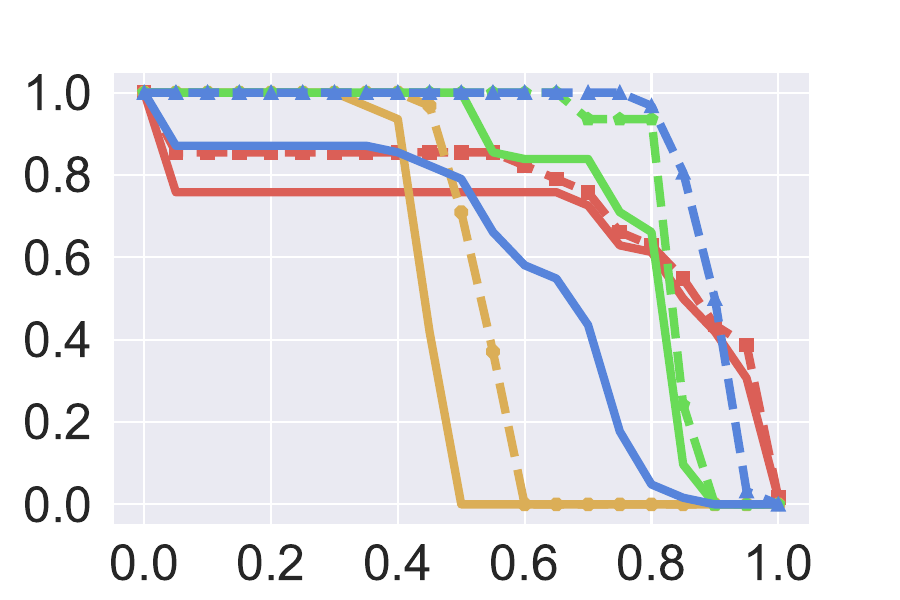}&
\includegraphics[height=\utilheightc]{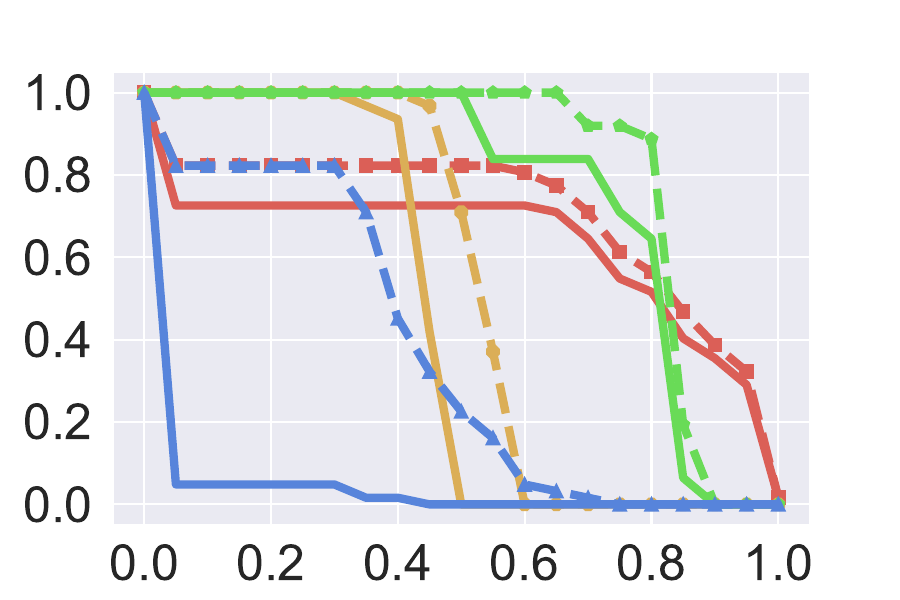}&
\includegraphics[height=\utilheightc]{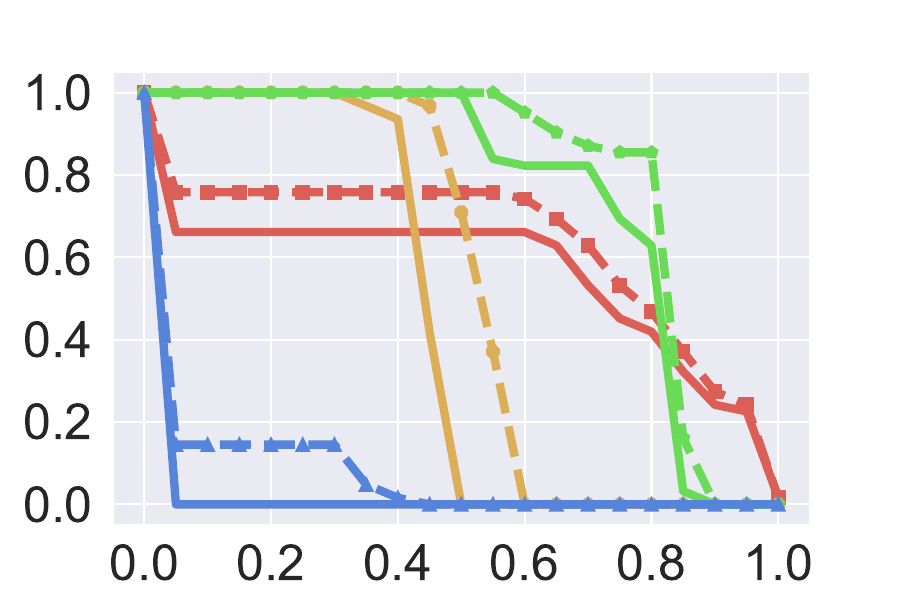}&
\includegraphics[height=\utilheightc]{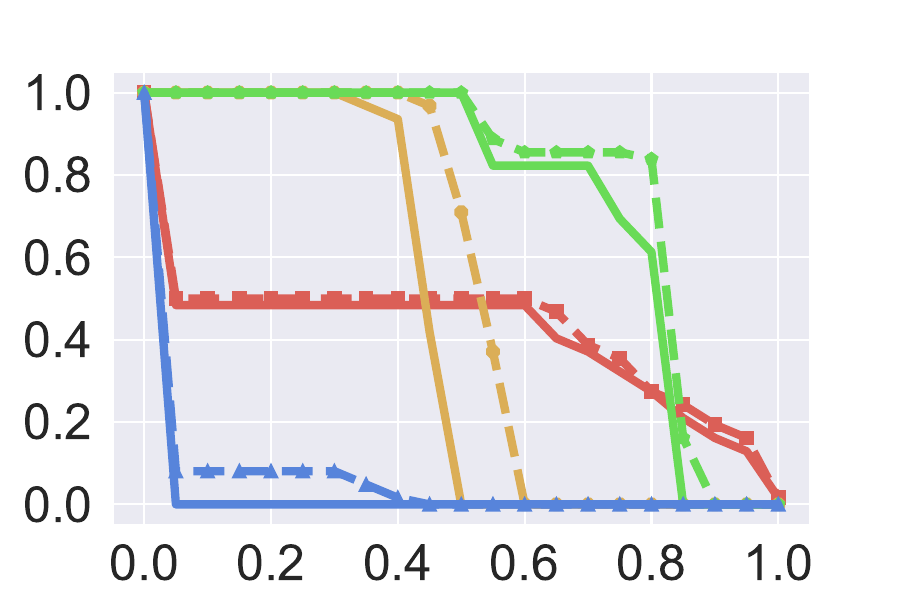}\\[-1.2ex]
        & \makecell{\tiny{$\text{TH}_\text{conf}$}}
        & \makecell{\tiny{$\text{TH}_\text{conf}$}}
        & \makecell{\tiny{$\text{TH}_\text{conf}$}}
        & \makecell{\tiny{$\text{TH}_\text{conf}$}}
        & \makecell{\tiny{$\text{TH}_\text{conf}$}}\\
\rownamec{\makecell{IoU}}&
\includegraphics[height=\utilheightc]{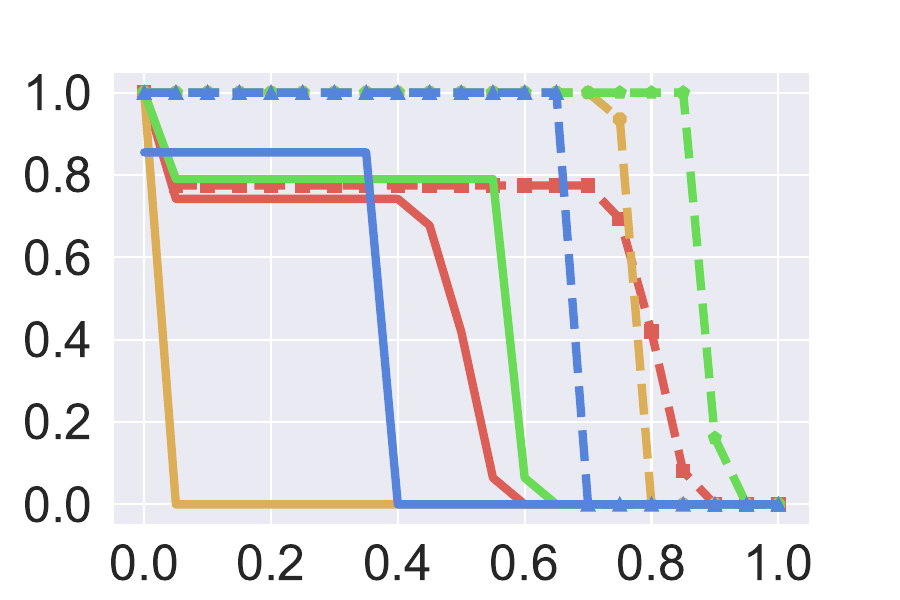}&
\includegraphics[height=\utilheightc]{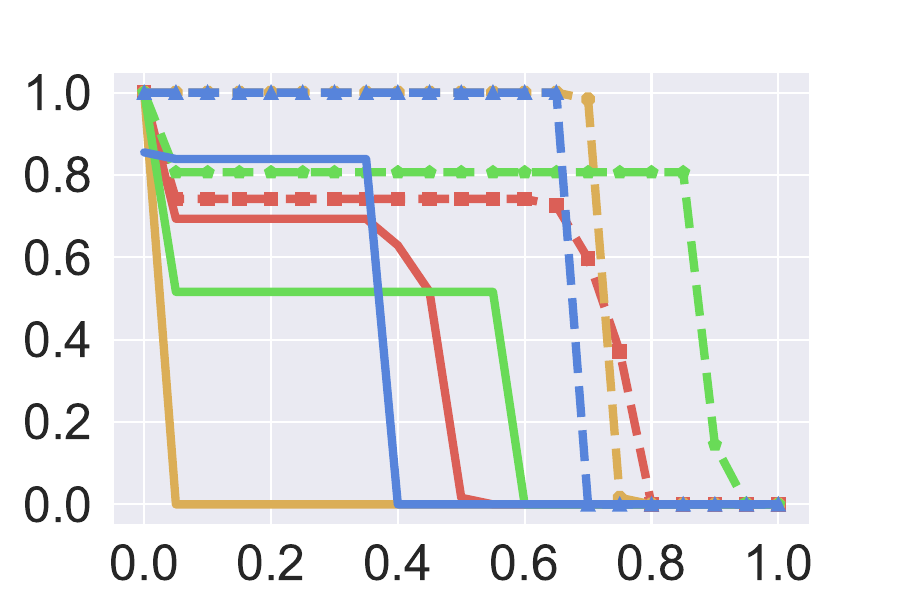}&
\includegraphics[height=\utilheightc]{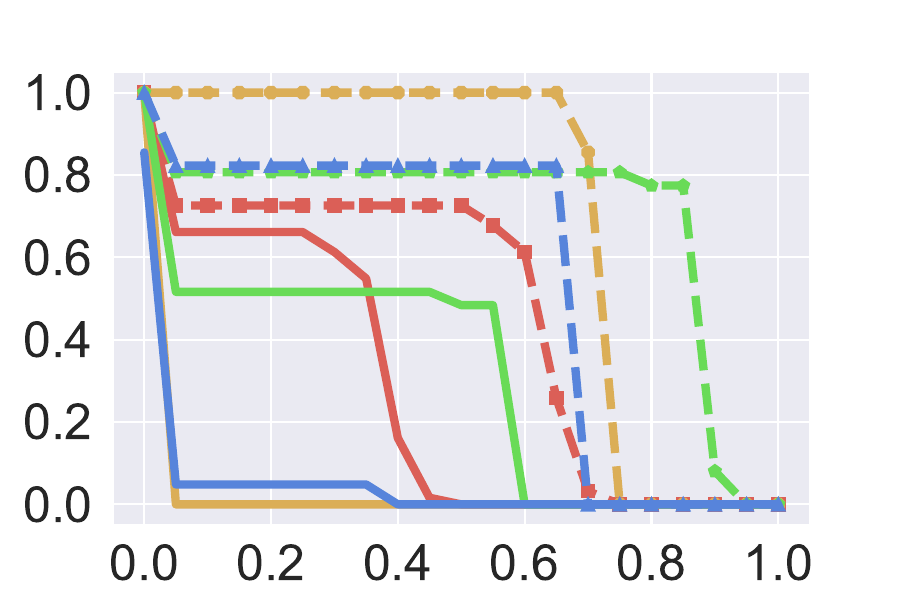}&
\includegraphics[height=\utilheightc]{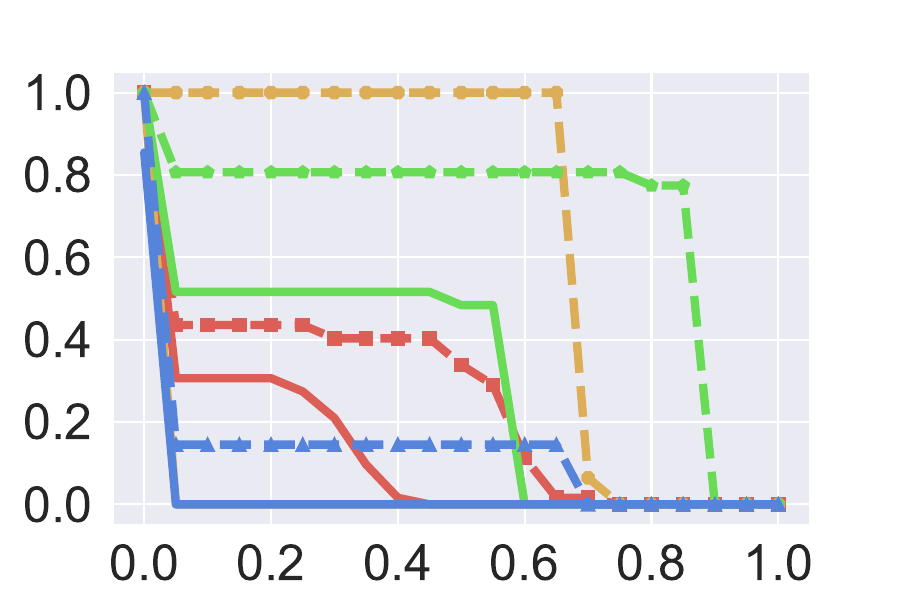}&
\includegraphics[height=\utilheightc]{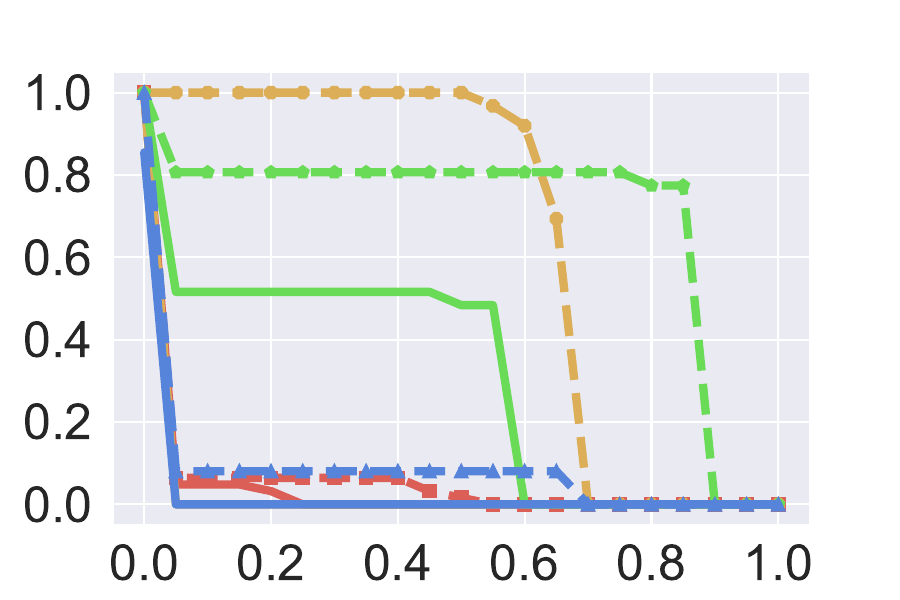}\\[-1.2ex]
        & \makecell{\tiny{$\text{TH}_\text{IoU}$}}
        & \makecell{\tiny{$\text{TH}_\text{IoU}$}}
        & \makecell{\tiny{$\text{TH}_\text{IoU}$}}
        & \makecell{\tiny{$\text{TH}_\text{IoU}$}}
        & \makecell{\tiny{$\text{TH}_\text{IoU}$}}
\end{tabular}
}
\end{subtable}

}

\caption{\small Robustness certification for shifting transformation, including detection rate bound and IoU bound. Solid lines represent the certified bounds, and dashed lines show the empirical performance under PGD.
}%
\label{fig:shift-bounds-th}
\end{figure*}
}

\subsection{Effect of Sample Strategy}
\label{subsubsec: sample-strategy}
To study the effect of sampling strategies, we compare two different sampling strategies and show the results in \Cref{tab:rotation sample strategy}. The first strategy is fixing the size of intervals (e.g. $0.1^\circ$ rotation intervals in out case), which is named ``Certification (sparse)'' in \Cref{tab:rotation sample strategy}, and the second strategy is fixing small interval number (e.g. 600 small intervals in each big rotation interval), which is called ``Certification (dense)" in \Cref{tab:rotation sample strategy}.

From certified detection rate \Cref{tab:rotation sample strategy (a)} and certified IoU \Cref{tab:rotation sample strategy (b)}, we can find that the ``Certification (sparse)" is already tight enough when the sample number in each small interval stays the same since the certified detection rate and IoU in ``Certification (sparse)" is almost the same as those in ``Certification (dense)". 

\begin{table*}[h]
    \fontsize{5.5}{5.5}\selectfont
    \centering
    \caption{Overview of rotation transition experiment results with different sampling strategies under the condition ``color 1 with buildings without pedestrian''. Each row represents the corresponding model and attack radius. Columns ``Certification (sparse)'', and ``Certification (dense)'' represent the certified lower bound of performance under attacks with intervals of $0.1^\circ$ and 600 intervals respectively.}
    \begin{subtable}[]{\linewidth}\centering
    \caption{Certified rotation detection rate.}
    \begin{tabular}{c|c|ccc|ccc}
        \hline
        \multirow{2}{*}{Model} & \multirow{2}{*}{Attack Radius} & \multicolumn{3}{c|}{Certification (sparse)} & \multicolumn{3}{c}{Certification (dense)}\\
        & & \textbf{Det@20} & \textbf{Det@50} & \textbf{Det@80} & \textbf{Det@20} & \textbf{Det@50} & \textbf{Det@80} \\\hline\hline
        \multirowcell{3}{MonoCon\\\cite{liu2022learning}} & $r\pm10^\circ$ & 93.33\% & 93.33\% & 86.67\% & 93.33\% & 93.33\% & 86.67\% \\
                                                        & $r\pm20^\circ$ & 93.33\% & 93.33\% & 86.67\% & 93.33\% & 93.33\% & 86.67\% \\
                                                        & $r\pm30^\circ$ & 86.67\% & 86.67\% & 86.67\% & 86.67\% & 86.67\% & 86.67\% \\\hline
        \multirowcell{3}{SECOND\\\cite{yan2018second}} & $r\pm10^\circ$ & 100.00\% & 100.00\% & 0.00\% & 100.00\% & 100.00\% & 0.00\% \\
                                                     & $r\pm20^\circ$ & 100.00\% & 100.00\% & 0.00\% & 100.00\% & 100.00\% & 0.00\% \\
                                                     & $r\pm30^\circ$ & 100.00\% & 66.67\% & 0.00\% & 100.00\% & 66.67\% & 0.00\% \\\hline
        \multirowcell{3}{CLOCs\\\cite{pang2020clocs}} & $r\pm10^\circ$ & 100.00\% & 100.00\% & 100.00\% & 100.00\% & 100.00\% & 100.00\% \\
                                                    & $r\pm20^\circ$ & 100.00\% & 80.00\% & 73.33\% & 100.00\% & 80.00\% & 73.33\% \\
                                                    & $r\pm30^\circ$ & 100.00\% & 80.00\% & 73.33\% & 100.00\% & 80.00\% & 73.33\% \\\hline
        \multirowcell{3}{FocalsConv\\\cite{chen2022focal}} & $r\pm10^\circ$ & 100.00\% & 100.00\%  & 100.00\% & 100.00\% & 100.00\%  & 100.00\% \\
                                                         & $r\pm20^\circ$ & 100.00\% & 100.00\%  & 100.00\% & 100.00\% & 100.00\%  & 100.00\% \\
                                                         & $r\pm30^\circ$ & 100.00\% & 100.00\% & 100.00\% & 100.00\% & 100.00\% & 100.00\% \\\hline
    \end{tabular}
    \label{tab:rotation sample strategy (a)}
    \end{subtable}
    \begin{subtable}[]{\linewidth}\centering
    \caption{Certified rotation IoU.}
    \begin{tabular}{c|c|ccc|ccc}
        \hline
        \multirow{2}{*}{Model} & \multirow{2}{*}{Attack Radius} & \multicolumn{3}{c}{Certification (sparse)} & \multicolumn{3}{c}{Certification (dense)}\\
        & & \textbf{AP@30} & \textbf{AP@50} & \textbf{AP@70 } & \textbf{AP@30} & \textbf{AP@50} & \textbf{AP@70 } \\\hline\hline
        \multirowcell{3}{MonoCon\\\cite{liu2022learning}} & $r\pm10^\circ$ & 86.67\% & 0.00\% & 0.00\% & 86.67\% & 0.00\% & 0.00\% \\
                                                        & $r\pm20^\circ$ & 13.33\% & 0.00\% & 0.00\% & 13.33\% & 0.00\% & 0.00\% \\
                                                        & $r\pm30^\circ$ & 0.00\% & 0.00\% & 0.00\% & 0.00\% & 0.00\% & 0.00\% \\\hline
        \multirowcell{3}{SECOND\\\cite{yan2018second}} & $r\pm10^\circ$ & 0.00\% & 0.00\% & 0.00\% & 0.00\% & 0.00\% & 0.00\% \\
                                                     & $r\pm20^\circ$ & 0.00\% & 0.00\% & 0.00\% & 0.00\% & 0.00\% & 0.00\% \\
                                                     & $r\pm30^\circ$ & 0.00\% & 0.00\% & 0.00\% & 0.00\% & 0.00\% & 0.00\% \\\hline
        \multirowcell{3}{CLOCs\\\cite{pang2020clocs}} & $r\pm10^\circ$ & 100.00\% & 100.00\% & 0.00\% & 100.00\% & 100.00\% & 0.00\% \\
                                                    & $r\pm20^\circ$ & 100.00\% & 93.33\% & 0.00\% & 100.00\% & 93.33\% & 0.00\% \\
                                                    & $r\pm30^\circ$ & 100.00\% & 53.33\% & 0.00\% & 100.00\% & 53.33\% & 0.00\% \\\hline
        \multirowcell{3}{FocalsConv\\\cite{chen2022focal}} & $r\pm10^\circ$ & 100.00\% & 0.00\% & 0.00\% & 100.00\% & 0.00\% & 0.00\% \\
                                                         & $r\pm20^\circ$ & 0.00\% & 0.00\% & 0.00\% & 13.33\% & 0.00\% & 0.00\% \\
                                                         & $r\pm30^\circ$ & 0.00\% & 0.00\% & 0.00\% & 0.00\% & 0.00\% & 0.00\% \\\hline
    \end{tabular}
    \label{tab:rotation sample strategy (b)}
    \end{subtable}
    \label{tab:rotation sample strategy}
\end{table*}

\subsection{Effect of Smoothing Parameter $\sigma$}
\label{subsubsec: smoothing-parameter}
To study the effect of smoothing parameter $\sigma$, we test our approach with random Gaussian noise whose $\sigma=0.5$ in our rotation setting to compare the previous results on rotation transformation with noises whose $\sigma=0.25$ (\Cref{appendix-tab:rotation overview 1}). 

As shown in \Cref{tab:bound-rotation-0.5}, \Cref{fig:rotation-bounds-th-0.5} and \Cref{appendix-tab:rotation overview 2}, the models' performance might degrade in small attack radius with larger smoothing $\sigma$, but smoothing with larger $\sigma$ can also improve the robustness of models against large attack radius, especially for the model which is more stable than others (CLOCs in our case).

{
\renewcommand{\thesubfigure}{\alph{subfigure}}

\begin{figure*}[!t]

\settoheight{\utilheightc}{\includegraphics[width=.160\linewidth]{rotation_conf_0.2.pdf}}%

\settoheight{\utilheightd}{\includegraphics[width=.165\linewidth]{rotation_IoU_0.3.pdf}}%







\setlength{\legendheightb}{0.48\utilheightc}%

\newcommand{\rownamec}[1]
{\rotatebox{90}{\makebox[\utilheightc][c]{\tiny #1}}}

\newcommand{\rownamed}[1]
{\rotatebox{90}{\makebox[\utilheightd][c]{\tiny #1}}}

\centering

{
\renewcommand{\tabcolsep}{10pt}

\begin{subtable}[]{\linewidth}
\begin{tabular}{l}
\includegraphics[height=\legendheightb]{legend_marker.pdf}
\end{tabular}
\end{subtable}
\begin{subtable}{\linewidth}
\centering
\resizebox{\linewidth}{!}{%
\begin{tabular}{@{}p{3mm}@{}c@{}c@{}c@{}c@{}c@{}}
        & \makecell{\tiny{$|r|\leq 10^\circ$}}
        & \makecell{\tiny{$|r|\leq 15^\circ$}}
        & \makecell{\tiny{$|r|\leq 20^\circ$}}
        & \makecell{\tiny{$|r|\leq 25^\circ$}}
        & \makecell{\tiny{$|r|\leq 30^\circ$}}
        \vspace{-2pt}\\
\rownamec{\makecell{Detection}}&
\includegraphics[height=\utilheightc]{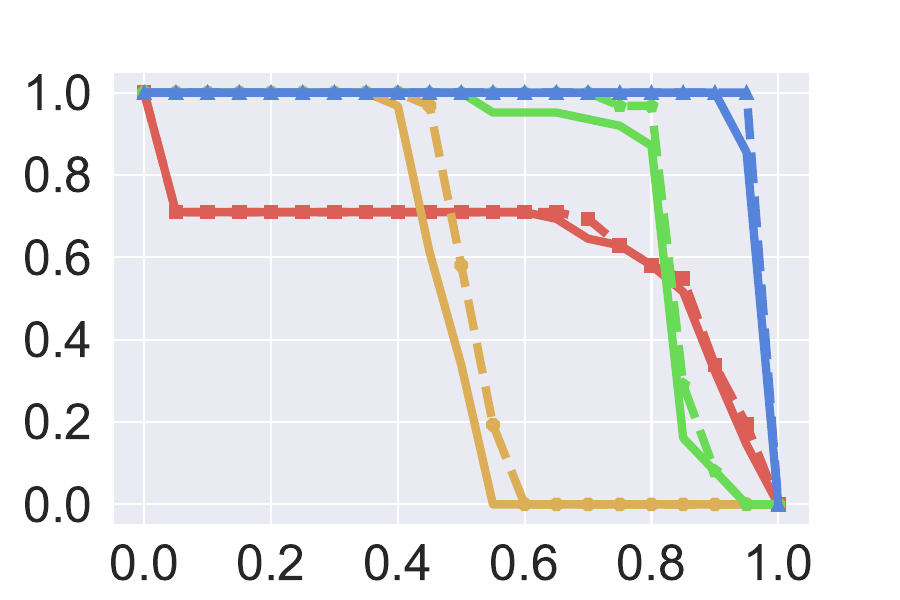}&
\includegraphics[height=\utilheightc]{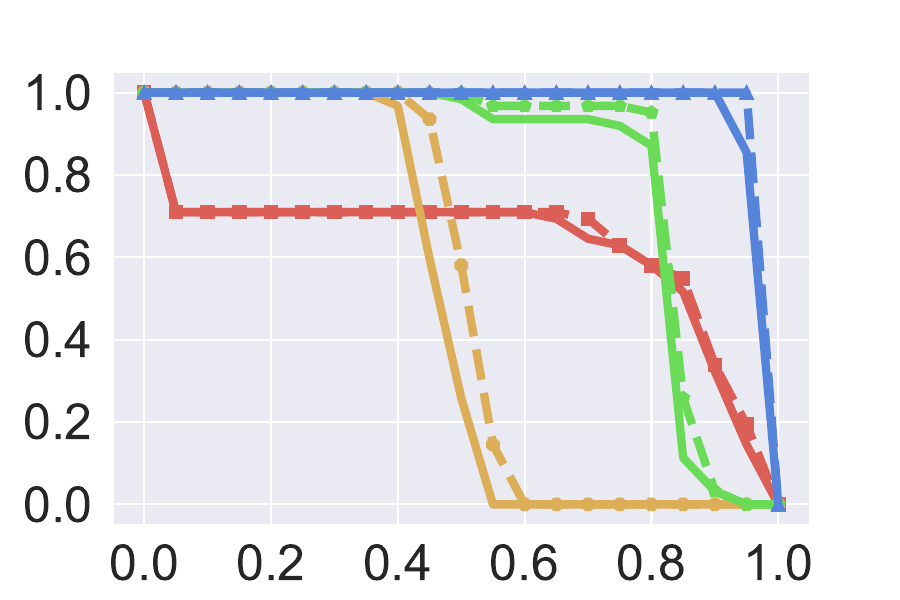}&
\includegraphics[height=\utilheightc]{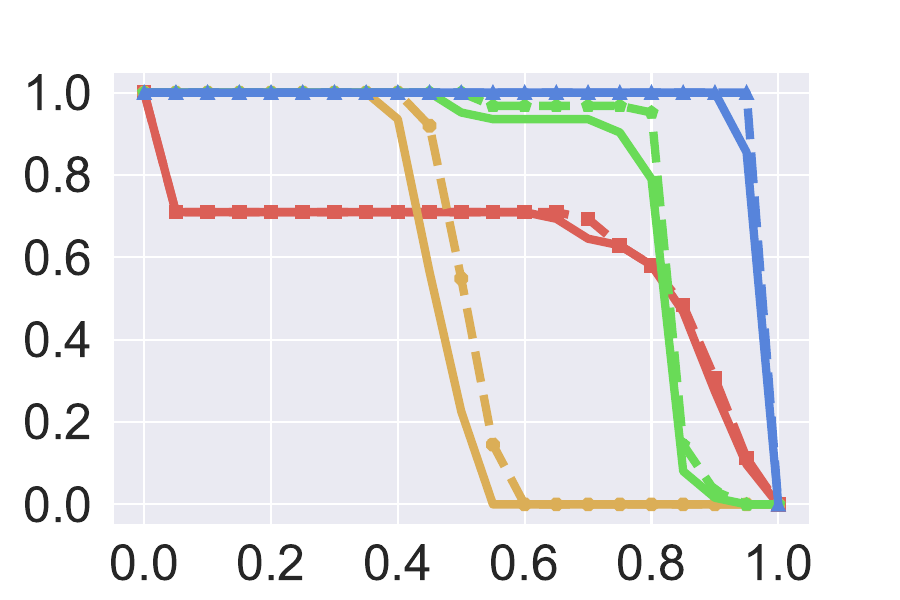}&
\includegraphics[height=\utilheightc]{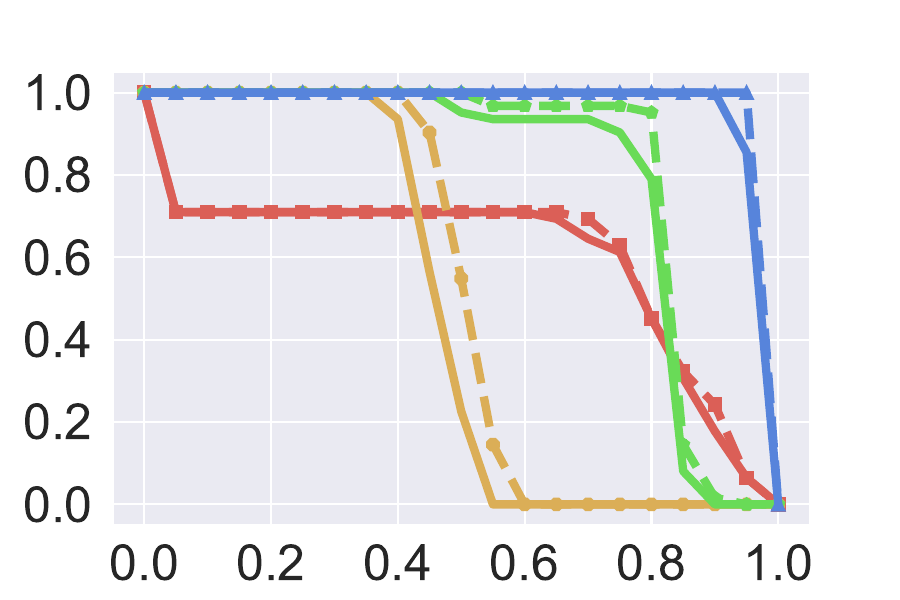}&
\includegraphics[height=\utilheightc]{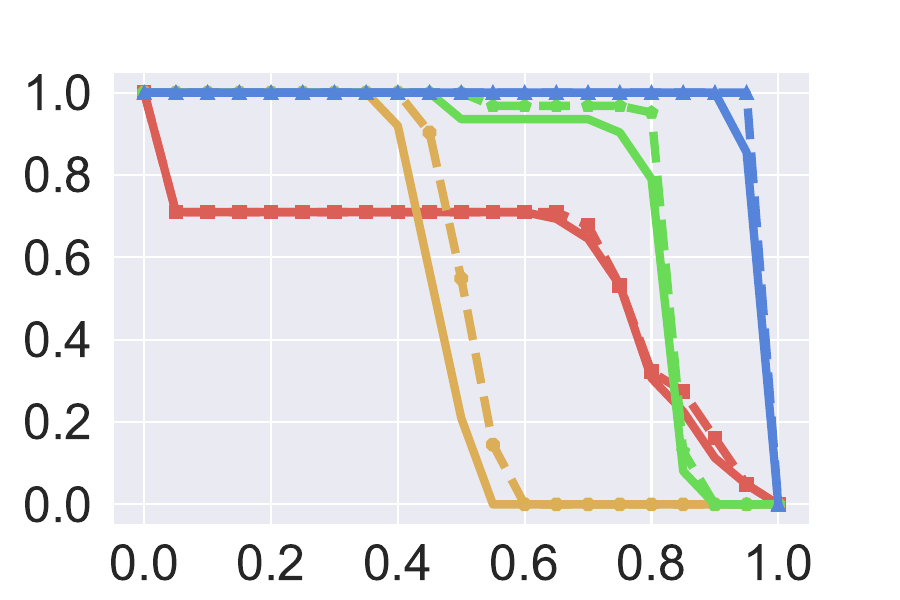}\\[-1.2ex]
        & \makecell{\tiny{$\text{TH}_\text{conf}$}}
        & \makecell{\tiny{$\text{TH}_\text{conf}$}}
        & \makecell{\tiny{$\text{TH}_\text{conf}$}}
        & \makecell{\tiny{$\text{TH}_\text{conf}$}}
        & \makecell{\tiny{$\text{TH}_\text{conf}$}}\\
\rownamec{\makecell{IoU}}&
\includegraphics[height=\utilheightc]{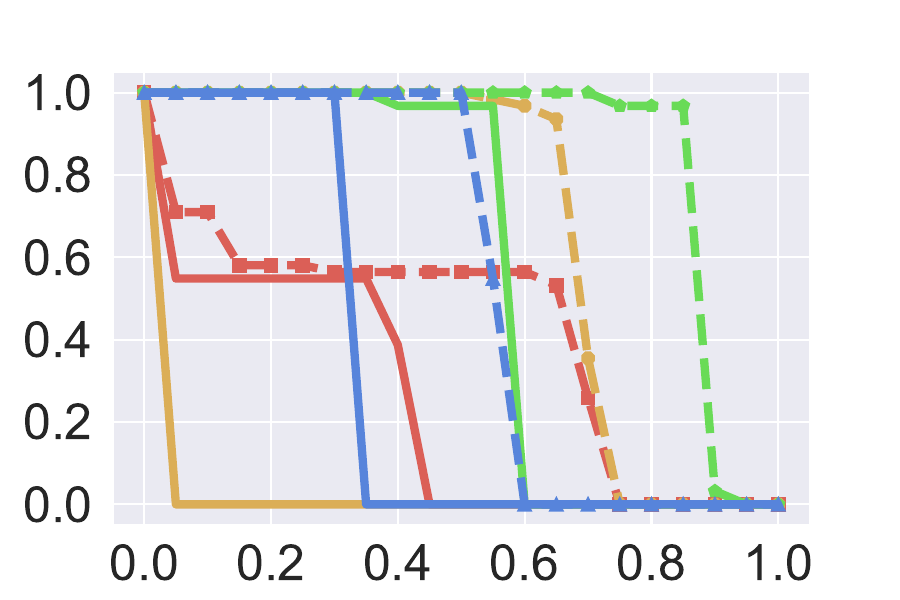}&
\includegraphics[height=\utilheightc]{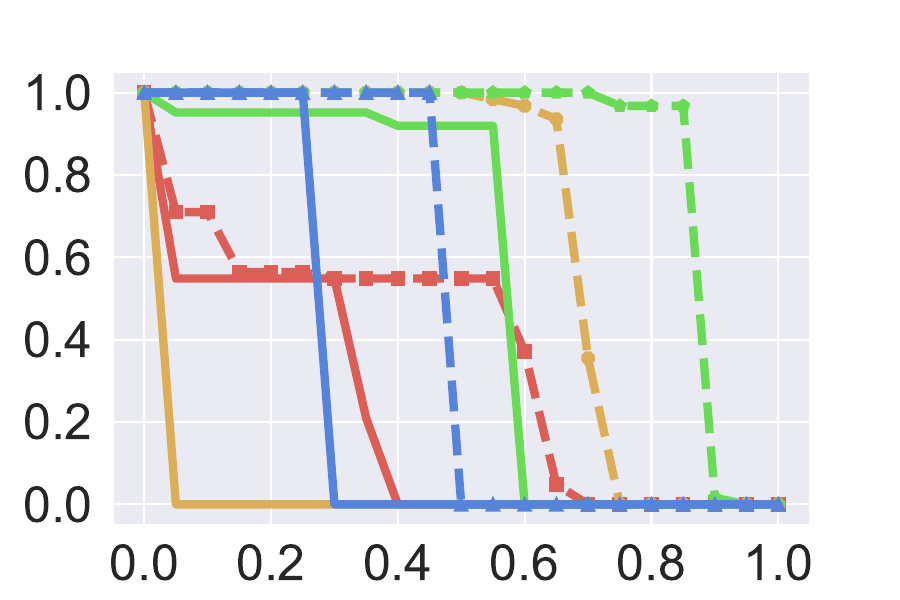}&
\includegraphics[height=\utilheightc]{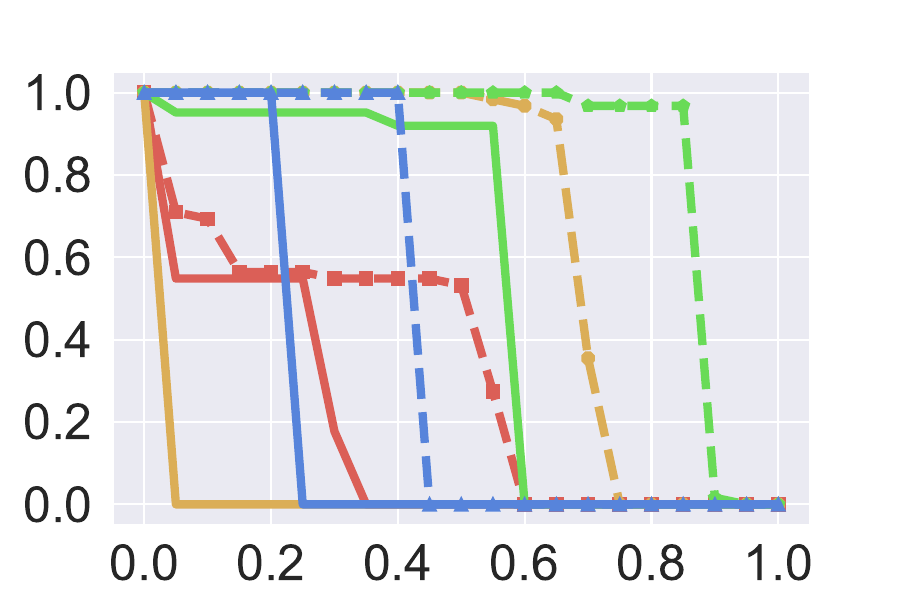}&
\includegraphics[height=\utilheightc]{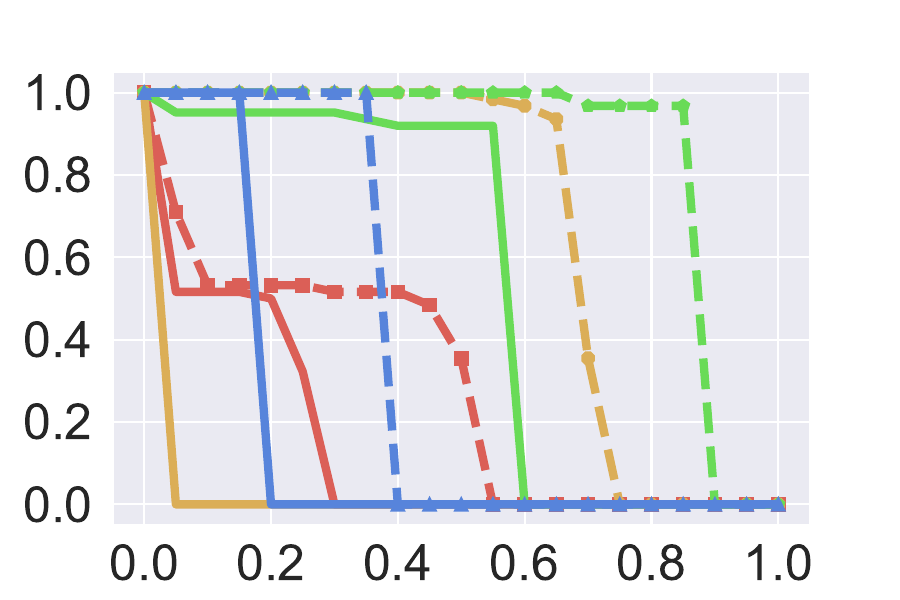}&
\includegraphics[height=\utilheightc]{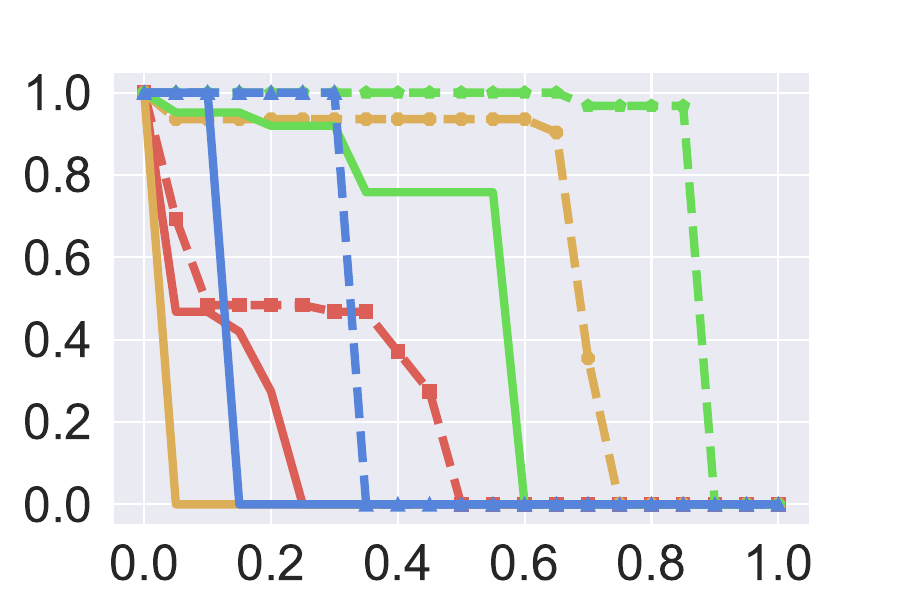}\\[-1.2ex]
        & \makecell{\tiny{$\text{TH}_\text{IoU}$}}
        & \makecell{\tiny{$\text{TH}_\text{IoU}$}}
        & \makecell{\tiny{$\text{TH}_\text{IoU}$}}
        & \makecell{\tiny{$\text{TH}_\text{IoU}$}}
        & \makecell{\tiny{$\text{TH}_\text{IoU}$}}
\end{tabular}
}
\end{subtable}

}

\caption{Robustness certification for rotation transformation (smoothing $\sigma=0.5$), including detection rate bound and IoU bound. Solid lines represent the certified bounds, and dashed lines show the empirical performance under PGD. $x$-axis is the threshold for confidence score ($\text{TH}_\text{conf}$) and IoU score ($\text{TH}_\text{IoU}$), and $y$-axis is the ratio of detection whose confidence / IoU score is larger than the confidence / IoU threshold. 
}%
\label{fig:rotation-bounds-th-0.5}
\end{figure*}
}

\begin{table*}[ht]
    \fontsize{5.5}{5.5}\selectfont
    \centering
    \caption{Overview of rotation transformation experiment results (smoothing $\sigma=0.5$). Each row represents the corresponding model and attack radius. ``Benign'', ``Adv (Vanilla)'', ``Adv (Smoothed)'', and ``Certification'' stands for benign performance, vanilla models' performance under attacks, smoothed models' performance under attacks, and certified lower bound of smoothed model performance under attacks. Each column represents the results under different thresholds.}
    \begin{subtable}[]{\linewidth}\centering
    \caption{Detection rate under rotation transformation}
    \resizebox{\linewidth}{!}{
    \begin{tabular}{c|c|ccc|ccc|ccc|ccc}
        \hline
        \multirow{2}{*}{Model} & \multirow{2}{*}{Attack Radius} & \multicolumn{3}{c|}{Benign} & \multicolumn{3}{c|}{Adv (Vanilla)} & \multicolumn{3}{c|}{Adv (Smoothed)} & \multicolumn{3}{c}{Certification}\\
        & & \textbf{Det@20} & \textbf{Det@50} & \textbf{Det@80} & \textbf{Det@20} & \textbf{Det@50} & \textbf{Det@80} & \textbf{Det@20} & \textbf{Det@50} & \textbf{Det@80} & \textbf{Det@20} & \textbf{Det@50} & \textbf{Det@80} \\\hline\hline
        \multirowcell{5}{MonoCon\\\cite{liu2022learning}} & $|r|\leq10^\circ$ & \multirow{5}{*}{100.00\%} & \multirow{5}{*}{100.00\%} & \multirow{5}{*}{100.00\%} & 70.97\% & 70.97\% & 58.06\% & 70.97\% & 70.97\% & 58.06\% & 70.97\% & 70.97\% & 58.06\% \\
                                                        & $|r|\leq15^\circ$ & & & & 70.97\% & 70.97\% & 58.06\% & 70.97\% & 70.97\% & 58.06\% & 70.97\% & 70.97\% & 58.06\% \\
                                                        & $|r|\leq20^\circ$ & & & & 70.97\% & 70.97\% & 58.06\% & 70.97\% & 70.97\% & 58.06\% & 70.97\% & 70.97\% & 58.06\% \\
                                                        & $|r|\leq25^\circ$ & & & & 70.97\% & 70.97\% & 45.16\% & 70.97\% & 70.97\% & 45.16\% & 70.97\% & 70.97\% & 45.16\% \\
                                                        & $|r|\leq30^\circ$ & & & & 70.97\% & 70.97\% & 32.26\% & 70.97\% & 70.97\% & 32.26\% & 70.97\% & 70.97\% & 30.65\% \\\hline
        \multirowcell{5}{SECOND\\\cite{yan2018second}} & $|r|\leq10^\circ$ & \multirow{5}{*}{100.00\%} & \multirow{5}{*}{100.00\%} & \multirow{5}{*}{100.00\%} & 100.00\% & 100.00\% & 0.00\% & 100.00\% & 58.06\% & 0.00\% & 100.00\% & 33.87\% & 0.00\% \\
                                                     & $|r|\leq15^\circ$ & & & & 100.00\% & 100.00\% & 0.00\% & 100.00\% & 58.06\% & 0.00\% & 100.00\% & 25.81\% & 0.00\% \\
                                                     & $|r|\leq20^\circ$ & & & & 100.00\% & 100.00\% & 0.00\% & 100.00\% & 54.84\% & 0.00\% & 100.00\% & 22.58\% & 0.00\% \\
                                                     & $|r|\leq25^\circ$ & & & & 100.00\% & 12.90\% & 0.00\% & 100.00\% & 54.84\% & 0.00\% & 100.00\% & 22.58\% & 0.00\% \\
                                                     & $|r|\leq30^\circ$ & & & & 67.74\% & 3.23\% & 0.00\% & 100.00\% & 54.84\% & 0.00\% & 100.00\% & 20.97\% & 0.00\% \\\hline
        \multirowcell{5}{CLOCs\\\cite{pang2020clocs}} & $|r|\leq10^\circ$ & \multirow{5}{*}{100.00\%} & \multirow{5}{*}{100.00\%} & \multirow{5}{*}{100.00\%} & 100.00\% & 100.00\% & 100.00\% & 100.00\% & 100.00\% & 96.77\% & 100.00\% & 100.00\% & 87.10\% \\
                                                    & $|r|\leq15^\circ$ & & & & 100.00\% & 100.00\% & 100.00\% & 100.00\% & 100.00\% & 95.16\% & 100.00\% & 98.39\% & 87.10\% \\
                                                    & $|r|\leq20^\circ$ & & & & 100.00\% & 100.00\% & 100.00\% & 100.00\% & 100.00\% & 95.16\% & 100.00\% & 95.16\% & 79.03\% \\
                                                    & $|r|\leq25^\circ$ & & & & 100.00\% & 91.94\% & 20.97\% & 100.00\% & 100.00\% & 95.16\% & 100.00\% & 95.16\% & 79.03\% \\
                                                    & $|r|\leq30^\circ$ & & & & 100.00\% & 74.19\% & 3.23\% & 100.00\% & 100.00\% & 95.16\% & 100.00\% & 93.55\% & 79.03\% \\\hline
        \multirowcell{5}{FocalsConv\\\cite{chen2022focal}} & $|r|\leq10^\circ$ & \multirow{5}{*}{100.00\%} & \multirow{5}{*}{100.00\%} & \multirow{5}{*}{100.00\%} & 100.00\% & 100.00\% & 100.00\% & 100.00\% & 100.00\% & 100.00\% & 100.00\% & 100.00\%  & 100.00\% \\
                                                         & $|r|\leq15^\circ$ & & & & 100.00\% & 100.00\% & 100.00\% & 100.00\% & 100.00\% & 100.00\% & 100.00\% & 100.00\%  & 100.00\% \\
                                                         & $|r|\leq20^\circ$ & & & & 100.00\% & 100.00\% & 100.00\% & 100.00\% & 100.00\% & 100.00\% & 100.00\% & 100.00\%  & 100.00\% \\
                                                         & $|r|\leq25^\circ$ & & & & 100.00\% & 100.00\% & 100.00\% & 100.00\% & 100.00\% & 100.00\% & 100.00\% & 100.00\%  & 100.00\% \\
                                                         & $|r|\leq30^\circ$ & & & & 100.00\% & 100.00\% & 98.39\% & 100.00\% & 100.00\% & 100.00\% & 100.00\% & 100.00\% & 100.00\% \\\hline
        \end{tabular}
        }
    \label{appendix-tab:rotation overview 2(a)}
    \end{subtable}
    \begin{subtable}[]{\linewidth}\centering
    \caption{IoU with ground truth under rotation transformation}
    \resizebox{\linewidth}{!}{
    \begin{tabular}{c|c|ccc|ccc|ccc|ccc}
        \hline
        \multirow{2}{*}{Model} & \multirow{2}{*}{Attack Radius} & \multicolumn{3}{c|}{Benign} & \multicolumn{3}{c|}{Adv (Vanilla)} & \multicolumn{3}{c|}{Adv (Smoothed)} & \multicolumn{3}{c}{Certification}\\
        & & \textbf{AP@30} & \textbf{AP@50} & \textbf{AP@80} & \textbf{AP@30} & \textbf{AP@50} & \textbf{AP@80} & \textbf{AP@30} & \textbf{AP@50} & \textbf{AP@80} & \textbf{AP@30} & \textbf{AP@50} & \textbf{AP@80} \\\hline\hline
        \multirowcell{5}{MonoCon\\\cite{liu2022learning}} & $|r|\leq10^\circ$ & \multirow{5}{*}{100.00\%} & \multirow{5}{*}{100.00\%} & \multirow{5}{*}{100.00\%} & 56.45\% & 56.45\% & 0.00\% & 56.45\% & 56.45\% & 0.00\% & 54.84\% & 0.00\% & 0.00\% \\
                                                        & $|r|\leq15^\circ$ & & & & 54.84\% & 54.84\% & 0.00\% & 54.84\% & 54.84\% & 0.00\% & 54.84\% & 0.00\% & 0.00\% \\
                                                        & $|r|\leq20^\circ$ & & & & 54.84\% & 53.23\% & 0.00\% & 54.84\% & 53.23\% & 0.00\% & 17.74\% & 0.00\% & 0.00\% \\
                                                        & $|r|\leq25^\circ$ & & & & 51.61\% & 35.48\% & 0.00\% & 51.61\% & 35.48\% & 0.00\% & 0.00\% & 0.00\% & 0.00\% \\
                                                        & $|r|\leq30^\circ$ & & & & 46.77\% & 0.00\% & 0.00\% & 46.77\% & 0.00\% & 0.00\% & 0.00\% & 0.00\% & 0.00\% \\\hline
        \multirowcell{5}{SECOND\\\cite{yan2018second}} & $|r|\leq10^\circ$ & \multirow{5}{*}{100.00\%} & \multirow{5}{*}{100.00\%} & \multirow{5}{*}{100.00\%} & 96.77\% & 96.77\% & 0.00\% & 100.00\% & 100.00\% & 0.00\% & 0.00\% & 0.00\% & 0.00\% \\
                                                     & $|r|\leq15^\circ$ & & & & 96.77\% & 96.77\% & 0.00\% & 100.00\% & 100.00\% & 0.00\% & 0.00\% & 0.00\% & 0.00\% \\
                                                     & $|r|\leq20^\circ$ & & & & 96.77\% & 96.77\% & 0.00\% & 100.00\% & 100.00\% & 0.00\% & 0.00\% & 0.00\% & 0.00\% \\
                                                     & $|r|\leq25^\circ$ & & & & 83.87\% & 83.87\% & 0.00\% & 100.00\% & 100.00\% & 0.00\% & 0.00\% & 0.00\% & 0.00\% \\
                                                     & $|r|\leq30^\circ$ & & & & 51.61\% & 51.61\% & 0.00\% & 93.55\% & 93.55\% & 0.00\% & 0.00\% & 0.00\% & 0.00\% \\\hline
        \multirowcell{5}{CLOCs\\\cite{pang2020clocs}} & $|r|\leq10^\circ$ & \multirow{5}{*}{100.00\%} & \multirow{5}{*}{100.00\%} & \multirow{5}{*}{100.00\%} & 90.32\% & 90.32\% & 90.32\% & 100.00\% & 100.00\% & 96.77\% & 100.00\% & 96.77\% & 0.00\%\\
                                                    & $|r|\leq15^\circ$ & & & & 90.32\% & 90.32\% & 90.32\% & 100.00\% & 100.00\% & 96.77\% & 95.16\% & 91.93\% & 0.00\%\\
                                                    & $|r|\leq20^\circ$ & & & & 88.71\% & 88.71\% & 77.42\% & 100.00\% & 100.00\% & 96.77\% & 95.16\% & 91.93\% & 0.00\%\\
                                                    & $|r|\leq25^\circ$ & & & & 87.10\% & 87.10\% & 0.00\% & 100.00\% & 100.00\% & 96.77\% & 95.16\% & 91.93\% & 0.00\%\\
                                                    & $|r|\leq30^\circ$ & & & & 88.71\% & 88.71\% & 3.26\% & 98.39\% & 98.39\% & 67.74\% & 98.39\% & 53.23\% & 0.00\%\\\hline
        \multirowcell{5}{FocalsConv\\\cite{chen2022focal}} & $|r|\leq10^\circ$ & \multirow{5}{*}{100.00\%} & \multirow{5}{*}{100.00\%} & \multirow{5}{*}{100.00\%} & 96.77\% & 96.77\% & 0.00\% & 100.00\% & 100.00\% & 0.00\% & 100.00\% & 0.00\% & 0.00\% \\
                                                         & $|r|\leq15^\circ$ & & & & 96.77\% & 0.00\% & 0.00\% & 100.00\% & 0.00\% & 0.00\% & 0.00\% & 0.00\% & 0.00\% \\
                                                         & $|r|\leq20^\circ$ & & & & 96.77\% & 0.00\% & 0.00\% & 100.00\% & 0.00\% & 0.00\% & 0.00\% & 0.00\% & 0.00\% \\
                                                         & $|r|\leq25^\circ$ & & & & 96.77\% & 0.00\% & 0.00\% & 100.00\% & 0.00\% & 0.00\% & 0.00\% & 0.00\% & 0.00\% \\
                                                         & $|r|\leq30^\circ$ & & & & 96.77\% & 0.00\% & 0.00\% & 100.00\% & 0.00\% & 0.00\% & 0.00\% & 0.00\% & 0.00\% \\\hline
    \end{tabular}
    }
    \label{appendix-tab:rotation overview 2(b)}
    \end{subtable}
    \label{appendix-tab:rotation overview 2}
\end{table*}

\subsection{Examples}
\label{subsubsec: examples}

\begin{figure}[ht]
    \centering
    \subcaptionbox{Rotation failure cases 1\label{fig:rotation failure 1}}{
        \includegraphics[width=.2\linewidth]{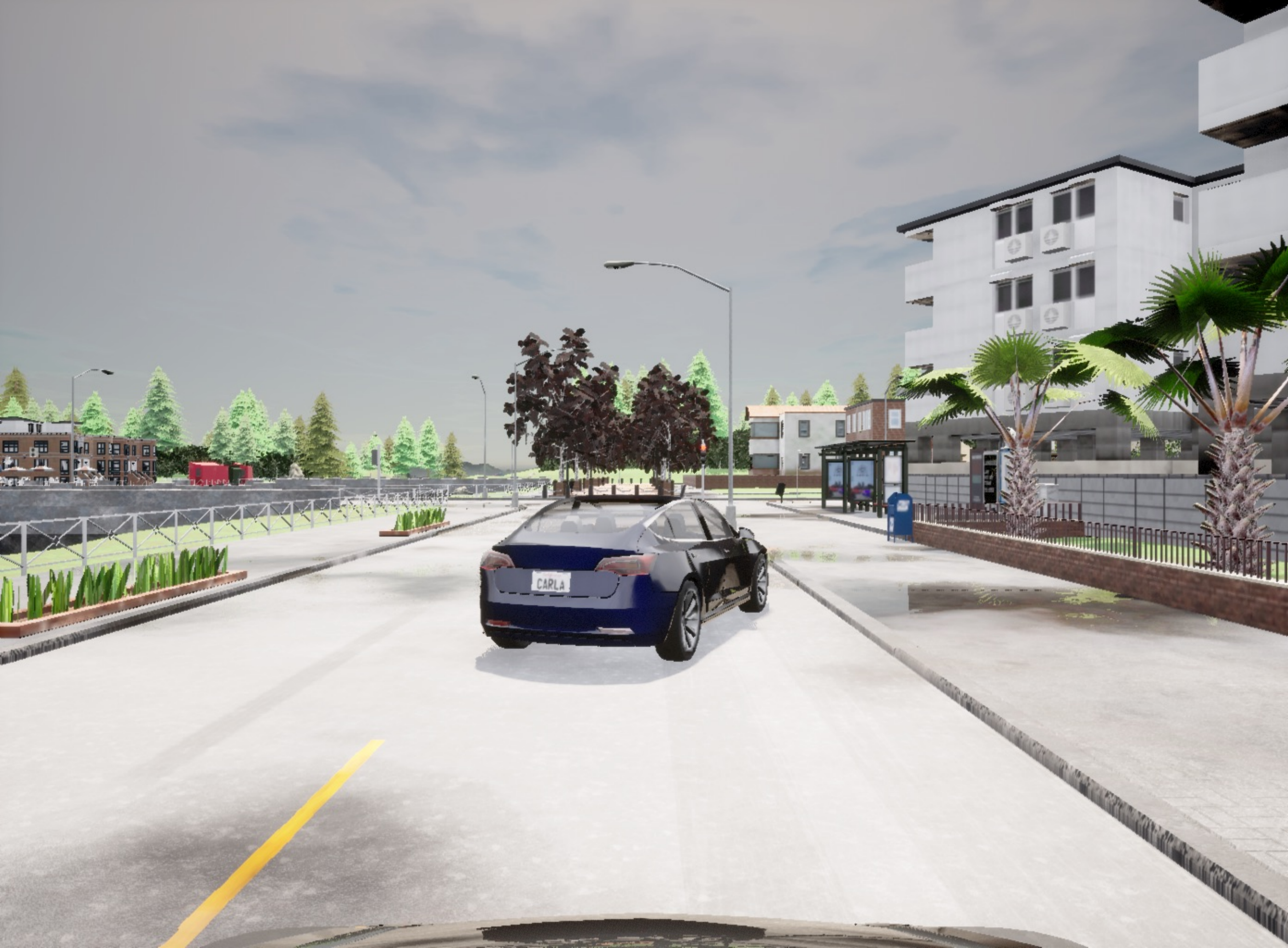}\quad
        \includegraphics[width=.2\linewidth]{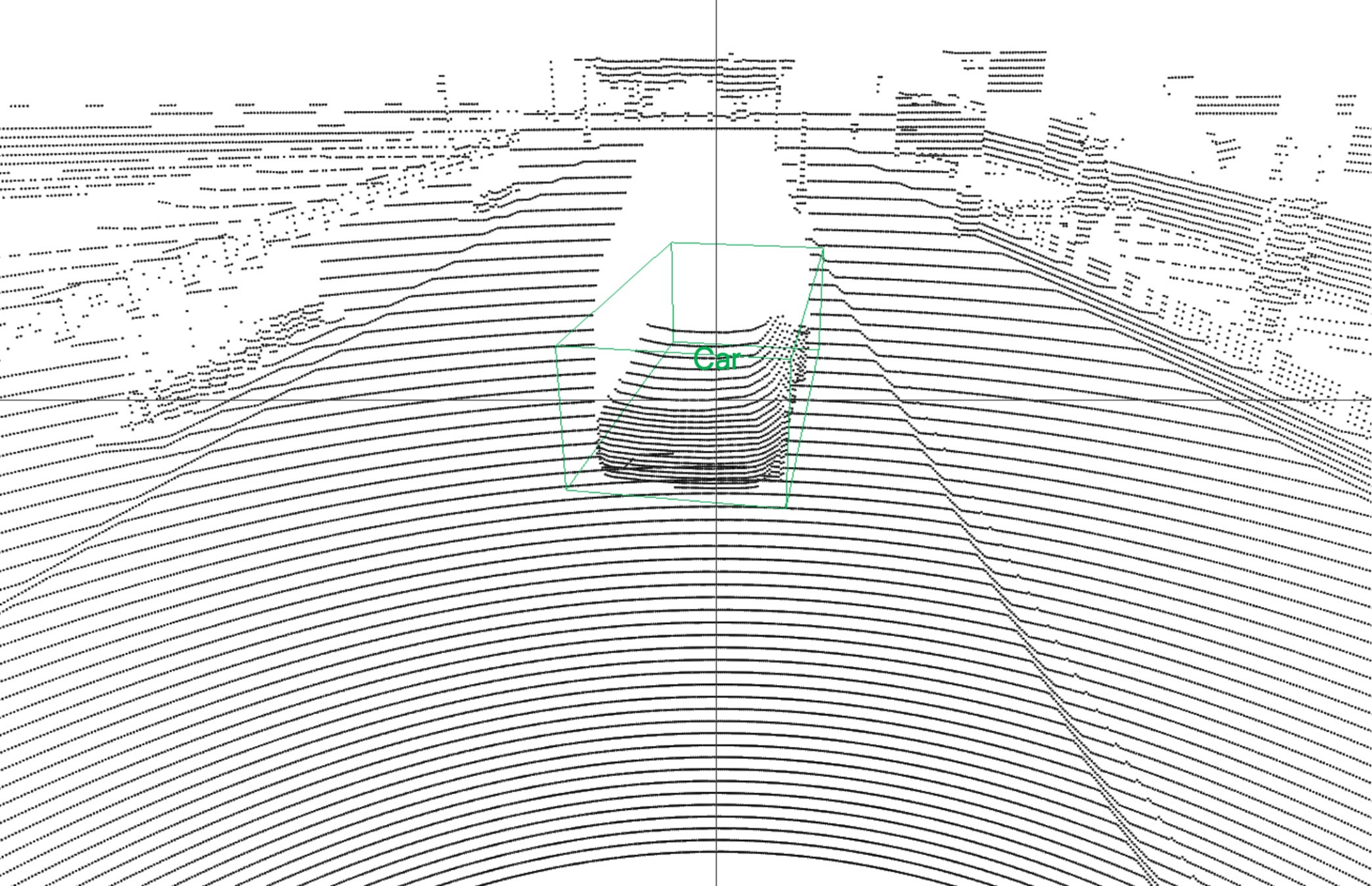}\quad
        \includegraphics[width=.2\linewidth]{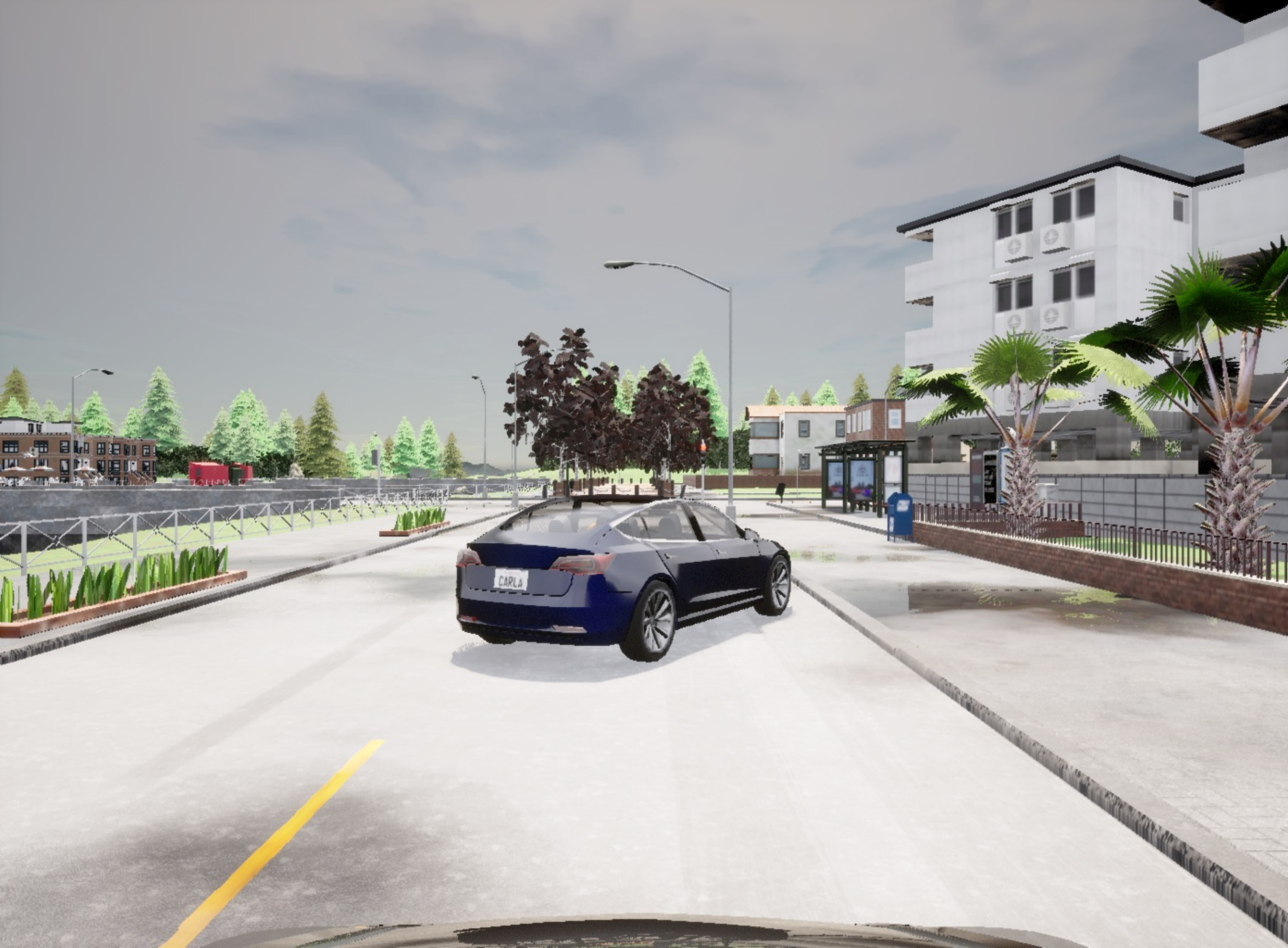}\quad
        \includegraphics[width=.2\linewidth]{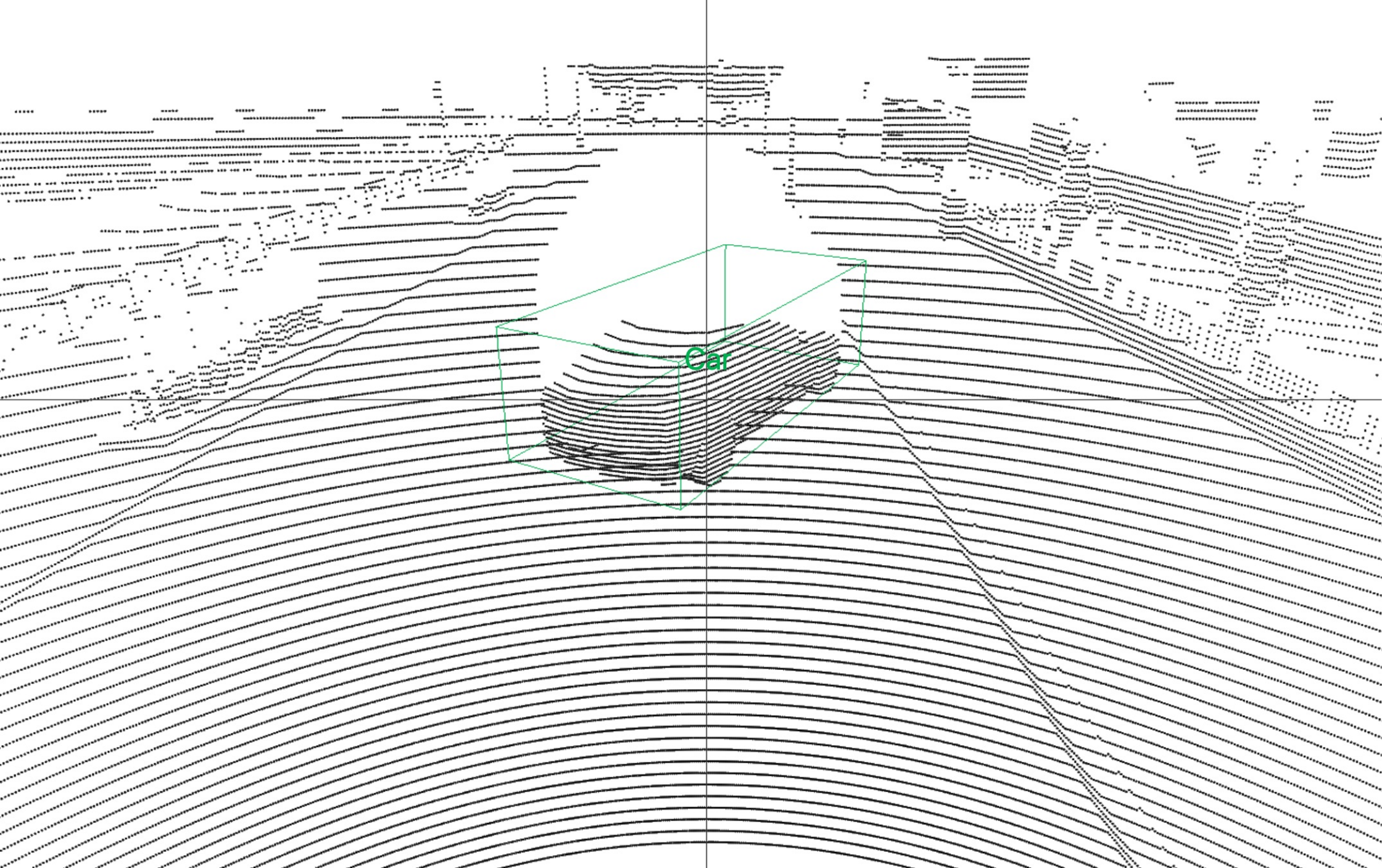}
    }
    \subcaptionbox{Rotation failure cases 2\label{fig:rotation failure 2}}{
        \includegraphics[width=.2\linewidth]{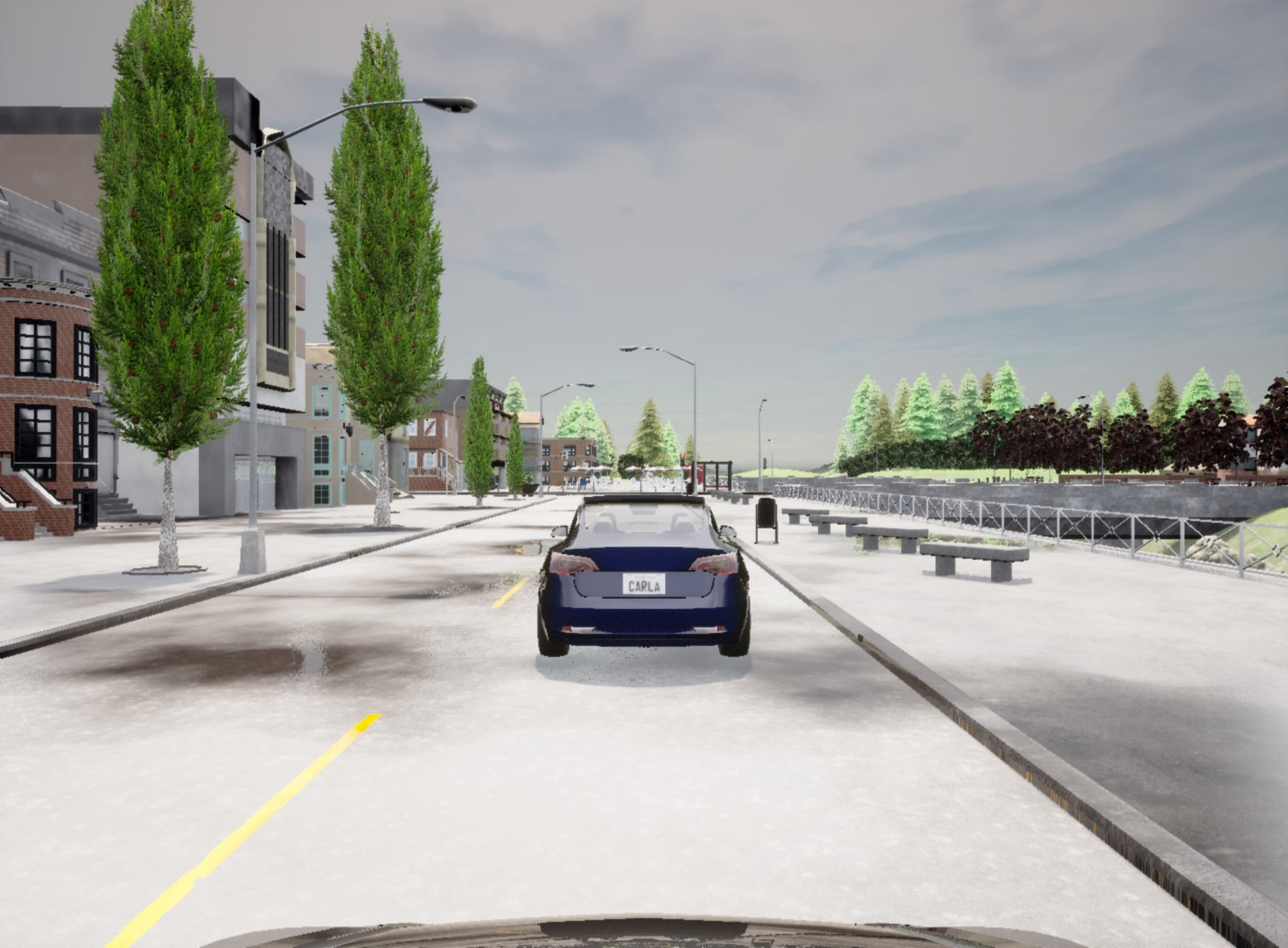}\quad
        \includegraphics[width=.2\linewidth]{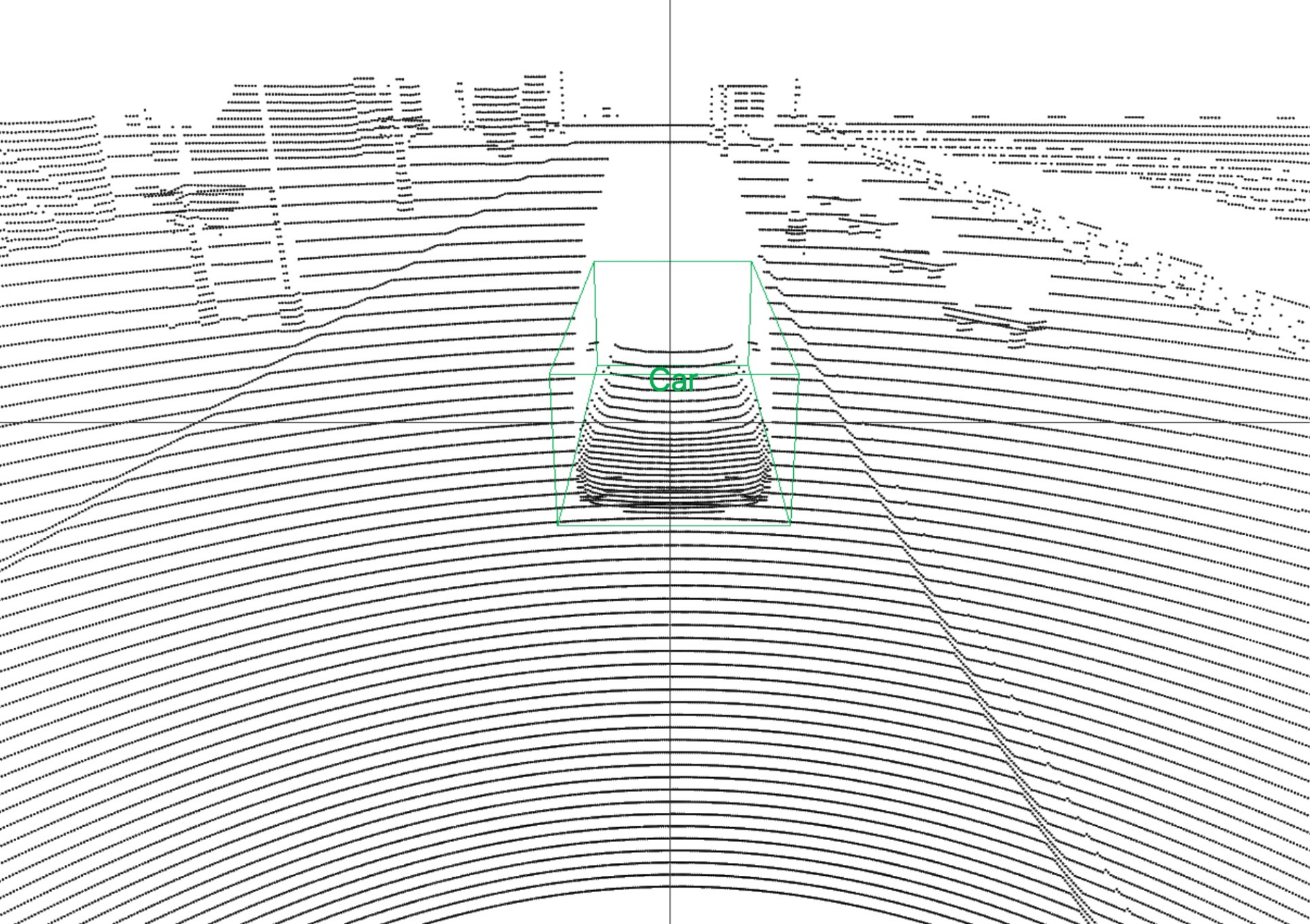}\quad
        \includegraphics[width=.2\linewidth]{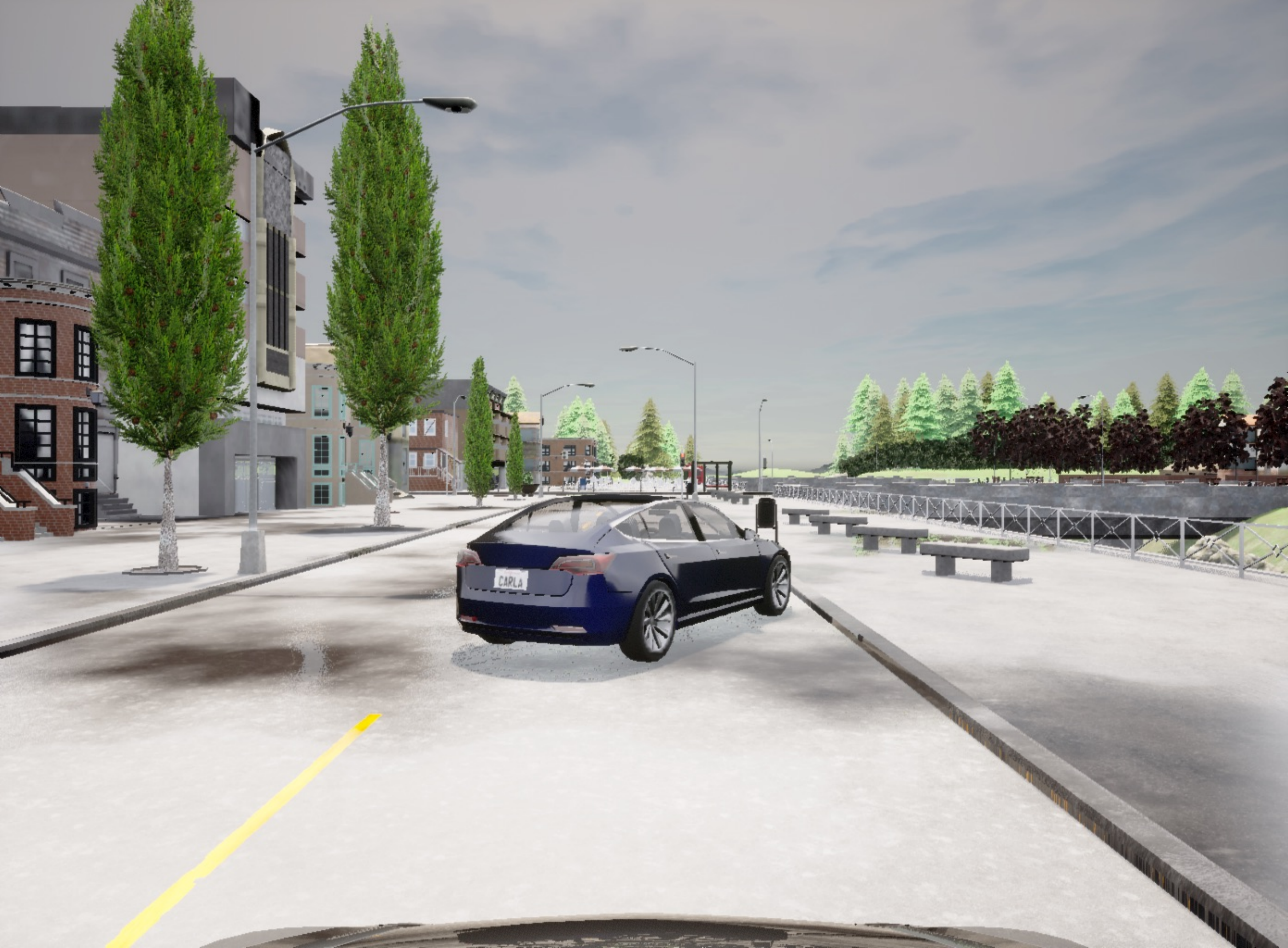}\quad
        \includegraphics[width=.2\linewidth]{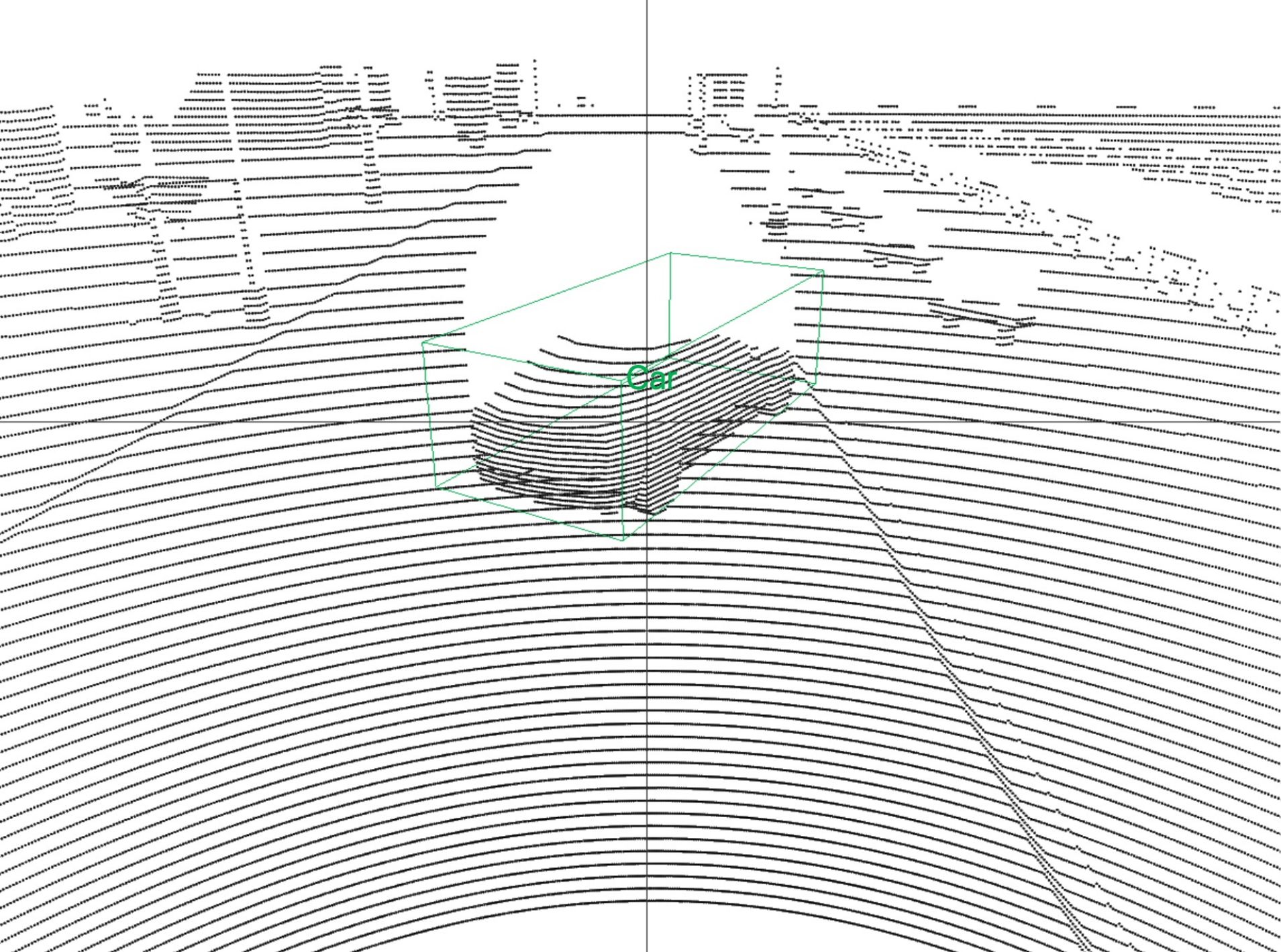}
    }
    \subcaptionbox{Shifting failure cases\label{fig:shift failure 1}}{
        \includegraphics[width=.2\linewidth]{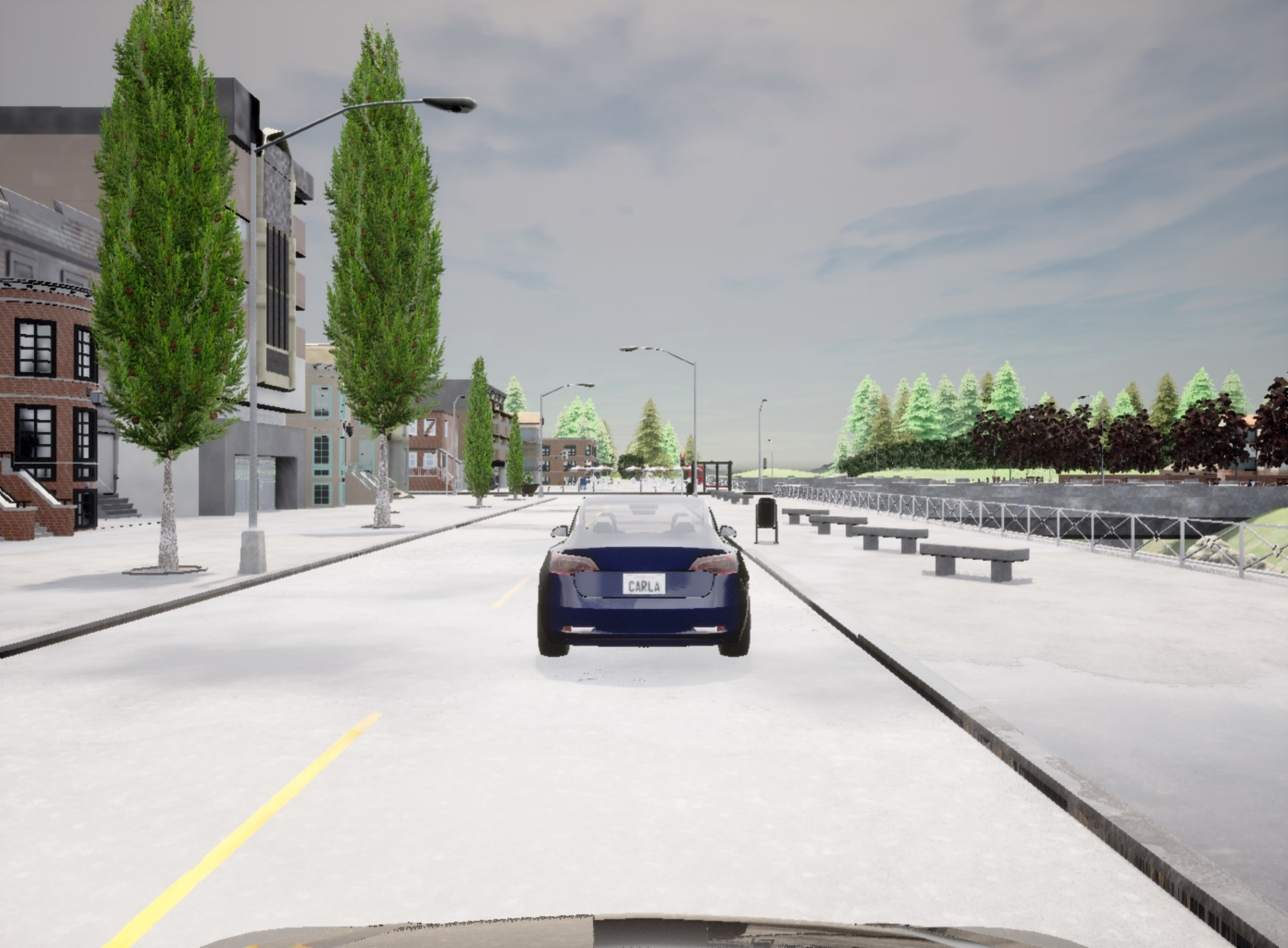}\quad
        \includegraphics[width=.2\linewidth]{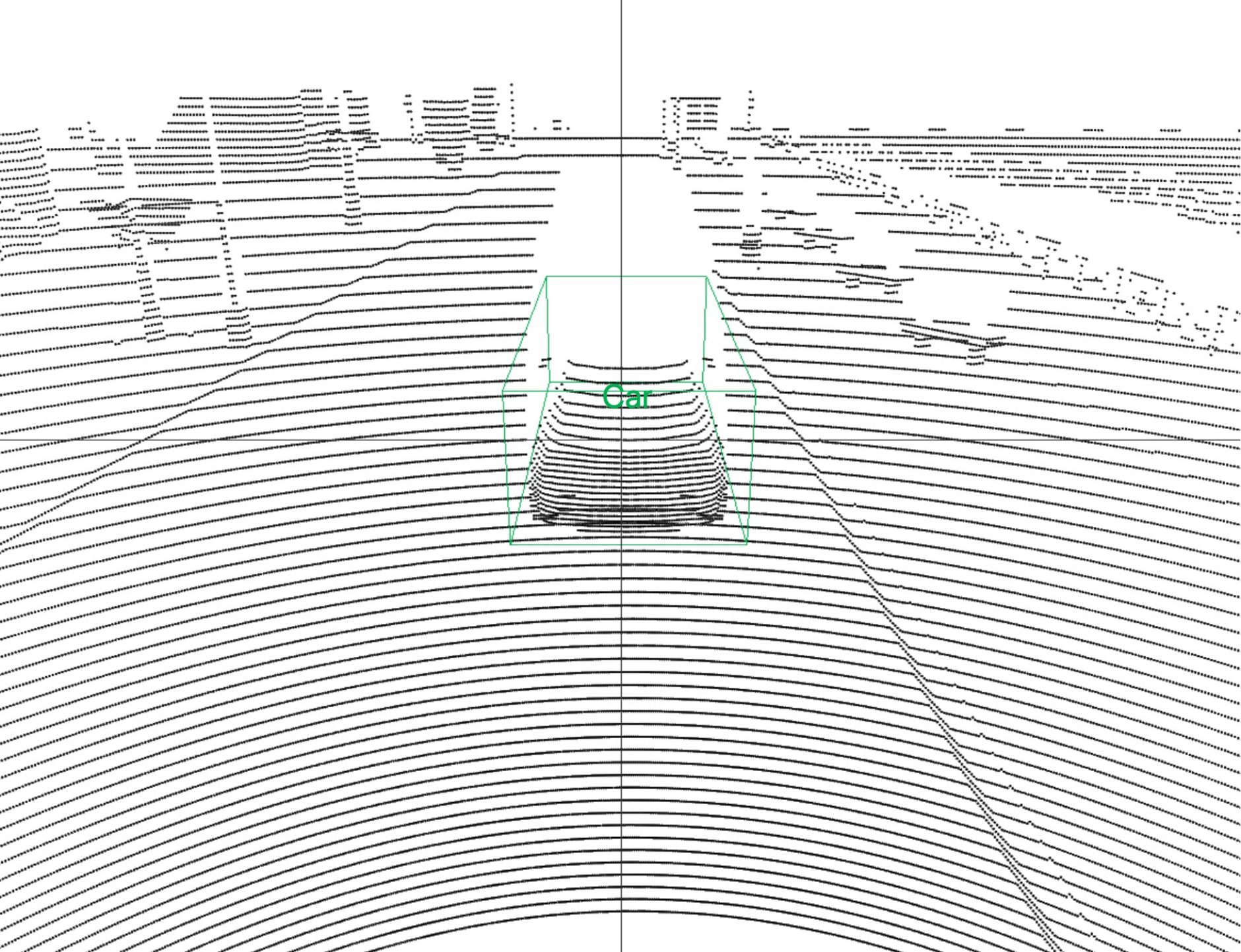}\quad
        \includegraphics[width=.2\linewidth]{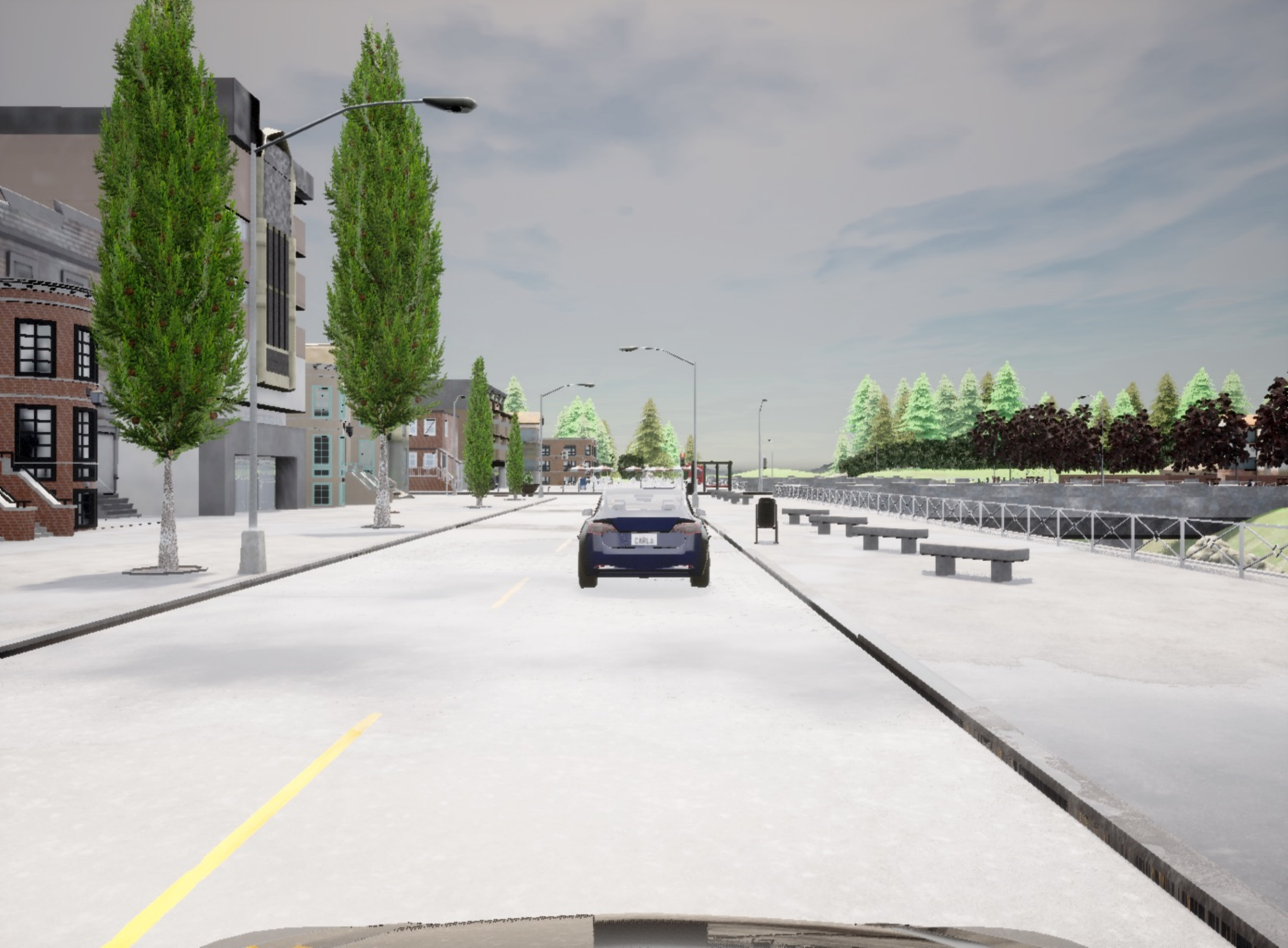}\quad
        \includegraphics[width=.2\linewidth]{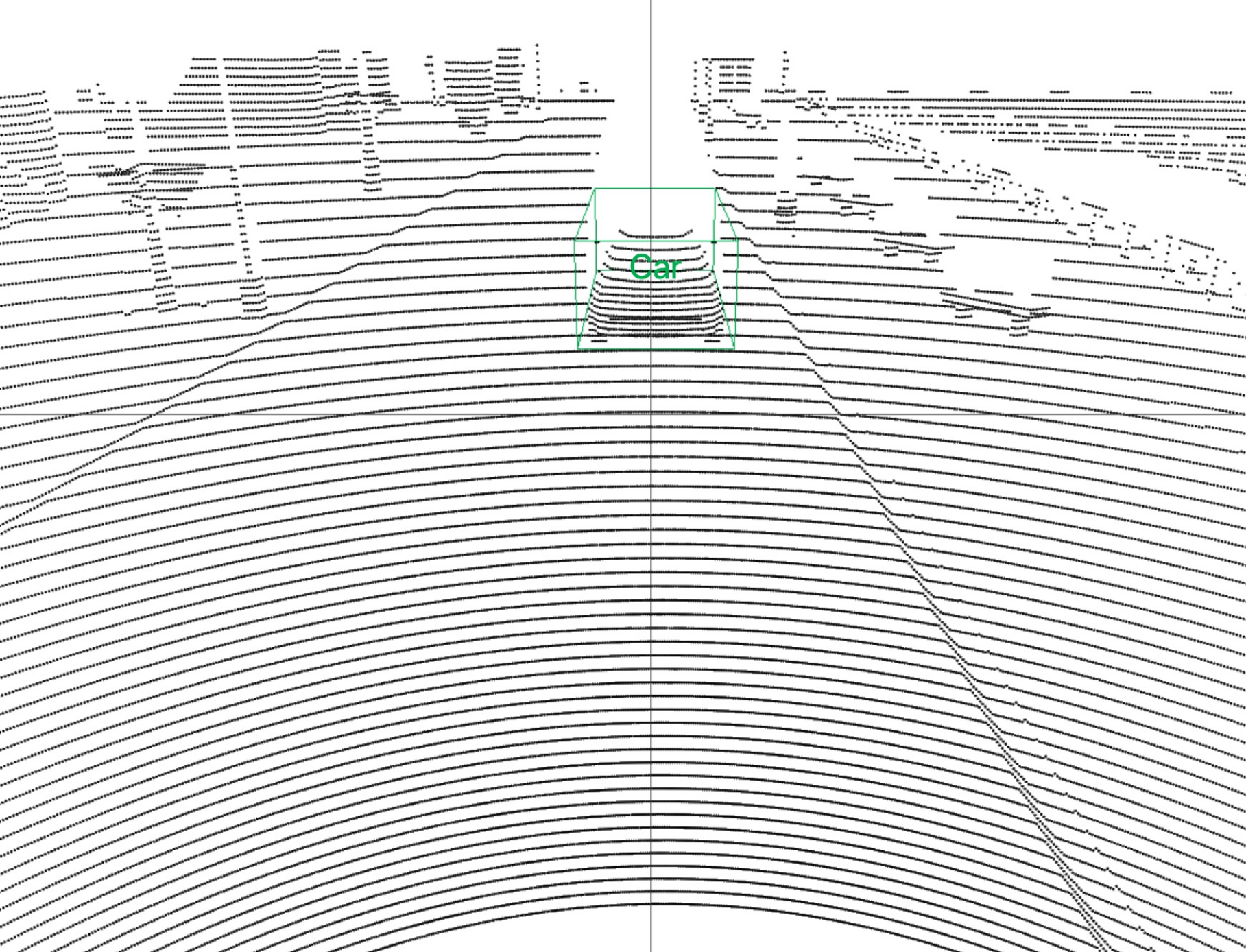}
    }
    \subcaptionbox{Shifting failure cases\label{fig:shift failure 2}}{
        \includegraphics[width=.2\linewidth]{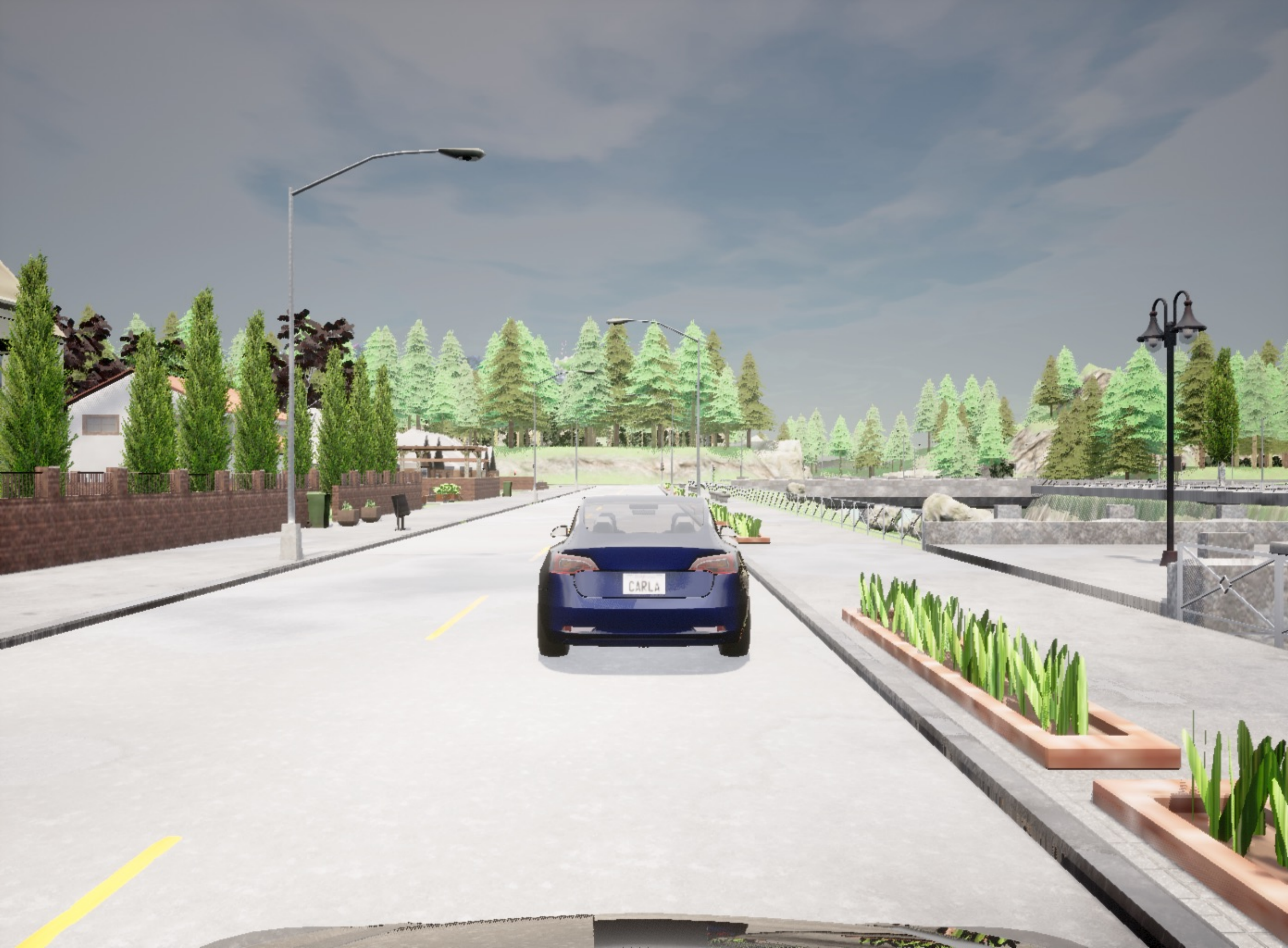}\quad
        \includegraphics[width=.2\linewidth]{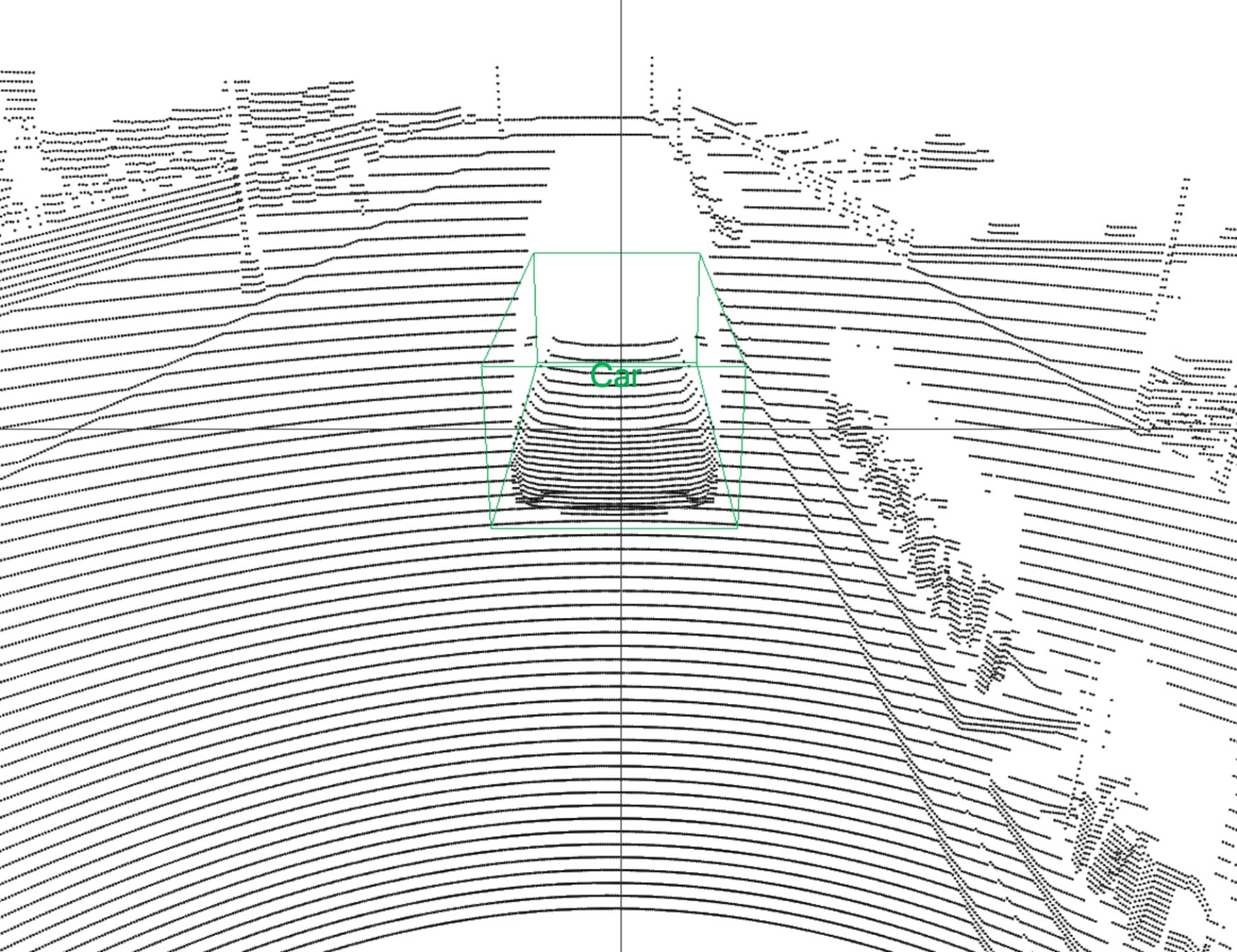}\quad
        \includegraphics[width=.2\linewidth]{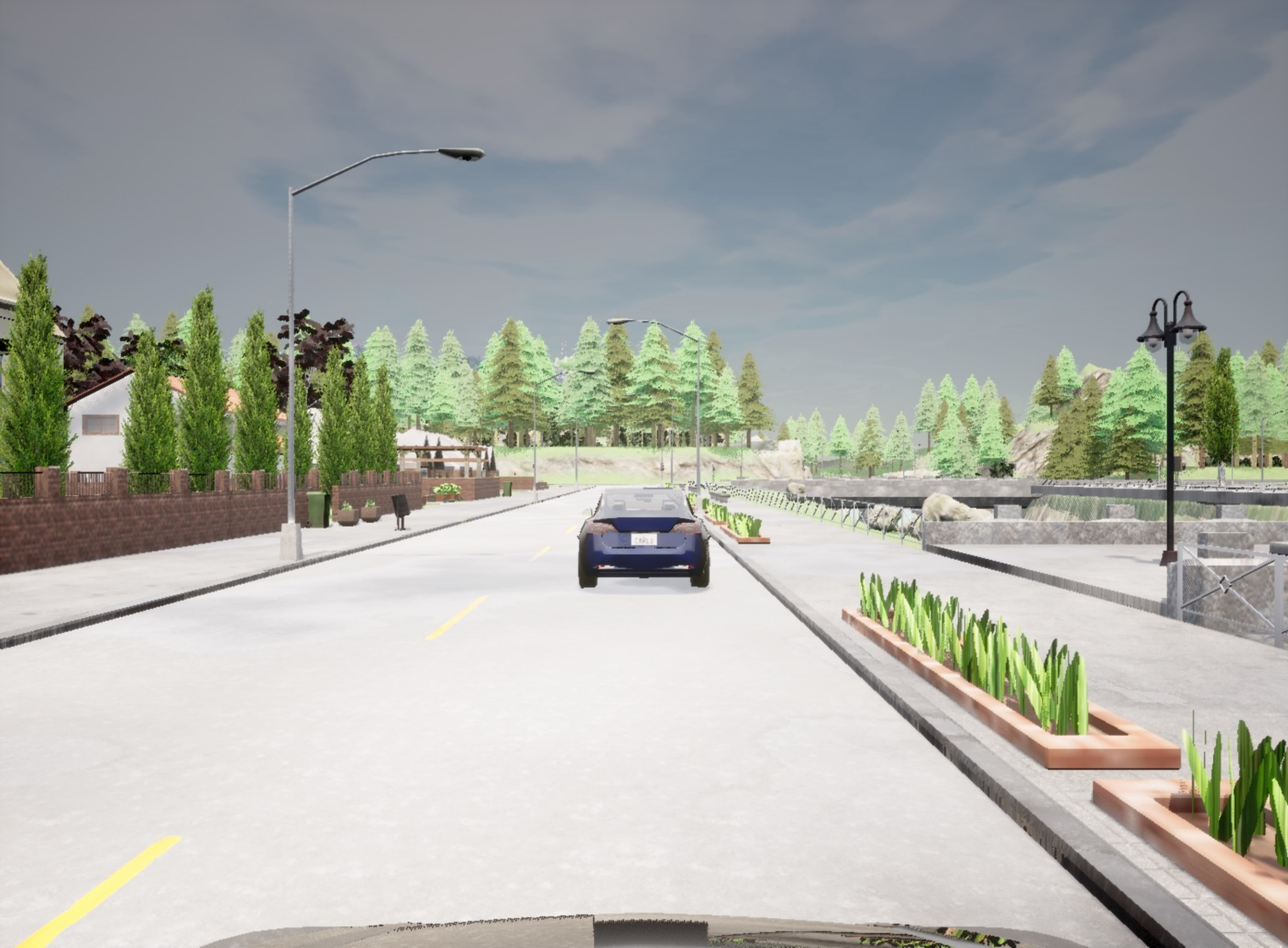}\quad
        \includegraphics[width=.2\linewidth]{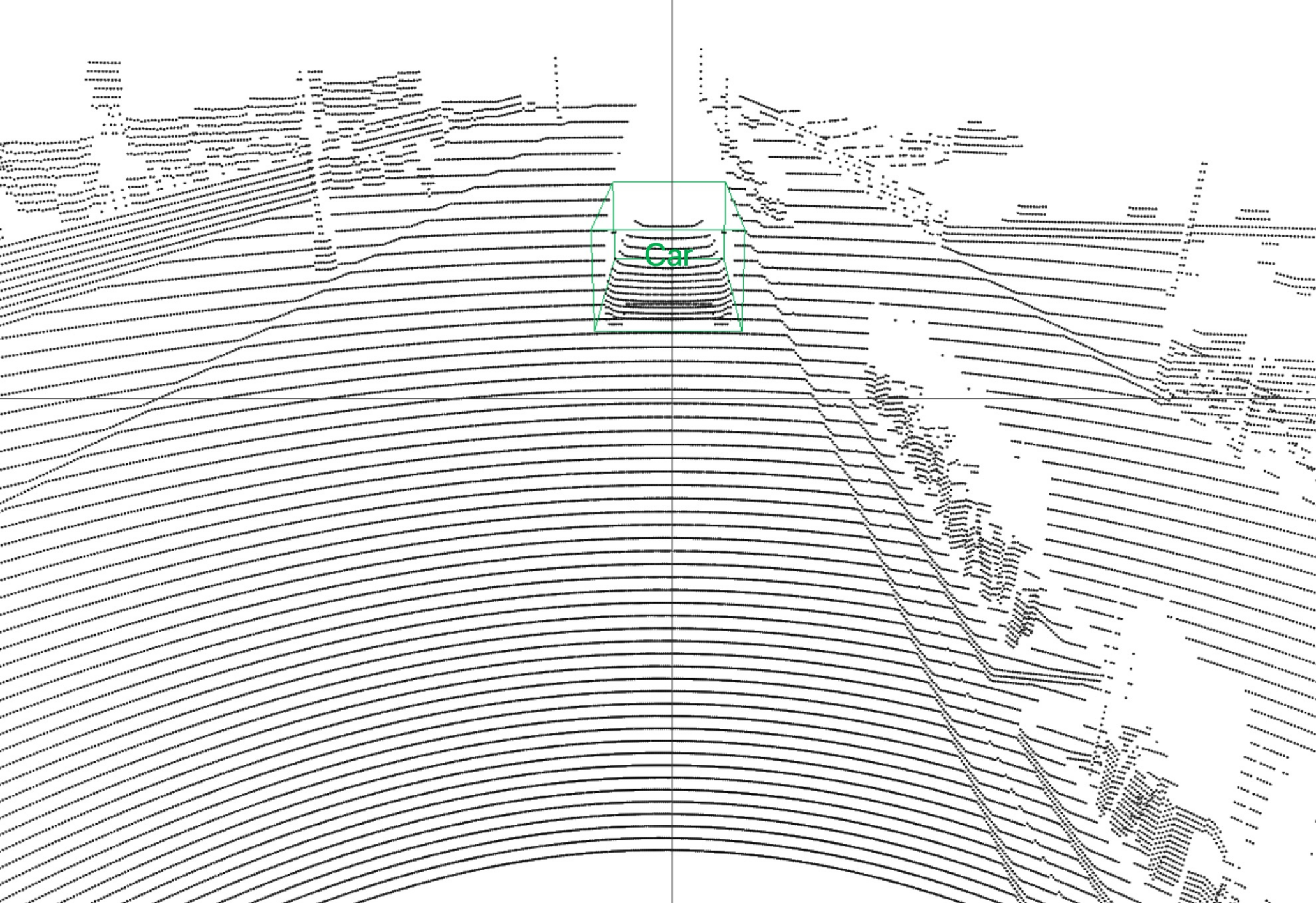}
    }
    \caption{Rotation and shifting failure cases.}
    \label{fig:transformation fail}
\end{figure}

In this subsection, we present some failure cases and possible reasons. 

As shown in \Cref{fig:rotation failure 1}, where MonoCon fails to detect the front vehicle when the rotation angle $r$ is somewhere larger than $20^\circ$ but is able to detect it when $r\leq20^\circ$, where CLOCs can detect cars with all rotation angles between $-30^\circ$ and $30^\circ$. The possible reason for this situation is that there are some objects (e.g. the puddle on the sidewalk and trees far away in this case) with similar color to the vehicle, which impacts the detection ability of camera-based detection modules. This problem can be mitigated by the LiDAR-based detection modules. 

\Cref{fig:shift failure 1} shows another failure case of camera-based models. Although there is no object with a similar color to the vehicle that we want to detect. The car farther is relatively small in the image, which is harder for camera-based models to detect. The application of point cloud data is very helpful in this case because of the perception ability of objects at long distances. 

\Cref{fig:rotation failure 2} and \Cref{fig:shift failure 2} shows two failure cases of SECOND, which is representative of LiDAR-based models where fusion models can detect relatively well. The possible reason for this case is that there are some objects very close to the vehicle (e.g.benches and grass in these cases), which affects the detection ability of point cloud modules, which can be mitigated by the combination with camera-based modules. 

\end{document}